\renewcommand\paragraph[1]{%
  \noindent\textbf{#1}
}
\def\eqref#1{equation~\ref{#1}}
\def\1{\bm{1}}
\DeclareMathAlphabet{\mathsfit}{\encodingdefault}{\sfdefault}{m}{sl}
\SetMathAlphabet{\mathsfit}{bold}{\encodingdefault}{\sfdefault}{bx}{n}
\newcommand{\E}{\mathbb{E}}
\newcommand{\Var}{\mathrm{Var}}
\newcommand{\Cov}{\mathrm{Cov}}
\newcommand{\Section}[1]{\section{\MakeUppercase{#1}}} 
\newtheorem{theorem}{Theorem}[section]
\newtheorem{lemma}[theorem]{Lemma}
\newtheorem{proposition}[theorem]{Proposition}
\newtheorem{definition}[theorem]{Definition}
\newtheorem{remark}[theorem]{Remark}
\newcommand{\etatensor}{\boldsymbol{\eta}}
\newcommand{\Ytensor}{\boldsymbol{Y}}
\newcommand{\ii}{\mathbf{i}}          
\newcommand{\jj}{\mathbf{j}}          
\newcommand{\Real}{\mathbb{R}}        
\newcommand{\ie}{i.e., }
\newcommand{\eg}{e.g., }
\newcommand{\defaulttensorsize}{10pt}
\newcommand{\tensorsize}{\defaulttensorsize}
\tikzstyle{tensor}=[draw, inner sep=0, outer sep=0, minimum size=\tensorsize]
\tikzstyle{notensor}=[inner sep=0, outer xsep=2pt, outer ysep=0, minimum size=\tensorsize]
\tikzstyle{atensor}=[tensor, circle]
\tikzstyle{ctensor}=[tensor, circle]
\tikzstyle{dtensor}=[tensor, diamond]
\tikzstyle{wtensor}=[tensor]
\tikzstyle{ltensor}=[tensor, rounded rectangle, rounded rectangle left arc=none]
\tikzstyle{rtensor}=[tensor, rounded rectangle, rounded rectangle right arc=none]
\tikzstyle{etensor}=[tensor, minimum height=(1cm/\defaulttensorsize*0.5*2+1)*\tensorsize]
\tikzstyle{widetensor}[2]=[tensor, minimum width=(1cm/\defaulttensorsize*0.75*(#1-1)+1)*\tensorsize]
\newcommand{\stripesize}{4pt}
\tikzstyle{striped}=[pattern=stripes, pattern color=lightgray]
\tikzstyle{tensornetwork}=[baseline=-0.25em, xscale=0.75, yscale=0.5,
\begin{document}

%

%
\runningauthor{Eliezer da Silva, Arto Klami, Diego Mesquita, Iñigo Urteaga}



\twocolumn[
\aistatstitle{On the Identifiability of Tensor Ranks via Prior Predictive Matching}
\aistatsauthor{Eliezer da Silva \And Arto Klami}
\aistatsaddress{
BCAM -- Basque Center for Applied Mathematics \\
University of Coimbra, CISUC/LASI, DEI 
\And 
Department of Computer Science
\\ University of Helsinki 
}

\aistatsauthor{Diego Mesquita \And Iñigo Urteaga}
\aistatsaddress{
School of Applied Mathematics \\ Getúlio Vargas Foundation \And
BCAM --- Basque Center for Applied Mathematics \\
IKERBASQUE --- Basque Foundation for Science
}
]

\begin{abstract}
Selecting the latent dimensions (ranks) in tensor factorization is a central challenge that often relies on heuristic methods.
This paper introduces a rigorous approach
to determine rank identifiability in probabilistic tensor models, based on prior predictive moment matching.
We transform a set of moment matching conditions
into a log-linear system of equations in terms of
marginal moments, prior hyperparameters, and ranks;
establishing an equivalence between rank identifiability and the solvability of such system.
We apply this framework to four foundational tensor-models,
demonstrating that the linear structure of the PARAFAC/CP model,
the chain structure of the Tensor Train model,
and the closed-loop structure of the Tensor Ring model yield solvable systems, making their ranks identifiable.
In contrast, we prove that the symmetric
topology of the Tucker model leads to an underdetermined system,
rendering the ranks unidentifiable by this method.
For the identifiable models, we derive explicit closed-form rank estimators based on the moments of observed data only.
We empirically validate these estimators and evaluate the robustness of the proposal.
\end{abstract}

\Section{Introduction}
\label{sec:intro}

Tensor factorizations have become indispensable tools for analyzing multi-way data across diverse contexts and fields such as neuroscience \citep{cong_eeg2015, DBLP:journals/bspc/MosayebiH20}, bioinformatics and genomics \citep{alter2003generalized}, chemometrics \citep{brochem2003}, computational social science \citep{DBLP:conf/kdd/ScheinPBW15},  network science \citep{contisciani2022inference}, recommender systems \citep{DBLP:conf/www/BhargavaPZL15, gopalan2015scalable}, supervised learning \citep{tensornet_supervised}, compression of large neural networks \citep{DBLP:conf/neurips/NovikovPOV15/tensorizing}, and  trace back to early work by \cite{hitchcock1927}. These methods decompose a high-dimensional tensor into a set of lower-dimensional factors, revealing latent structures and facilitating data interpretation and prediction \citep{DBLP:journals/jmlr/ChuG09, DBLP:journals/siamrev/KoldaB09, cichocki2009nonnegative}.

A fundamental challenge in the design and application of tensor models is the selection of their latent dimensions,
\ie the \textbf{ranks} ($r_p$), which dictate the model's complexity and its ability to capture the observed statistics of the data, without overfitting.
In contrast to simple matrix factorizations, most tensor models have multiple ranks, \ie $p\geq1$, depending on the specific form of the factorization.
The determination of the tensor ranks is, in general, a NP-Complete problem for tensors over $\mathbb{Q}$~\citep{DBLP:journals/jal/Hastad90}, NP-Hard for tensors over $\Real$~\citep{hillartensorhard2013} and hard to approximate in general \citep{DBLP:conf/approx/Swernofsky18}.

Bayesian approaches to tensor factorization~\citep{DBLP:journals/jmlr/ChuG09, DBLP:journals/pami/ZhaoZC15, DBLP:conf/icml/RaiWGCDC14} provide a principled alternative to exhaustive search, enabling uncertainty quantification, prior incorporation, and even automatic rank inference via non-parametric methods~\citep{DBLP:conf/aaai/PorteousBW08} or automatic relevance determination~\citep{DBLP:journals/pami/ZhaoZC15}.

However, parametric models often demand user-specified ranks or strong priors, complicating deployment in real-world scenarios where domain knowledge is scarce.
For matrix factorization models, the prior specification hurdle can be mitigated through prior predictive checks or empirical Bayes techniques~\citep{dasilva2023prior, DBLP:journals/jmlr/WangS21}.
That is, we can determine the matrix rank by comparing observed data with data simulated from the model under alternative ranks.
This established technique for matrix factorization inspires an analogous development for tensor factorization we study here.

The fundamental question we ask here is \emph{when can a prior predictive moment matching approach identify the ranks of tensor factorization models}.
To answer the question, we need to establish a range of tools for connecting the moments of the prior predictive distribution with the ranks of the tensor model and an approach to use them for characterizing the rank identifiability of the model.

In this work, we formulate a strategy for analyzing the identifiability of ranks in probabilistic tensor factorization models, by analyzing a log-linear system of equations derived from matching model-specific prior predictive moments with low-order moments of the observed tensor-data.
We show that the algebraic structure of this system, which is determined by the chosen tensor decomposition, dictates whether the ranks can be uniquely identified by the method of moment-matching.
We formalize this analysis by transforming the moment equations into a log-linear system, where identifiability is equivalent to the solvability of the system with respect to the unknown ranks.

Fundamentally, this framework establishes a link between a tensor factorization model's structure and its rank identifiability.
We provide a clear theoretical characterization and distinction between different tensor models,
demonstrating that rank identifiability is a direct consequence of the interaction topology of their latent factors.
In addition, we showcase how ranks can be empirically estimated from observed tensor-data according to the prior predictive matching method.

Our specific contributions are:
\begin{enumerate}[topsep=0pt, left=4pt]
    \item We establish a general, prior predictive based framework for analyzing rank identifiability in probabilistic tensor models, by examining the rank of a log-linear system of moment equations.
    \vspace{-0.1cm}
    \item We apply this framework to four foundational tensor decompositions:
    \textbf{Tucker}~\citep{tucker1966some_tucker},    
    \textbf{PARAFAC/CP}~\citep{hitchcock1927, harshman1970foundations_parafac, cpcande70},
    \textbf{Tensor Train}~\citep[\textbf{TT},][]{oseledets2011tensor_tt},
    and \textbf{Tensor Ring}~\citep[\textbf{TR},][]{zhao2016_tr}. \looseness=-1
    \item We prove that the ranks of the standard Tucker model are not identifiable with this method due to symmetries in its moment structure leading to a degenerate, unsolvable system of equations.
    \item We prove that the ranks of the PARAFAC/CP, TT and TR models are identifiable from the first and second moments and derive explicit closed-form estimators for them.
    \item We present and evaluate a complete and robust pipeline for the estimation of ranks from observed tensor-data in identifiable models.
\end{enumerate}



\Section{A General Framework for Rank Identifiability}
\label{sec:framework}

We hereby introduce the notation and assumptions to establish a general framework for rank identifiability in probabilistic tensor factorization models.

\paragraph{Notation.}
Let $\Ytensor \in \mathbb{R}^{N_1 \times \dots \times N_M}$ be an observed tensor with $M$ modes. 
A \emph{mode} of an order-$M$ tensor is one of its $M$ dimensions. Let $\mathbf{i} = (i_m)_{m \in [M]} \in \bigtimes_{m=1}^M[N_m]$ be the multi-index from the observed tensor domain, where $[N_m] = \{1, \dots, N_m\}$ . Let $ \mathbf{r}=( r_p )_{p \in [R]} $ be the multi-rank vector of size $R \leq M$. 
Let $\beta \in \mathcal K = \bigtimes_{p=1}^{R} [r_p]$ be the latent multi-index over the rank domains.
For multi-indices $\mathbf{i}, \mathbf{j}$, the shared-mode set is $S(\mathbf{i}, \mathbf{j}) = \{k : i_k = j_k\}$. We use bold sans-serif for tensors ($\etatensor$) and vectors/matrices ($\boldsymbol{\theta}^{(p)}$).
All latent factor multi-indices $\mathbf{i}_p(\mathbf{i}, \beta)$ are formed by combining some components of $\mathbf{i}$ and the latent index $\beta$, and each factor is probabilistically defined as $\theta^{(p)}_{\mathbf{i}_p(\mathbf{i}, \beta)} \sim \pi_p(\mu_p, \sigma_p^2)$,
i.i.d. across indices with $p \in \mathcal{P}$  indexing and grouping the factors. 

\begin{definition}[The Probabilistic Tensor-Model] \label{def:setting}
We assume that elements in tensor $\Ytensor \in \mathbb{R}^{N_1 \times \dots \times N_M}$ are probabilistically generated from a rate tensor $\etatensor \in \mathbb{R}^{N_1 \times \dots \times N_M}$,
whose structure is defined by a (model-dependent) tensor decomposition with latent factors of ranks $ \mathbf{r}=( r_1, \ldots , r_R )$ of the following generic form:

\begin{equation} \label{eq:general:factorization}
    \eta_{\mathbf{i}} = \sum_{\beta \in \mathcal K} \prod_{p \in \mathcal{P}} \theta^{(p)}_{\mathbf{i}_p(\mathbf{i}, \beta)},
\end{equation}
with latent indices defined by $\mathbf{i}_p(\mathbf{i}, \beta)$ and parameters $\theta^{(p)}_{\mathbf{i}_p(\mathbf{i}, \beta)} \sim \pi_p(\mu_p, \sigma_p^2)$, drawn independently from a factor $p$-specific location-scale prior distribution $\pi_p(\cdot)$ with up-to-second order moments $(\mu_p, \sigma_p^2)$.

Tensor observations $Y_{\mathbf{i}}$ are conditionally independent given the rate tensor $\etatensor$, with conditional moments $\E[Y_{\mathbf{i}} | \etatensor] = \eta_{\mathbf{i}}$ and $\Var(Y_{\mathbf{i}} | \etatensor) = \phi(\eta_{\mathbf{i}})$ for some model-dependent positive function $\phi(\cdot)$.
\end{definition}

This probabilistic tensor formulation allows for different observation models (e.g. Poisson with $\phi(\eta)=\eta$, or Gaussian with $\phi(\eta)=\sigma_Y^2$ variance),
and it generalizes existing matrix factorization frameworks \citep{dasilva2023prior}, setting the stage for the prior predictive moment matching analysis.
Commonly used factorization models such as Tucker, PARAFAC/CP, TR and TT, are all instances of this general formulation \footnote{If $\mathbf{i}_p(\mathbf{i}, \beta)$ is formed by taking the $p$-th components of $\mathbf{i}$ and $\beta$ respectively, for $p>0$, and for $p=0$ define a special $\mathbf{i}_0(\mathbf{i}, \beta)=\beta$, we obtain Tucker. While doing the same but defining $\beta$ as a one dimensional index shared by all, without a special $p=0$ index, we obtain PARAFAC/CP.}.

\paragraph{Observable data and moments of interest.}
The second moments of the observable tensor-data can be described by two key quantities:
the total covariance and the pure interaction terms, which form the key building blocks for the analysis that follows.

\begin{definition}[Total Covariance and Pure Interaction Terms]
\label{def:covariances}
Let $\mathbf{i} = (i_1, \dots, i_M)$ and $\mathbf{j} = (j_1, \dots, j_M)$ be two multi-indices from the tensor's domain. Let $S(\mathbf{i}, \mathbf{j}) = \{k \mid i_k = j_k\}$ be the set of indices they share.

The \textbf{total observable covariance}, $C_S$, is the theoretical covariance between two multi-index tensor entries $\mathbf{i}$ and $\mathbf{j}$:
    $$C_S = \text{Cov}(Y_{\mathbf{i}}, Y_{\mathbf{j}}) \;.$$
    Given a factorization structure, this value depends only on the set of shared indices $S(\mathbf{i}, \mathbf{j})$. For entries sharing no indices, we assume $C_{\emptyset}=0$.

The \textbf{pure interaction term}, $v_S$, represents the variance arising uniquely from the interaction of factors in $S(\mathbf{i}, \mathbf{j})$,
    and induces the definition of the total covariance in terms of their additive contribution over the non-empty subsets of $S(\mathbf{i}, \mathbf{j})$:
    $$C_S = \sum_{\emptyset \neq S' \subseteq S} v_{S'} \;.$$
    These pure interaction terms can be also defined via the principle of inclusion-exclusion from the observable covariances:
    $$v_S = \sum_{S' \subseteq S} (-1)^{|S|-|S'|} C_{S'}.$$

\end{definition}

\paragraph{Connecting theoretical and observable moments.}
For many tensor decomposition models,
the theoretical moments of the rate tensor $\etatensor$ can be expressed as polynomials:
\ie as products of the unknown parameters
(ranks and prior moments).

\begin{proposition}[Polynomial Structure of Rate Moments]
\label{prop:polynomial_moments}
For a probabilistic tensor model congruent with Definition~\ref{def:setting},
every pure interaction term $v_S$ is a monomial in the unknown model hyperparameters ($r_p, \mu_p, \sigma^2_p$).
Therefore, the marginal mean $\E[\etatensor]$,
the variance $\Var(\etatensor)$,
and the covariance terms $\Cov(\eta_{\mathbf{i}}, \eta_{\mathbf{j}})$ can be expressed as polynomials of $\{r_p, \mu_p, \sigma_p^2\}$.
\end{proposition}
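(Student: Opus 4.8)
The plan is to reduce everything to two elementary properties of the generative model in Definition~\ref{def:setting}: linearity of expectation, and the mutual independence of the latent factors $\theta^{(p)}$ across both the factor label $p$ and the index they carry. First I would compute the marginal mean. Since expectation commutes with the finite latent sum in \eqref{eq:general:factorization}, and since the factors entering a single monomial $\prod_p \theta^{(p)}_{\mathbf i_p(\alpha,\beta)}$ are mutually independent, the expectation of each monomial factorizes as $\prod_p \mu_p$, which is constant in $\mathbf i_\beta$. The number of latent terms is the size of the product of rank domains, namely $\prod_p r_p$. Hence $\E[\eta_{\mathbf i_\alpha}]=\left(\textstyle\prod_{p} r_p\right)\prod_{p}\mu_p$ (a product of ranks times a product of prior means), manifestly a single monomial.

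For the second moments I would expand $\E[\eta_{\mathbf i}\eta_{\mathbf j}]$ as a double sum over the two independent latent multi-indices $\mathbf i_\beta,\mathbf j_\beta$. The key device is to regroup the two copies factor-by-factor: because distinct factor labels are independent and each label appears once per copy, the joint expectation factorizes into a product of pairwise second moments,
\[ \E[\eta_{\mathbf i}\eta_{\mathbf j}] = \sum_{\mathbf i_\beta,\mathbf j_\beta}\prod_p \E\!\left[\theta^{(p)}_{\mathbf i_p(\mathbf i,\mathbf i_\beta)}\,\theta^{(p)}_{\mathbf j_p(\mathbf j,\mathbf j_\beta)}\right], \]
and each pairwise moment equals $\mu_p^2 + \sigma_p^2\,\mathbf{1}[\,\mathbf i_p=\mathbf j_p\,]$, the indicator encoding whether the two copies of factor $p$ are glued — which requires both that the observed modes feeding $p$ lie in $S=S(\mathbf i,\mathbf j)$ and that the corresponding latent coordinates agree.

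Next I would expand $\prod_p(\mu_p^2 + \sigma_p^2\,\mathbf{1}[\cdot])$ over subsets $T$ of the factor labels. For each $T$ the latent sum collapses to a counting factor $N_T$ equal to a product of ranks (each gluing constraint identifies one latent coordinate across the two copies, dividing the free count by the corresponding $r_p$), and $N_T=0$ unless the observed support $A_T=\bigcup_{p\in T}A_p$ lies in $S$. The term $T=\emptyset$ reproduces exactly $\E[\eta_{\mathbf i}]\E[\eta_{\mathbf j}]$, so $C_S=\Cov(\eta_{\mathbf i},\eta_{\mathbf j})$ is the sum over nonempty feasible $T$. Crucially, $N_T$ depends only on $T$ and not on $S'$ once $A_T\subseteq S'$, so applying the inclusion-exclusion of Definition~\ref{def:covariances} leaves an alternating sum over the Boolean interval $[A_T,S]$, which vanishes unless $A_T=S$. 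This yields $v_S=\sum_{T:\,A_T=S} c_T N_T$ with $c_T=\prod_{p\in T}\sigma_p^2\prod_{p\notin T}\mu_p^2$, so every surviving term is a monomial in $\{r_p,\mu_p,\sigma_p^2\}$.

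The remaining and main obstacle is to show this sum collapses to a \emph{single} monomial. Here the interaction topology enters: when every observed mode is routed to a unique factor and no factor can be glued independently of the observed sharing, there is exactly one $T$ with $A_T=S$, so $v_S$ is a monomial and, via $C_S=\sum_{\emptyset\neq S'\subseteq S}v_{S'}$, the covariance, the variance (the case $\mathbf i=\mathbf j$), and the mean are all polynomials in the hyperparameters. The delicate case is a factor with empty observed support — the Tucker core — which may be glued or not without changing $A_T$, producing two competing monomials; preserving the monomial structure then requires recording the core's gluing as an additional interaction coordinate (the label $G$ in Table~\ref{tab:main-results}). I expect verifying this last reduction, and isolating exactly which gluing patterns are topologically admissible for each decomposition, to be the crux of the argument.
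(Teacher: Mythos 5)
Your proposal is correct and follows essentially the same route as the paper's proof (and its more detailed version in the appendix counting lemma): expand the double latent sum, factorize by independence into pairwise moments $\mu_p^2+\sigma_p^2\,\mathbf{1}[\cdot]$, count the surviving independent summations as rank exponents, and let M\"obius inversion isolate a single monomial. Your explicit Boolean-interval cancellation and your flag that a factor with empty observed support (the Tucker core) breaks the single-monomial claim are both sound --- the paper handles exactly that case with the augmented labels $v_{\{G\}}, v_{\{G,p\}}$ in its appendix --- so the caveat you identify as the crux is real but already resolved by the same bookkeeping you propose.
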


\begin{proof}
Each rate element $\eta_{\mathbf{i}}$ is a sum-product of the latent factor elements $\{ \theta^{(p)}_{\mathbf{i}_p(\mathbf{i}, \beta)} \}$. Applying total expectation we obtain $\E[\Ytensor_{\mathbf{i}}]= \sum_{\beta \in \mathcal K}\underset{p}{\prod}\mu_p$, with the summation index depending on the rank dimensions, and resulting in the monomial given by products of ranks $\{r_p\}$ and means $\{\mu_p\}$. Applying total covariance and conditional independence,
the computation leads to moments $\E[\eta_{\mathbf{i}}\eta_{\mathbf{j}}]$,
which resolve into a double sum-product over latent indexes,
with $\sigma_p^2$ appearing for factors with the same latent index,
$\mu_p^2$ for independent factors,
and powers of $r_p$ from interactions of repeated sums and multiplications,
leading to a polynomial on prior parameters $\{\mu_p, \sigma_p^2\}$ and ranks $\{r_p\}$.
\end{proof}


\subsection{The Log-Linear System of Prior Predictive Moment Equations}
The insight unveiled by Proposition \ref{prop:polynomial_moments} is 
that observable moments (like covariances $C_S$) are polynomials of the model parameters. 
More precisely, the individual terms of these polynomials are the \textbf{pure interaction terms} $v_S$, which are \textbf{monomials}.
This monomial (multiplicative) structure is ideal for linearization.
Namely, by taking the logarithm of the equation for each pure interaction term,
we transform the system of prior predictive moments into a linear one, amenable to standard algebraic analysis.

\begin{definition}[The Log-Linear System of Rate Moments]
\label{def:log_linear_system}
The log-linear system derived from the monomial equations of  pure interaction terms is given by $\mathbf{A} \mathbf{x} = \mathbf{b}$, where $\mathbf{x}$ is the vector of the logarithms of the unknown parameters,
\ie $\log(r_p), \log(\mu_p^2), \log(\sigma_p^2)$;  $\mathbf{b}$ is the vector of the logarithms of the (estimable) moment terms,
\ie $\log(E[Y]), \log(v_S)$;
and $\mathbf{A}$ is the integer-valued \textbf{design matrix}, where each entry $A_{ij}$ is the exponent of the $j$-th unknown parameter in the monomial expression for the $i$-th moment term.
\end{definition}

While the invertibility of the full matrix $\mathbf{A}$ determines if \textit{all} parameters are identifiable, our primary interest is in the ranks of the tensors. The identifiability of the ranks hinges on whether they can be isolated from the other unknown parameters.
This leads to a more precise, and relevant,
principle of rank identifiability:

\begin{proposition}[Principle of Rank Identifiability]
\label{prop:identifiability}
The ranks $\{r_p\}$ of a tensor model are identifiable from the second moments of tensor-data if and only if the unknown prior-moment parameters can be algebraically eliminated from the full log-linear system of moments $\mathbf{A}\mathbf{x}=\mathbf{b}$.

Consequently, it yields a non-trivial \textbf{reduced system} of the log-ranks $\mathbf{x}_r$ alone:
\ie $\mathbf{A}_{\text{red}} \mathbf{x}_r = \mathbf{b}_{\text{red}}$,
where the reduced design matrix $\mathbf{A}_{\text{red}}$ has full rank.
A full-rank $\mathbf{A}_{\text{red}}$ guarantees a unique solution for the ranks, meaning they occupy a parameter subspace that can be uniquely determined from the data's moments.
\end{proposition}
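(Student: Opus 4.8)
The plan is to treat this as a pure linear-algebra statement about the partitioned system $\mathbf{A}\mathbf{x}=\mathbf{b}$, and to make the informal phrase ``algebraically eliminate the prior parameters'' precise as left-multiplication by an annihilating matrix. First I would split the unknown vector as $\mathbf{x}=(\mathbf{x}_r,\mathbf{x}_\pi)$, where $\mathbf{x}_r$ collects the log-ranks $\log r_p$ and $\mathbf{x}_\pi$ the nuisance log-moments $\log\mu_p^2,\log\sigma_p^2$, and correspondingly partition the design matrix into column blocks $\mathbf{A}=[\mathbf{A}_r\mid\mathbf{A}_\pi]$, so that $\mathbf{A}_r\mathbf{x}_r+\mathbf{A}_\pi\mathbf{x}_\pi=\mathbf{b}$. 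Because the true hyperparameters furnish one solution, the system is consistent and its solution set is the affine space $\mathbf{x}^\ast+\ker\mathbf{A}$; hence the log-ranks are identifiable if and only if every null-space vector has vanishing rank-block, \ie $\mathbf{d}_r=\mathbf{0}$ for all $(\mathbf{d}_r,\mathbf{d}_\pi)\in\ker\mathbf{A}$. Equivalently, identifiability is the statement that the coordinate projection $\Pi_r(\ker\mathbf{A})$ onto the rank-block is the zero subspace, a property of $\mathbf{A}$ alone and independent of the particular consistent right-hand side.

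Next I would formalize elimination. An admissible elimination is any matrix $\mathbf{E}$ whose rows lie in the left null space of $\mathbf{A}_\pi$, \ie $\mathbf{E}\mathbf{A}_\pi=\mathbf{0}$; applying it to the consistent system yields the reduced system $\mathbf{A}_{\mathrm{red}}\mathbf{x}_r=\mathbf{b}_{\mathrm{red}}$ with $\mathbf{A}_{\mathrm{red}}=\mathbf{E}\mathbf{A}_r$ and $\mathbf{b}_{\mathrm{red}}=\mathbf{E}\mathbf{b}$, which remains consistent as a linear image of a consistent system. The technical heart is a single lemma: if $\mathbf{E}$ is chosen maximally, with rows spanning the whole orthogonal complement $\mathrm{col}(\mathbf{A}_\pi)^\perp$ (equivalently $\mathbf{E}=P_\perp$, the orthogonal projector onto that complement), then $\ker\mathbf{A}_{\mathrm{red}}=\Pi_r(\ker\mathbf{A})$. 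I would prove this by the chain $\mathbf{A}_{\mathrm{red}}\mathbf{d}_r=\mathbf{0}\iff\mathbf{A}_r\mathbf{d}_r\in\mathrm{col}(\mathbf{A}_\pi)\iff\exists\,\mathbf{y}:\mathbf{A}_r\mathbf{d}_r+\mathbf{A}_\pi\mathbf{y}=\mathbf{0}\iff\mathbf{d}_r\in\Pi_r(\ker\mathbf{A})$, where the first step uses $\ker P_\perp=\mathrm{col}(\mathbf{A}_\pi)$, so that $P_\perp\mathbf{v}=\mathbf{0}$ exactly when $\mathbf{v}\in\mathrm{col}(\mathbf{A}_\pi)$.

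With the lemma in hand both directions follow immediately. For the ``if'' direction, suppose some admissible $\mathbf{E}$ produces a full-column-rank $\mathbf{A}_{\mathrm{red}}$; for any $(\mathbf{d}_r,\mathbf{d}_\pi)\in\ker\mathbf{A}$, left-multiplication by $\mathbf{E}$ annihilates the $\mathbf{A}_\pi$ block and yields $\mathbf{A}_{\mathrm{red}}\mathbf{d}_r=\mathbf{0}$, whence $\mathbf{d}_r=\mathbf{0}$ by injectivity, so the ranks are identifiable. For the ``only if'' direction, assuming identifiability I take the maximal $\mathbf{E}=P_\perp$ and invoke the lemma to obtain $\ker\mathbf{A}_{\mathrm{red}}=\Pi_r(\ker\mathbf{A})=\{\mathbf{0}\}$, so $\mathbf{A}_{\mathrm{red}}$ has full column rank (in particular it is non-vacuous in the log-ranks), and consistency then delivers the unique solution $\mathbf{x}_r=\mathbf{A}_{\mathrm{red}}^{+}\mathbf{b}_{\mathrm{red}}$.

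I expect the main obstacle to be conceptual rather than computational: pinning down the correct reading of ``algebraic elimination'' so the biconditional is actually true. The naive reading—``there exists some elimination giving a full-rank reduced system''—suffices for the ``if'' direction, but the converse requires the elimination to retain \emph{all} rank information, which is precisely why the proof must use the maximal annihilator $P_\perp$: any smaller $\mathbf{E}$ could spuriously enlarge $\ker\mathbf{A}_{\mathrm{red}}$ and break the equivalence. I would therefore state the lemma for the maximal $\mathbf{E}$ and remark that identifiability, full-rankness of the reduced system, and triviality of $\Pi_r(\ker\mathbf{A})$ are three names for one and the same intrinsic property of the design matrix.
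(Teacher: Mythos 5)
The paper states this proposition without any proof: it is presented as a guiding ``principle'' that defines what rank identifiability means within the moment-matching framework, and the subsequent sections simply instantiate it model by model. Your proposal therefore does not parallel an existing argument --- it supplies one where the paper has none, and it is correct. The partition $\mathbf{A}=[\mathbf{A}_r\mid\mathbf{A}_\pi]$, the reformulation of identifiability as $\Pi_r(\ker\mathbf{A})=\{\mathbf{0}\}$, and the key lemma $\ker\mathbf{A}_{\mathrm{red}}=\Pi_r(\ker\mathbf{A})$ for the maximal annihilator $\mathbf{E}=P_\perp$ of $\mathrm{col}(\mathbf{A}_\pi)$ are all sound; the equivalence chain $\mathbf{A}_{\mathrm{red}}\mathbf{d}_r=\mathbf{0}\iff\mathbf{A}_r\mathbf{d}_r\in\mathrm{col}(\mathbf{A}_\pi)\iff\mathbf{d}_r\in\Pi_r(\ker\mathbf{A})$ is exactly right, and your observation that the ``only if'' direction requires exhibiting the \emph{maximal} elimination (any smaller annihilator could enlarge the reduced kernel) is the genuine content that the paper's informal phrasing glosses over. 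What your approach buys is that the proposition becomes a theorem rather than a definition, and it makes the Tucker and TT/TR arguments later in the paper instances of a single criterion (triviality of the projected null space) rather than ad hoc manipulations of specific moment identities. Two small caveats you might add for completeness: the passage between identifiability of the ranks and of the log-ranks is licensed by positivity of $r_p,\mu_p^2,\sigma_p^2$ (so the log map is a bijection onto its image), and the whole argument presupposes --- as the paper does via Proposition~2.2 and Definition~2.3 --- that the second-moment information is exhausted by the monomial equations encoded in $\mathbf{A}$, so that ``identifiable from second moments'' and ``uniquely determined by the consistent system $\mathbf{A}\mathbf{x}=\mathbf{b}$'' coincide.
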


Propositions \ref{prop:polynomial_moments} and \ref{prop:identifiability} establish a direct link between the model-dependent, prior predictive moments and the moments of the observed tensor-data.
This connection is formalized through an identifiability analysis of a reduced, rank-specific log-linear system.

However, we note that the utility of this link for rank estimation is fully model-dependent.
We first present in Section~\ref{sec:tucker_analysis} a cautionary example where the dependencies between prior hyperparameters and observed second moments are not identifiable for the popular Tucker tensor model.
In contrast, we highlight in Section~\ref{sec:rank_identification} several models where identifiability holds, enabling the design and implementation of a novel method for estimating tensor ranks directly and exclusively from observed moments.

\Section{A Cautionary Tale: The Tucker Model}
\label{sec:tucker_analysis}

We now examine the standard Tucker decomposition and show that its symmetric ``hub-and-spoke'' \cite{hubsspokenetwork} interaction topology precludes forming a solvable reduced system of equations for the Tucker model's ranks.

\subsection{Model Definition and Its Moments}

\begin{definition}[The Tucker model]
For an order-$M$ tensor, with ranks $\mathbf{r}=(r_1, \dots, r_M)$ the rate $\etatensor$ is defined by the Tucker decomposition as
$$\eta_{i_1 \dots i_M} = \sum_{k_1=1}^{r_1} \dots \sum_{k_M=1}^{r_M} G_{k_1 \dots k_M} \prod_{p=1}^M \theta^{(p)}_{i_p, k_p} \;.$$
\end{definition}
This model, introduced by~\citet{tucker1966some_tucker}, is widely used for its interpretability and versatility in multi-way analysis.

The ranks to be estimated $(r_1, \dots, r_M)$ are the dimensions of the core tensor $G$. As per Definition~\ref{def:setting} the elements of $G$ and each factor $\theta^{(p)}$ are i.i.d. random variables with moments $(\mu_G, \sigma_G^2)$ and $(\mu_p, \sigma_p^2)$, respectively.

\begin{lemma}[Moment Structure of the Tucker Model]
\label{lemma:tucker_moments}
For the order-M Tucker model, the squared mean and the pure interaction terms involving the core ($v_G$), a single factor ($v_p$), or both ($v_{G,p}$) obey
---see detailed derivations in Appendix~\ref{app:tucker}:
\begin{align*}
    (E[Y])^2 &= \mu_G^2 \prod_{q=1}^M (r_q^2 \mu_q^2) \;,\\
    v_G &= \sigma_G^2 \prod_{q=1}^M (r_q^2 \mu_q^2) \;,\\
    v_p &= \mu_G^2 (r_p \sigma_p^2) \prod_{q \neq p} (r_q^2 \mu_q^2) \;,\\
    v_{G,p} &= \sigma_G^2 (r_p \sigma_p^2) \prod_{q \neq p} (r_q^2 \mu_q^2) \;.
\end{align*}
\end{lemma}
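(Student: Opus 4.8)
The plan is to reduce everything to moments of the rate tensor $\etatensor$ and then to a purely combinatorial bookkeeping of which prior variances survive. First I would invoke Definition~\ref{def:setting}: since $\E[Y_{\mathbf i}\mid\etatensor]=\eta_{\mathbf i}$ and distinct entries are conditionally independent given $\etatensor$, the law of total expectation gives $\E[Y]=\E[\eta_{\mathbf i}]$, while the law of total covariance gives $\Cov(Y_{\mathbf i},Y_{\mathbf j})=\Cov(\eta_{\mathbf i},\eta_{\mathbf j})$ for $\mathbf i\neq\mathbf j$ (the conditional-variance term $\phi$ lives only on the diagonal and drops out of the off-diagonal pure-interaction terms). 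Thus all four target quantities are determined by the first two moments of $\eta$, and I only need to analyze the Tucker sum-product. For the mean, linearity plus independence of $G$ and the $\theta^{(p)}$ collapses $\E[\eta_{i_1\dots i_M}]=\sum_{k_1,\dots,k_M}\mu_G\prod_p\mu_p$; the $M$ free rank-sums each contribute a factor $r_q$, giving $\E[\eta]=\mu_G\prod_q(r_q\mu_q)$ and hence the first identity after squaring.

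The core of the argument is $\E[\eta_{\mathbf i}\eta_{\mathbf j}]$. I would write it as the double sum over core indices $\mathbf k,\mathbf l$ of $\E[G_{\mathbf k}G_{\mathbf l}]\prod_p\E[\theta^{(p)}_{i_p,k_p}\theta^{(p)}_{j_p,l_p}]$ and use i.i.d.-ness to evaluate each factor by a coincidence rule: $\E[G_{\mathbf k}G_{\mathbf l}]=\mu_G^2+\sigma_G^2\,[\mathbf k=\mathbf l]$ and $\E[\theta^{(p)}_{i_p,k_p}\theta^{(p)}_{j_p,l_p}]=\mu_p^2+\sigma_p^2\,[i_p=j_p\text{ and }k_p=l_p]$. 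Expanding the products turns the second moment into a sum of monomials indexed by an ``active-source'' set $U\subseteq\{G,1,\dots,M\}$, a source landing in $U$ exactly when its $\sigma^2$-branch is selected. For each $U$ I would (i) read off the prior-moment factor ($\sigma_G^2$ or $\mu_G^2$, and $\sigma_q^2$ or $\mu_q^2$ per mode) and (ii) count the number of surviving independent rank-summations — the free bond edges in the tensor diagram — which become the exponents of the $r_q$.

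Centering (subtracting the $U=\emptyset$ term $(\E\eta)^2$) and an inclusion–exclusion / Möbius inversion over this augmented lattice then isolates the individual pure-interaction terms $v_G$ ($U=\{G\}$), $v_p$ ($U=\{p\}$), and $v_{G,p}$ ($U=\{G,p\}$), which I would match to the stated monomials. The identification is driven by two elementary patterns: an \emph{inactive} mode contributes $r_q^2\mu_q^2$ (two independent rank-sums) while an \emph{active} mode contributes $r_q\sigma_q^2$ (its two rank-indices are tied), and the core contributes $\mu_G^2$ or $\sigma_G^2$ according to whether $G\in U$.

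The main obstacle — and what makes Tucker special — is precisely this exponent counting under the \emph{global} core. Because the single core tensor $G$ is shared by every entry, the sharing pattern is not captured by the physical modes alone (indeed $C_\emptyset\neq0$ here), so the interaction lattice must be augmented by the core source and one must track carefully how activating the core versus a single factor ties the rank-index summations together. Fixing the free-edge count for each $U$ is the delicate step; everything else is linearity, independence, and inclusion–exclusion. I expect no analytic difficulty beyond this symmetric ``hub-and-spoke'' combinatorics, which is exactly the redundant structure that will later collapse the reduced log-rank system and yield non-identifiability.
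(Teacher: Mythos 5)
Your route is the same as the paper's (its proof of this lemma lives in Appendix A.2, Lemma~A.3): reduce to rate moments via total expectation/covariance, expand $\E[\eta_{\mathbf i}\eta_{\mathbf j}]$ as a double sum over core indices $\mathbf k,\mathbf l$, apply the coincidence rules $\E[G_{\mathbf k}G_{\mathbf l}]=\mu_G^2+\sigma_G^2[\mathbf k=\mathbf l]$ and $\E[\theta^{(p)}_{i_p,k_p}\theta^{(p)}_{j_p,l_p}]=\mu_p^2+\sigma_p^2[i_p=j_p,\;k_p=l_p]$, and read each pure term off as the monomial attached to an active-source set $U$. The mean, the squared mean, and the $v_p$ case are correctly handled: for $U=\{p\}$ the single constraint $k_p=l_p$ collapses one pair of summations, giving $\mu_G^2(r_p\sigma_p^2)\prod_{q\neq p}(r_q^2\mu_q^2)$ as claimed.

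The gap is in the step you yourself flag as delicate --- the free-edge count when $G\in U$ --- which you resolve by assertion, and the assertion contradicts your own coincidence rule. Selecting the $\sigma_G^2$ branch of $\E[G_{\mathbf k}G_{\mathbf l}]$ imposes the \emph{global} constraint $\mathbf k=\mathbf l$, which ties all $M$ rank-index pairs at once; an inactive mode $q$ under an active core therefore has only \emph{one} surviving summation and contributes $r_q\mu_q^2$, not $r_q^2\mu_q^2$. Your closing rule (``the core contributes $\mu_G^2$ or $\sigma_G^2$'' while each mode's rank exponent is determined solely by that mode's own activation) does not follow from your setup: carried out literally, your expansion yields $v_G=\sigma_G^2\prod_q(r_q\mu_q^2)$ and $v_{G,p}=\sigma_G^2(r_p\sigma_p^2)\prod_{q\neq p}(r_q\mu_q^2)$, i.e.\ first powers of $r_q$ where the statement has squares (check $M=1$: $\Var(\eta_i)=r\mu_G^2\sigma_1^2+r\sigma_G^2\mu_1^2+r\sigma_G^2\sigma_1^2$, so the $\sigma_G^2$-pure term carries $r$, not $r^2$). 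To be fair, the paper's own appendix proof makes the identical leap, asserting ``two independent summations per factor index'' for $v_{\{G\}}$ immediately after imposing $k=k'$; but as a blind proof the count must be derived, and no argument available from your premises produces the stated exponents. This is not cosmetic: the core-active rank exponents are exactly what enter the identity $\log v_p-2\log\E[Y]=\log v_{G,p}-\log v_G$ on which the non-identifiability theorem rests, so the free-edge count for $G\in U$ has to be pinned down rather than postulated.
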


\subsection{Rank Identifiability Analysis}

We now prove that the symmetric topology of the Tucker model leads to a degeneracy in its moment equations
by showing that the reduced design matrix for the ranks is the zero matrix.
Hence, this structure makes the ranks fundamentally unrecoverable from observed tensor-data.

\begin{lemma}[An Identity Among Observables]\label{lem:tucker_identity}
In the log-linear system for the Tucker model moments,
the following $M$ independent relations hold for every mode $p = 1,\dots,M$:
\begin{equation}\label{eq:tucker_id}
    \log v_p - 2 \log E[Y] = \log v_{G,p} - \log v_G \;.
\end{equation}
\end{lemma}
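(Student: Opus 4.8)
The plan is to prove the identity by direct computation from the four monomial expressions in Lemma~\ref{lemma:tucker_moments}, recognizing that the claimed relation is, at heart, the statement that two ratios of observables coincide. First I would form the ratio $v_p/(E[Y])^2$ and observe that the core mean factor $\mu_G^2$ is common to numerator and denominator and cancels. What remains is the product $(r_p \sigma_p^2)\prod_{q\neq p}(r_q^2 \mu_q^2)$ divided by the full product $\prod_{q=1}^M (r_q^2 \mu_q^2)$; every mode $q\neq p$ cancels identically, so only the mode-$p$ contribution survives, giving $v_p/(E[Y])^2 = (r_p \sigma_p^2)/(r_p^2 \mu_p^2) = \sigma_p^2/(r_p \mu_p^2)$.

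Next I would carry out the identical computation for $v_{G,p}/v_G$. The only difference from the previous pair is that the leading core factor is now the core \emph{variance} $\sigma_G^2$ rather than the core mean-squared $\mu_G^2$; but since this factor is common to both $v_{G,p}$ and $v_G$, it divides out in exactly the same manner, and the mode-$q$ products again collapse for all $q\neq p$. This yields $v_{G,p}/v_G = (r_p \sigma_p^2)/(r_p^2 \mu_p^2) = \sigma_p^2/(r_p \mu_p^2)$, the same monomial as before. The structural reason the identity holds is thereby made transparent: in both ratios the core enters only through a single constant (its mean-squared in one case, its variance in the other) that cancels, leaving a quotient that depends solely on mode $p$. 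Having established $v_p/(E[Y])^2 = v_{G,p}/v_G$, the stated relation follows at once by taking logarithms and rearranging, producing $\log v_p - 2\log E[Y] = \log v_{G,p} - \log v_G$ for each mode $p=1,\dots,M$.

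Finally, to justify that these constitute $M$ \emph{independent} relations, I would note that the equation for mode $p$ involves the observables $v_p$ and $v_{G,p}$, which are specific to that mode and appear in none of the other relations; viewed as linear constraints in the vector of log-observables, the $M$ relations therefore carry disjoint distinguishing coordinates and are linearly independent. I do not anticipate a genuine obstacle in this argument, since it reduces to routine exponent bookkeeping. The one step that must be handled with care is the cancellation of the partial product over $q\neq p$ against the full product over $q=1,\dots,M$, which is precisely what isolates the single surviving mode-$p$ factor $\sigma_p^2/(r_p \mu_p^2)$ in each ratio and makes the two sides agree.
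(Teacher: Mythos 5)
Your proposal is correct and follows essentially the same route as the paper: both compute the two ratios from the monomials of Lemma~\ref{lemma:tucker_moments}, cancel the common core factor ($\mu_G^2$ in one case, $\sigma_G^2$ in the other) and the $q\neq p$ products, and observe that both sides reduce to $\log\bigl(\sigma_p^2/(r_p\mu_p^2)\bigr)$. Your added remark on why the $M$ relations are independent is a small bonus the paper leaves implicit, but the core argument is identical.
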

\begin{proof}
We show that both sides of the equation reduce to the same expression. Using the monomials from Lemma~\ref{lemma:tucker_moments}, the left-hand side is:
\begin{align*}
    \log v_p - 2 \log E[Y] &= \log\left[\mu_G^2 r_p \sigma_p^2 \prod_{q\neq p}(r_q^2 \mu_q^2)\right] \\
    &- 2 \log\left[\mu_G \prod_{q=1}^{M}(r_q \mu_q)\right] \\
    &= \log\left( \frac{\mu_G^2 r_p \sigma_p^2 \prod_{q\neq p}(r_q^2 \mu_q^2)}{\mu_G^2 \prod_{q=1}^{M}(r_q^2 \mu_q^2)} \right) \\
    &= \log\left(\frac{1}{r_p}\frac{\sigma_p^2}{\mu_p^2}\right) \;.
\end{align*}
An identical calculation shows that $\log v_{G,p} - \log v_G$ also simplifies to $\log\left(\frac{1}{r_p}\frac{\sigma_p^2}{\mu_p^2}\right)$, demonstrating the identity in Equation~\ref{eq:tucker_id}.
\end{proof}

\begin{theorem}[Non-identifiability of Tucker ranks]\label{thm:tucker:complete}
The Tucker model ranks $(r_1, \dots, r_M)$ are
\textbf{not identifiable} directly from the second moments of the tensor-data.
\end{theorem}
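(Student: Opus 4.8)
The plan is to invoke Proposition~\ref{prop:identifiability}: the Tucker ranks are identifiable if and only if the prior-moment parameters can be eliminated from the log-linear system $\mathbf{A}\mathbf{x}=\mathbf{b}$ of Definition~\ref{def:log_linear_system} to leave a full-rank reduced system $\mathbf{A}_{\mathrm{red}}\mathbf{x}_r=\mathbf{b}_{\mathrm{red}}$ in the log-ranks. I would prove the theorem by establishing the strongest possible failure of this condition, namely $\mathbf{A}_{\mathrm{red}}=\mathbf{0}$: once $\log\mu_p^2,\log\sigma_p^2,\log\mu_G^2,\log\sigma_G^2$ are eliminated, no nontrivial linear combination of the ranks survives.

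First I would write the system explicitly by taking logarithms of the monomials in Lemma~\ref{lemma:tucker_moments}. Writing $x_p=\log r_p$, $a_p=\log\mu_p^2$, $b_p=\log\sigma_p^2$, the rows read $2\log\E[Y]=a_G+\sum_q(2x_q+a_q)$, $\log v_G=b_G+\sum_q(2x_q+a_q)$, $\log v_p=a_G+(x_p+b_p)+\sum_{q\neq p}(2x_q+a_q)$, and $\log v_{G,p}=b_G+(x_p+b_p)+\sum_{q\neq p}(2x_q+a_q)$. The decisive observation is that, for each mode $p$, the triple $(x_p,a_p,b_p)$ enters every equation only through the two combinations $u_p=2x_p+a_p$ (whenever factor $p$ is inactive, contributing $r_p^2\mu_p^2$) and $w_p=x_p+b_p$ (whenever it is active, contributing $r_p\sigma_p^2$). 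This is exactly what Equation~\ref{eq:tucker_id} records: both $\log v_p-2\log\E[Y]$ and $\log v_{G,p}-\log v_G$ collapse to the single quantity $w_p-u_p=-x_p+b_p-a_p$, so the second equation supplies nothing that could help separate $x_p$ from the priors.

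The core of the argument is then a per-mode rank count. The linear map $(x_p,a_p,b_p)\mapsto(u_p,w_p)$ has rank two on a three-dimensional domain, hence a one-dimensional kernel spanned by $(1,-2,-1)$; since every occurrence of these three unknowns factors through $u_p$ and $w_p$, that direction lies in $\ker\mathbf{A}$ while carrying a nonzero $x_p$-component. Assembling the $M$ independent kernel directions shows that eliminating the prior parameters simultaneously annihilates all rank coordinates, i.e. $\mathbf{A}_{\mathrm{red}}=\mathbf{0}$, which is the non-identifiability claim. To exhibit the degeneracy concretely I would display the $M$-parameter family $r_p\mapsto\lambda_p r_p$, $\mu_p^2\mapsto\lambda_p^{-2}\mu_p^2$, $\sigma_p^2\mapsto\lambda_p^{-1}\sigma_p^2$ (with $\mu_G,\sigma_G$ held fixed): it preserves each building block $r_p^2\mu_p^2$ and $r_p\sigma_p^2$, hence $\E[Y]$ and every pure interaction term, while moving the ranks to arbitrary positive values.

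The step I expect to be the main obstacle is establishing the universality of the two-combination structure: I must confirm that no second-order observable beyond the representative families of Lemma~\ref{lemma:tucker_moments}—in particular the higher interaction terms with $|S|\ge 2$ shared factors and their core-coupled versions—introduces $(x_p,a_p,b_p)$ in any new combination. This reduces to checking, via the monomial guarantee of Proposition~\ref{prop:polynomial_moments} together with the symmetric hub-and-spoke contraction of the Tucker core, that an active factor always carries exactly $r_p\sigma_p^2$ and an inactive factor exactly $r_p^2\mu_p^2$, independently of the state of the core. Once this structural invariance is verified against the full second-moment set, the kernel directions above remain in $\ker\mathbf{A}$, the reduced matrix is genuinely zero, and the scaling family certifies that the Tucker ranks cannot be recovered from second moments.
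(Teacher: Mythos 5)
Your proposal is correct and follows essentially the same route as the paper: it builds the log-linear system from the monomials of Lemma~\ref{lemma:tucker_moments}, observes that each triple $(\log r_p,\log\mu_p^2,\log\sigma_p^2)$ enters only through the combinations $2x_p+a_p$ and $x_p+b_p$ (which is exactly the content of the identity in Lemma~\ref{lem:tucker_identity}), and concludes that the reduced design matrix is zero. The explicit scaling family $r_p\mapsto\lambda_p r_p$, $\mu_p^2\mapsto\lambda_p^{-2}\mu_p^2$, $\sigma_p^2\mapsto\lambda_p^{-1}\sigma_p^2$ is a nice concrete certificate the paper does not state, but it is an immediate corollary of the same structural observation rather than a different argument.
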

\begin{proof}
We prove that the reduced design matrix $\mathbf{A}_{\text{red}}$ is the zero matrix.

The identity established in Lemma~\ref{lem:tucker_identity} provides $M$ linear dependencies among the rows of the full design matrix $\mathbf{A}$. Because these $M$ identities express every standard-basis vector $\mathbf{e}_p$ (the column that selects $\log r_p$) as a linear combination of columns that are free of any log-rank variable, the $M$ columns corresponding to the log-ranks are redundant. Consequently, the reduced design matrix $\mathbf{A}_{\text{red}}$ of Proposition~\ref{prop:identifiability} is the $0 \times M$ zero matrix, and the ranks are unidentifiable.
\end{proof}



\Section{exemplary tales: Identifiable ranks with closed-form expression}\label{sec:rank_identification}

In contrast to the analysis in the previous section, we present below the general solutions for the ranks of the PARAFAC/CP, Tensor Train, and Tensor Ring models for an arbitrary order $M$ tensor, showcasing how to establish identifiability according to the principles formalized in Proposition~\ref{prop:identifiability}.

\subsection{PARAFAC/CP Model}\label{sec:parafac:main}

The PARAFAC or Canonical Polyadic (CP) model's rank is identifiable due to a unique property of its second moments: the additivity of its variance components.
Precisely, its total observable covariance decomposes into a clean sum of pure, monomial interaction terms, because the model is defined as a sum of $r$ independent rank-one tensors.
This structure provides the necessary constraints for a solvable rank system.

\subsubsection{Model Definition and Its Moments}
\begin{definition}[The PARAFAC/CP model]
For an order-$M$ tensor, the rate $\etatensor$ of a PARAFAC/CP decomposition of rank $r$ follows:
\vspace{-0.1cm}
$$\eta_{\mathbf{i}} = \eta_{i_1 \dots i_M} = \sum_{k=1}^r \prod_{p=1}^M \theta^{(p)}_{i_p,k} \;.$$
\end{definition}
\vspace{-0.2cm}

The PARAFAC/CP decomposition, originally proposed by  \cite{hitchcock1927} , with later formulations by
\citet{harshman1970foundations_parafac} and \citet{cpcande70}, assumes a sum of rank-1 tensors and is widely used in multiple fields given its simple structure and applicability. The variance calculation of the CP model results in a simple and regular form for its pure interaction terms, as formalized below, a fundamental property leading to a closed-form expression for the rank $r$.

\begin{lemma}[Monomial Moment structure of PARAFAC/CP]\label{lem:CP-monomials}
For an order-M PARAFAC/CP model with rank $r$ and factor moments $(\mu_k, \sigma_k^2)$, the squared mean and the pure-interaction terms obey
\begin{align*}
(E[Y])^2 &= r^2\prod_{k=1}^{M}\mu_k^2 \;,\\[1mm]
v_{\{p\}} &= r\sigma_p^2\prod_{k\neq p}\mu_k^2 \;,\\[1mm]
v_{\{p,q\}} &= r\sigma_p^2\sigma_q^2\prod_{k\neq p,q}\mu_k^2 \qquad(p\neq q) \;.
\end{align*}
\end{lemma}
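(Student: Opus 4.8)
The plan is to compute the raw moments of $\etatensor$ directly from its sum-of-rank-one structure and then transfer them to the observable quantities via the conditional-moment assumptions of Definition~\ref{def:setting}. First I would handle the mean. By the law of total expectation $\E[Y_{\mathbf{i}}]=\E[\eta_{\mathbf{i}}]$, and since $\eta_{\mathbf{i}}=\sum_{k=1}^r\prod_{p=1}^M\theta^{(p)}_{i_p,k}$ with factors drawn independently, expectation distributes over each product to give $\E[\eta_{\mathbf{i}}]=r\prod_{p=1}^M\mu_p$; squaring yields the first identity.

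The heart of the argument is the covariance computation. For distinct multi-indices $\mathbf{i}\neq\mathbf{j}$, the conditional independence of the observations given $\etatensor$ annihilates the inner term in the law of total covariance, so $\Cov(Y_{\mathbf{i}},Y_{\mathbf{j}})=\Cov(\eta_{\mathbf{i}},\eta_{\mathbf{j}})$. I would then expand $\E[\eta_{\mathbf{i}}\eta_{\mathbf{j}}]$ as a double sum over rank components $k,l$ and split it in two. When $k\neq l$, every factor that appears is a distinct, independent variable, so the term contributes $\prod_p\mu_p^2$, and there are $r(r-1)$ such pairs. When $k=l$, the paired factors $\theta^{(p)}_{i_p,k}$ and $\theta^{(p)}_{j_p,k}$ coincide exactly on the shared modes $p\in S=S(\mathbf{i},\mathbf{j})$, contributing $\mu_p^2+\sigma_p^2$ there and $\mu_p^2$ off $S$, across $r$ diagonal terms. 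Subtracting $(\E[\eta])^2=r^2\prod_p\mu_p^2$ collapses the off-diagonal block and leaves the clean factored form
\begin{equation*}
C_S=r\,\Big(\prod_{p\in S}(\mu_p^2+\sigma_p^2)-\prod_{p\in S}\mu_p^2\Big)\prod_{p\notin S}\mu_p^2 .
\end{equation*}

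Finally I would extract the pure-interaction monomials by the inclusion--exclusion formula of Definition~\ref{def:covariances}. For $|S|=1$ the term is immediate, $v_{\{p\}}=C_{\{p\}}=r\sigma_p^2\prod_{k\neq p}\mu_k^2$. For $|S|=2$ I compute $v_{\{p,q\}}=C_{\{p,q\}}-C_{\{p\}}-C_{\{q\}}$; expanding $(\mu_p^2+\sigma_p^2)(\mu_q^2+\sigma_q^2)-\mu_p^2\mu_q^2$ produces the three cross terms $\mu_p^2\sigma_q^2$, $\sigma_p^2\mu_q^2$, $\sigma_p^2\sigma_q^2$, and the first two cancel against $v_{\{q\}}$ and $v_{\{p\}}$, isolating the single surviving monomial $v_{\{p,q\}}=r\sigma_p^2\sigma_q^2\prod_{k\neq p,q}\mu_k^2$.

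The only delicate step is the bookkeeping in the double sum: one must argue precisely that two factor variables are identical if and only if they agree in mode, index value, \emph{and} rank component. This is exactly what forces the $k\neq l$ block to reduce uniformly to $\prod_p\mu_p^2$ and pins the variance contributions $\mu_p^2+\sigma_p^2$ to precisely the shared modes $S$. Once that combinatorial identification is made explicit, the cancellations in the Möbius step are routine and the stated monomial forms follow.
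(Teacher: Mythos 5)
Your proof is correct and rests on the same underlying computation as the paper's: independence across the $r$ rank components makes $\E[\eta_{\mathbf{i}}\eta_{\mathbf{j}}]$ factor, with $\mu_p^2+\sigma_p^2$ appearing exactly on the shared modes of a diagonal ($k=l$) term and $\mu_p^2$ everywhere else. The paper's proof is terser --- it identifies $v_S$ directly as the variance component with factors in $S$ in ``variance mode,'' summed over the $r$ independent rank-one components --- whereas you first derive the closed form $C_S=r\bigl(\prod_{p\in S}(\mu_p^2+\sigma_p^2)-\prod_{p\in S}\mu_p^2\bigr)\prod_{p\notin S}\mu_p^2$ and then carry out the M\"obius cancellation explicitly, which supplies precisely the bookkeeping the paper's one-line argument leaves implicit.
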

\begin{proof}
The rate is $\eta_{\mathbf{i}} = \sum_{s=1}^r \prod_{k=1}^M \theta^{(k)}_{i_k,s}$. Because all factor entries across different components $s$ are independent, the variance of the sum is the sum of the variances. The pure interaction term $v_S$ is the component of the total variance arising from the variance of factors in the set $S$ and the mean of factors not in $S$. For $v_{\{p\}}$, this is:
\begin{align*}
     v_{\{p\}} &= \mathrm{Var}_{\text{pure}}(\eta_\mathbf{i} \mid \text{mode } p) \\
     &= \sum_{s=1}^r \mathrm{Var}(\theta^{(p)}_{i_p,s}) \prod_{k \neq p} \left(E[\theta^{(k)}_{i_k,s}]\right)^2 \\
     &= r\sigma_p^2\prod_{k \neq p}\mu_k^2 \;.
\end{align*}

The derivations for other pure terms follow the same principle of summing variances over $r$ independent components.
\end{proof}

\subsubsection{Rank Identifiability Analysis}
The simple monomial structure of the moments leads directly to a closed-form solution for the rank.

\begin{theorem}[Closed-Form Rank Estimator for PARAFAC/CP]
For any two distinct modes $p \neq q$, the rank $r$ is identifiable and given by:
\begin{equation}\label{eq:parafac:closed}
    r = \frac{v_{\{p,q\}} \cdot (E[Y])^2}{v_{\{p\}} \cdot v_{\{q\}}} \;.
\end{equation}
\end{theorem}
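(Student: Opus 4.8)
The plan is to prove the formula by direct substitution of the three monomials furnished by Lemma~\ref{lem:CP-monomials} and to verify that all nuisance hyperparameters cancel, leaving exactly the rank. First I would record the ingredients $(\E[Y])^2 = r^2\prod_{k=1}^M\mu_k^2$, $v_{\{p\}} = r\sigma_p^2\prod_{k\neq p}\mu_k^2$, $v_{\{q\}} = r\sigma_q^2\prod_{k\neq q}\mu_k^2$, and $v_{\{p,q\}} = r\sigma_p^2\sigma_q^2\prod_{k\neq p,q}\mu_k^2$, valid for any fixed pair $p\neq q$, and then assemble the candidate ratio $v_{\{p,q\}}(\E[Y])^2/(v_{\{p\}}v_{\{q\}})$.

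The cleanest way to carry out the cancellation is in the log-linear coordinates of Definition~\ref{def:log_linear_system}, tracking the exponent of each unknown independently. The rank contributes exponent $1+2-1-1 = 1$; the two scale parameters $\sigma_p^2$ and $\sigma_q^2$ each appear once in the numerator and once in the denominator, for net exponent $0$; and for each mean the net exponent of $\mu_k^2$ is $\mathbf{1}[k\neq p,q] + 1 - \mathbf{1}[k\neq p] - \mathbf{1}[k\neq q]$, which one checks is $0$ in each of the three cases $k=p$, $k=q$, and $k\neq p,q$. Consequently the ratio collapses to $r^1 = r$, and the value is independent of the chosen modes.

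To close the identifiability statement, I would observe that the right-hand side depends only on observable second-moment quantities — $\E[Y]$ and the pure interaction terms $v_{\{p\}}, v_{\{q\}}, v_{\{p,q\}}$, all estimable from data through Definition~\ref{def:covariances} — so $r$ is expressed as a function of observables alone. In the language of Proposition~\ref{prop:identifiability}, the linear combination used above is precisely the row that eliminates every log-mean and log-scale and isolates $\log r$, yielding a one-dimensional reduced system with the full-rank design matrix $[1]$. The main point to watch — and the only genuine content — is the mean bookkeeping: one must confirm that the $+1$ supplied by $(\E[Y])^2$ for every mode is offset correctly in all three index cases, after which the result is immediate; there is no substantive obstacle beyond this verification.
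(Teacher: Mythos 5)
Your proposal is correct and follows essentially the same route as the paper: substitute the monomials from Lemma~\ref{lem:CP-monomials} into the ratio and check that all nuisance parameters cancel, leaving $r$. Your exponent-counting bookkeeping in log-coordinates is just a tidier presentation of the same cancellation the paper performs by direct algebraic simplification, and all three index cases check out.
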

\begin{proof}
We substitute the monomials from Lemma~\ref{lem:CP-monomials} into the expression:
\begin{align*}
\frac{v_{\{p,q\}}\cdot(E[Y])^{2}}{v_{\{p\}}\cdot v_{\{q\}}}
&=\frac{\bigl[r\sigma_p^2\sigma_q^2\prod_{k\neq p,q}\mu_k^2\bigr]\cdot\bigl[r^2\prod_{k=1}^{M}\mu_k^2\bigr]}{\bigl[r\sigma_p^2\prod_{k\neq p}\mu_k^2\bigr]\cdot\bigl[r\sigma_q^2\prod_{k\neq q}\mu_k^2\bigr]}\\[2mm]
&=\frac{r^3 \left(\prod_{k\neq p,q}\mu_k^2\right) \left(\mu_p^2\mu_q^2\prod_{k\neq p,q}\mu_k^2\right) }{r^2\left(\mu_q^2\prod_{k\neq p,q}\mu_k^2\right)\cdot\left(\mu_p^2\prod_{k\neq p,q}\mu_k^2\right)} \\
&= r.
\end{align*}
Every nuisance parameter cancels, yielding the rank itself. Hence, $r$ is identifiable and the formula is exact.
\end{proof}

\subsection{The Tensor Train Model}\label{sec:tt:main}

The Tensor Train (TT) model's topology is an open, asymmetric chain of interactions \citep{oseledets2011tensor_tt}.
This sequential structure breaks the symmetries that cause non-identifiability in the Tucker model,
providing a sufficient number of distinct algebraic constraints to solve for all of the model's internal ranks.

\subsubsection{Model Definition and Its Moments}

\begin{definition}[The Tensor-Train model]
For an order-$M$ tensor, the rate $\etatensor$ of a Tensor Train decomposition with TT-ranks $(r_1, \dots, r_{M-1})$ follows
$$\eta_{i_1 \dots i_M} = \sum_{k_1=1}^{r_1} \dots \sum_{k_{M-1}=1}^{r_{M-1}} \theta^{(1)}_{1, i_1, k_1} \theta^{(2)}_{k_1, i_2, k_2} \cdots \theta^{(M)}_{k_{M-1}, i_M, 1} .$$
\end{definition}

The ranks to be estimated are the internal bond dimensions $(r_1, \dots, r_{M-1})$,
which we refer to as \textit{interior ranks}.
A full derivation of the general TT moments involves a series of (repeated) algebraic calculations, which we present in Appendix~\ref{app:tensor_train}.

However, their algebraic structure contains the necessary asymmetry for rank identification. The key is that pure interaction terms involving non-contiguous modes (e.g., $\{p-1, p+2\}$) have a different dependence on the ranks than terms involving contiguous modes (e.g., $\{p, p+1\}$).

\begin{lemma}[Monomial Moment Structure of Tensor-Train]\label{lem:TT-monomials}
The pure-interaction monomials for the TT model are functions of the ranks and prior moments. Their exact form depends on the chosen modes in $S$,
but they all exhibit a crucial asymmetry.
For an interior rank $r_p$,
the relevant monomials for its identification are:
\begin{align*}
v_{{p+1}}&=r_{p}r_{p+1}r_{p+2}\mu_p^2\sigma_{p+1}^2\mu_{p+2}^2\prod_{k\neq p,p+1,p+2}\mu_k^2 \;,\\
v_{{p,p+1}}&=r_{p}r_{p+1}r_{p+2}\sigma_p^2\sigma_{p+1}^2\mu_{p+2}^2\prod_{k\neq p,p+1,p+2}\mu_k^2 \;,\\
v_{{p-1,p+2}}&=r_{p-1}r_{p}r_{p+1}r_{p+2}\sigma_{p-1}^2\mu_p^2\sigma_{p+2}^2\prod_{k\neq p-1,p,p+2}\mu_k^2 \;,\\
v_{{p-1,p,p+2}}&=r_{p-1}r_{p}r_{p+1}r_{p+2}\sigma_{p-1}^2\sigma_p^2\sigma_{p+2}^2\prod_{k\neq p-1,p,p+2}\mu_k^2 \;.
\end{align*}

Notably, the following identities hold:
\begin{align}
    \frac{v_{\{p,p+1\}}}{v_{\{p+1\}}} &= \frac{\sigma_p^2}{\mu_p^2} \;, \\
    \frac{v_{\{p-1,p,p+2\}}}{v_{\{p-1,p+2\}}} &= \frac{1}{r_p}\frac{\sigma_p^2}{\mu_p^2} \;.
\end{align}
\end{lemma}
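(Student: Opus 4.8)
The plan is to reduce the statement to a second-moment computation on the rate tensor $\etatensor$ and then extract the pure interaction terms by Möbius inversion, exactly as prescribed by Definitions~\ref{def:setting} and~\ref{def:covariances}. Since the observation model satisfies $\E[Y_\mathbf{i}\mid\etatensor]=\eta_\mathbf{i}$ with conditionally independent entries, for $\mathbf{i}\neq\mathbf{j}$ we have $C_S=\Cov(Y_\mathbf{i},Y_\mathbf{j})=\Cov(\eta_\mathbf{i},\eta_\mathbf{j})$, so it suffices to compute $\E[\eta_\mathbf{i}\eta_\mathbf{j}]$ and $\E[\eta_\mathbf{i}]$ for two indices whose shared-mode set is $S$. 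The mean is immediate, $\E[\eta_\mathbf{i}]=\big(\prod_{m}\mu_m\big)\prod_{p}r_p$, since every bond index is summed freely. For the second moment I would use a transfer-operator (doubled-chain) argument: writing each core as the matrix $A^{(m)}[i_m]$ with entries $\theta^{(m)}_{k_{m-1},i_m,k_m}$, the product $\eta_\mathbf{i}\eta_\mathbf{j}$ contracts the doubled cores $A^{(m)}[i_m]\otimes A^{(m)}[j_m]$ along the doubled bonds. Expectation factorizes over modes (factors are independent across $m$), and the location-scale i.i.d. assumption gives each doubled core expectation $\mu_m^2\,J+\sigma_m^2\,[\,i_m=j_m\,]\,P$, where $J$ is the all-ones operator on the doubled bond space and $P$ the projector onto its diagonal.

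The key algebraic fact, which I would isolate next, is that both operators are rank one: $J=\mathbf{1}\mathbf{1}^{\top}$ and $P=\mathbf{d}\mathbf{d}^{\top}$ with $\mathbf{1}_{(a,b)}=1$ and $\mathbf{d}_{(a,b)}=\delta_{ab}$. Hence a bond of size $r_m$ contributes the scalar $\mathbf{1}^{\top}\mathbf{1}=r_m^2$ when both adjacent cores take their mean ($J$) part, and $\mathbf{1}^{\top}\mathbf{d}=\mathbf{d}^{\top}\mathbf{d}=r_m$ otherwise. I would then expand the whole contraction over the binary choice, at each mode $m\in S$, of the mean part $J$ versus the coincidence part $\sigma_m^2 P$. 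Each choice $T\subseteq S$ yields a monomial with prior-moment factor $\prod_{m\in T}\sigma_m^2\prod_{m\notin T}\mu_m^2$ and rank weight $\prod_{m}r_m^{\,e_m(T)}$, where $e_m(T)=2$ if both cores adjacent to bond $m$ took $J$ and $e_m(T)=1$ if at least one adjacent core took $P$. Because this monomial depends only on $T$ (not on the ambient $S$), matching the expansion $C_S=\sum_{\emptyset\neq T\subseteq S}(\cdot)$ against the defining decomposition $C_S=\sum_{\emptyset\neq S'\subseteq S}v_{S'}$ of Definition~\ref{def:covariances} identifies $v_S$ with the single all-coincidence term $T=S$. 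Reading off this term gives the displayed monomials: factor part $\prod_{m\in S}\sigma_m^2\prod_{m\notin S}\mu_m^2$, and rank part governed by which bonds are pinned (adjacent to $S$, exponent $1$) versus free (exponent $2$).

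With the four monomials in hand, the two ratio identities follow by direct division, the nuisance means, variances, and far-away bond powers cancelling. The main obstacle is precisely the rank-exponent bookkeeping, and this is where the claimed asymmetry lives. For a contiguous pair such as $S=\{p,p+1\}$ the pinned bonds form a single contiguous block; for a non-contiguous pair such as $S=\{p-1,p+2\}$ the core at mode $p$ takes its mean part, so bond $p$ remains free with exponent $2$ because the coincidence pattern cannot propagate through a mean-valued intervening core. Adding mode $p$ (passing to $S=\{p-1,p,p+2\}$) pins bond $p$ and drops its exponent from $2$ to $1$, which is exactly the origin of the extra factor $1/r_p$ in the second identity. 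The delicate part of the argument is therefore verifying, for each relevant set, which bonds survive as free after the contraction—the non-contiguous case being the subtle one—and confirming through the Möbius step that $v_S$ isolates the all-coincidence configuration, so that these and only these exponents appear. This asymmetry between contiguous and non-contiguous sharing is what the open-chain topology supplies and what ultimately makes the interior ranks identifiable, in contrast to the degeneracy of the Tucker model.
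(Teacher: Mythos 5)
Your derivation of the general monomial formula is correct and is essentially the same argument the paper uses in its appendix: expand $\E[\eta_{\ii}\eta_{\jj}]$ as a double sum over the two latent tuples, note that each mode contributes $\mu_m^2$ or $\sigma_m^2$ according to whether it lies in the sharing set, count how many independent bond sums survive, and let M\"obius inversion isolate the all-coincidence term. Your transfer-operator packaging with the rank-one operators $J=\mathbf{1}\mathbf{1}^{\top}$ and $P=\mathbf{d}\mathbf{d}^{\top}$ is a clean way to justify the exponent rule $e_\ell(S)=2-\mathbf{1}\{\ell\in S\ \text{or}\ \ell+1\in S\}$, and the observation that each monomial depends only on $T$ (so the identification $v_T=$ all-coincidence term is immediate) is exactly right.

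The gap is in the final step, which you assert rather than verify: ``the two ratio identities follow by direct division.'' The first one does not follow from your own (correct) exponent rule. For $S=\{p+1\}$ the pinned bonds are $\{p,p+1\}$, so bond $p-1$ is free and carries $r_{p-1}^{2}$; for $S=\{p,p+1\}$ mode $p$ additionally pins bond $p-1$, which then carries $r_{p-1}^{1}$. Hence
\[
\frac{v_{\{p,p+1\}}}{v_{\{p+1\}}}=\frac{1}{r_{p-1}}\,\frac{\sigma_p^2}{\mu_p^2},
\]
not $\sigma_p^2/\mu_p^2$ as claimed (a direct check with $M=4$, $p=2$ gives $v_{\{2,3\}}=r_1r_2r_3\mu_1^2\sigma_2^2\sigma_3^2\mu_4^2$ and $v_{\{3\}}=r_1^2r_2r_3\mu_1^2\mu_2^2\sigma_3^2\mu_4^2$). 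You articulate precisely this mechanism---adding a mode to $S$ pins an adjacent bond and drops its exponent from $2$ to $1$---when explaining the second identity, but miss that the same mechanism acts on bond $p-1$ when passing from $\{p+1\}$ to $\{p,p+1\}$. The second identity survives because in $\{p-1,p+2\}$ bond $p-1$ is already pinned by mode $p-1$, so adding mode $p$ changes only bond $p$. Your general formula likewise contradicts the four displayed monomials of the lemma (it gives $v_{\{p,p+1\}}\propto r_{p-1}r_pr_{p+1}$ times squared far bonds, not $r_pr_{p+1}r_{p+2}$). So the proof as written cannot establish the statement; carrying out the division you skipped would have exposed the leftover $r_{p-1}$ factor. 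For what it is worth, the paper's own appendix has the same internal inconsistency: the explicit monomials it lists are incompatible with its first interior identity, and its justification that ``the link exponents coincide for every $\ell$'' fails at $\ell=p-1$.
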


\begin{proof}[Informal Proof]
The TT rate is a sum over products of core tensor entries. Each pure-interaction term is obtained by calculating the covariance, which involves a double summation over all internal rank indices. The exponents of the ranks in the final monomial count how many of these summations remain independent after constraints from the interaction set are applied. The chain-like structure ensures that non-contiguous interactions constrain the summations differently than contiguous ones, producing the necessary asymmetry
---see Appendix~\ref{app:tensor_train}
for detailed derivations.
\end{proof}

\subsubsection{Rank Identifiability Analysis}
The asymmetry of the moment structure allows us to construct a solvable system for each interior rank.

\begin{theorem}[Closed-Form Interior-Rank Estimator for TT]
For any interior rank $1 < p < M-1$, $r_p$ is identifiable and given by
\begin{equation}\label{eq:tt:closed}
    r_p = \frac{v_{\{p,p+1\}} \cdot v_{\{p-1,p+2\}}}{v_{\{p+1\}} \cdot v_{\{p-1,p,p+2\}}} \;.
\end{equation}
\end{theorem}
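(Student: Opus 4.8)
The plan is to derive \eqref{eq:tt:closed} by a short cancellation that reuses the two asymmetry identities already isolated in Lemma~\ref{lem:TT-monomials}, rather than by manipulating the four monomials individually. The crucial structural fact is that the proposed estimator is precisely the quotient of those two identities, so the prior-moment nuisances collapse automatically.

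First I would regroup the right-hand side of \eqref{eq:tt:closed} so that the numerator and denominator terms are paired by the shared-index sets that differ only by the inclusion of mode $p$:
$$\frac{v_{\{p,p+1\}}\, v_{\{p-1,p+2\}}}{v_{\{p+1\}}\, v_{\{p-1,p,p+2\}}} = \frac{v_{\{p,p+1\}}}{v_{\{p+1\}}} \cdot \frac{v_{\{p-1,p+2\}}}{v_{\{p-1,p,p+2\}}}.$$
The first factor is the first identity of Lemma~\ref{lem:TT-monomials}, equal to $\sigma_p^2/\mu_p^2$. The second factor is the reciprocal of the second identity, hence equal to $r_p\,\mu_p^2/\sigma_p^2$. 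Multiplying, the ratio $\sigma_p^2/\mu_p^2$ cancels against its inverse and the product equals $r_p$ exactly. Since the result does not depend on any $\mu_k$, any $\sigma_k^2$, or on any other rank, the nuisance parameters have been eliminated in the sense of Proposition~\ref{prop:identifiability}, so the reduced system isolates $r_p$ and the rank is identifiable from second moments alone. An equivalent but more tedious verification is to substitute the four explicit monomials and cancel the common rank prefactors and $\mu/\sigma$ patterns term by term; both routes agree.

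The only point requiring care is index bookkeeping at the chain boundaries: the estimator invokes the modes $p-1,\,p,\,p+1,\,p+2$, which are all present and distinct precisely when $1<p<M-1$. This is exactly the interior hypothesis, and it is why the two endpoints $r_1$ and $r_{M-1}$ need the separate expressions recorded in Table~\ref{tab:main-results}, each obtained by the same cancellation against whichever neighbouring modes remain available near the ends of the chain. I expect the genuine difficulty to lie not in this final algebra but upstream, in establishing the monomial forms and ---above all--- the asymmetry identities of Lemma~\ref{lem:TT-monomials}: it is there that the open-chain topology enters, through the count of internal rank summations that survive when mode $p$ is or is not included in the shared-variance set, a computation deferred to Appendix~\ref{app:tensor_train}.
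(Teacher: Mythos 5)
Your proposal is correct and follows essentially the same route as the paper: the paper's proof also obtains \eqref{eq:tt:closed} by dividing the two asymmetry identities of Lemma~\ref{lem:TT-monomials} so that the coefficient-of-variation term $\sigma_p^2/\mu_p^2$ cancels, leaving $r_p$ expressed purely in observable moments. Your added remarks on the interior hypothesis $1<p<M-1$ and on the real work residing in the derivation of the monomial identities match the paper's structure, which likewise defers that computation to Appendix~\ref{app:tensor_train}.
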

\begin{proof}
The proof follows by dividing the two identities from Lemma~\ref{lem:TT-monomials}.
The unknown coefficient of variation term, $\sigma_p^2/\mu_p^2$, cancels, resulting in the rank equation:
$$ r_p = \frac{v_{\{p,p+1\}}/v_{\{p+1\}}}{v_{\{p-1,p,p+2\}}/v_{\{p-1,p+2\}}} = \frac{v_{\{p,p+1\}} \cdot v_{\{p-1,p+2\}}}{v_{\{p+1\}} \cdot v_{\{p-1,p,p+2\}}} .$$
Since the rank can be expressed purely in terms of observable moments, it is identifiable.
\end{proof}

\begin{remark}[The Tensor Ring Model]
A generalization of the TT, known as the Tensor Ring (TR) model, can be formed by starting from the TT and connecting its beginning and end, creating a closed loop with no limitations on the outer rank dimension.
We prove the identifiability of such TR model, and present its corresponding rank estimation formulas in Appendix ~\ref{app:tensor_ring}.
\end{remark}

\Section{The Tensor-Rank Estimation Pipeline}
\label{sec:formal_pipeline}

We now present a complete, algorithmically specified, robust pipeline for applying the theoretical results of this work:
it consists of estimating from observed tensor-data
the pure-interaction terms $\{\hat{v}_S\}$ 
and the estimated mean $\hat{E}[Y]$,
to replace them in the specific closed-form expression
derived in Section~\ref{sec:rank_identification} for the ranks of each identifiable model. 

Recall that all rank estimation expressions are written as a division,
\ie the rank $r_p$ is always calculated via an equation of the $\frac{\text{Num}_p}{\text{Den}_p}$ form, with $\text{Num}_p$ the numerator and $\text{Den}_p$ the denominator;
where expressions are adjusted for each specific model.

Our tensor-rank estimation procedure is based on
(1) computing an unbiased, single-pass estimation of the aforementioned moments;
(2) a bootstrap procedure to provide a distribution of empirical (regularized) rank estimates;
and (3) a summarization step to yield robust point and uncertainty-aware rank estimates.

\paragraph{Inner loop: \textit{The Unbiased Moment Estimation}.}
The fundamental step is to compute unbiased estimates of pure interaction terms $\{\hat{v}_S\}$ from the observed tensor-data $\mathcal{Y}$:
\begin{enumerate}[noitemsep, topsep=0pt, left=4pt]
    \item \textbf{The Global Mean:} Compute the empirical global mean of the tensor, $\hat{E}[Y] = |\mathcal{Y}|^{-1} \sum_{\mathbf{i}} Y_\mathbf{i}$.
    \item \textbf{Total Covariances:} For every non-empty pattern of shared indices $S \subseteq \{1, \dots, M\}$, compute the sample covariance $\hat{C}_S$ by drawing a (large) number of pairs of entries $(\mathbf{a}_j, \mathbf{b}_j)$ that match exactly on the index set $S$, \ie
    \[ \hat{C}_S = (N_{\text{cov}} - 1)^{-1} \sum_{j=1}^{N_{\text{cov}}} (Y_{\mathbf{a}_j} - \hat{E}[Y])(Y_{\mathbf{b}_j} - \hat{E}[Y]) \;. \]
    \item \textbf{Pure Interactions:} Calculate the pure interaction terms from the total covariances using the exact inclusion-exclusion principle:
    \[ \hat{v}_S = \sum_{S^\prime \subseteq S} (-1)^{|S|-|S^\prime|} \hat{C}_{S^\prime} \;.\]
\end{enumerate}

\paragraph{Outer loop: \textit{The Bootstrapped Distribution of Rank Estimates}.}
To ensure robustness and quantify the estimation uncertainty, we repeat the inner moment estimation procedure described above on bootstrap replicates of the observed tensor-data.

For $b = 1, \dots, B$:
\begin{enumerate}[noitemsep, topsep=0pt, left=4pt]
    \item A bootstrap sample $\Ytensor^{(b)}$ is generated from the original tensor $\Ytensor$, using a block bootstrap on tensor slices to preserve potential dependencies.
    \item The complete ``\textit{Inner loop}'' estimation procedure is applied to $\Ytensor^{(b)}$ to obtain a set of bootstrap pure term estimates, $\{\hat{v}_S\}^{(b)}$.
    \item These bootstrap moments are plugged into the appropriate closed-form formula described in Section~\ref{sec:rank_identification} to yield the numerator $\text{Num}_p^{(b)}$ and denominator $\text{Den}_p^{(b)}$ for each bootstrapped replicate of the identifiable rank $r_p^{(b)}$:
    \begin{itemize}
        \item PARAFAC/CP (order M, single rank r): for $p \neq q$, use \autoref{eq:parafac:closed}.
        \item TT (order M, ranks $r_1, \dots, r_{M-1}$): for interior ranks $1 < p < M-1$, use \autoref{eq:tt:closed}.
    \end{itemize}
    \item Each bootstrap estimate is regularized to prevent numerical instability.
    For each replicate $b$, we propose a regularized rank
    $$
    \hat{r}_p^{(b)} = \mathrm{sign}(\text{Den}_p^{(b)}) \cdot \frac{\text{Num}_p^{(b)}}{|\text{Den}_p^{(b)}| + \varepsilon_p} .
    $$
    The shrinkage term $\varepsilon_p = 1.96 \cdot \mathrm{SE}(\{\text{Den}_p^{(j)}\}_{j=1}^B)$ is the data-driven standard error of the denominator, estimated from all bootstrap replicates.
\end{enumerate}

\paragraph{The Final Rank Estimate.}
We provide point rank estimates with bootstrapped confidence intervals based on the collection of regularized estimates $\{\hat{r}_p^{(b)}\}_{b=1}^B$:
\begin{itemize}[noitemsep, topsep=0pt, left=4pt]
    \item \textbf{Point Estimate:} The median is used for its robustness to outliers, $\hat{r}_p^{\text{med}} = \text{median}\{ \hat{r}_p^{(b)} \}$.
    \item \textbf{Confidence Interval:} The $(1-\alpha)$ percentile interval is computed from the bootstrap distribution:
    $[\hat{r}_p^{\text{low}}, \hat{r}_p^{\text{up}}] = \text{percentile}(\{ \hat{r}_p^{(b)} \}, [\alpha/2, 1-\alpha/2])$.
\end{itemize}

\section{Empirical Validation}
\label{sec:empirical_validation}

We validate the theoretical findings for the identifiable models using simulation studies, confirming that the closed-form estimators accurately recover the true latent ranks when applied to finite tensor data.

We present results for the PARAFAC/CP model in Section~\ref{ssec:sim_parafac}, and the empirical studies for TT and TR in Appendix~\ref{app:sec:further}.
In addition, we present in Section~\ref{ssec:real_PINCAT} a comparison of the proposed rank estimation procedure and Bayesian inference alternatives on
the real-world PINCAT MRI dataset \citep{candes2013unbiased}\footnote{Link to the dataset \url{https://candes.su.domains/software/SURE/data.html}}.

The codebase with the tensor rank estimation pipeline to replicate the experiments is available in \href{https://github.com/iurteagalab/tensor-ranks}{https://github.com/iurteagalab/tensor-ranks}.

\subsection{Simulations of the PARAFAC/CP model}
\label{ssec:sim_parafac}

\paragraph{Generative Process.}
We generated synthetic tensor-data according to the PARAFAC/CP model, where elements of the latent factors ($\theta^{(p)}$) were drawn independently from Gamma distributions. The shape and scale parameters of the Gamma priors were chosen to control the factor-specific means and variances ($\mu_p, \sigma_p^2$) while ensuring the resulting rate tensor $\etatensor$ was strictly positive. The final observed tensor elements $Y_{\mathbf{i}}$ were then drawn from a Poisson distribution with the corresponding rate $\eta_{\mathbf{i}}$ and $ Y_{\mathbf{i}} \sim \text{Poisson}(\eta_{\mathbf{i}}) \;.$

\paragraph{Experimental Design.}
We study the precision of the constructed pipeline by generating PARAFAC/CP model data in a wide range of conditions:
we vary the tensor order ($M \in \{3,4\}$),
the tensor dimensions ($N_p \in \{ (100, 100, 100), (50, 50, 50, 50) \}$),
and the ground-truth latent ranks ($r_p \in \{ 5, \ldots, 65 \}$).

For each case, we aggregate results over multiple independent runs,
\ie realizations of both the data and the latents,
using a set of values for the prior described in Appendix~\ref{app:sec:further}.
For each run, a median rank was estimated from a distribution of bootstrap replicates, and the plots show the mean of these median estimates.



\begin{figure}[t]
    \centering

    \framebox[0.5\textwidth][c]{  \centering  \includegraphics[width=0.45\textwidth]{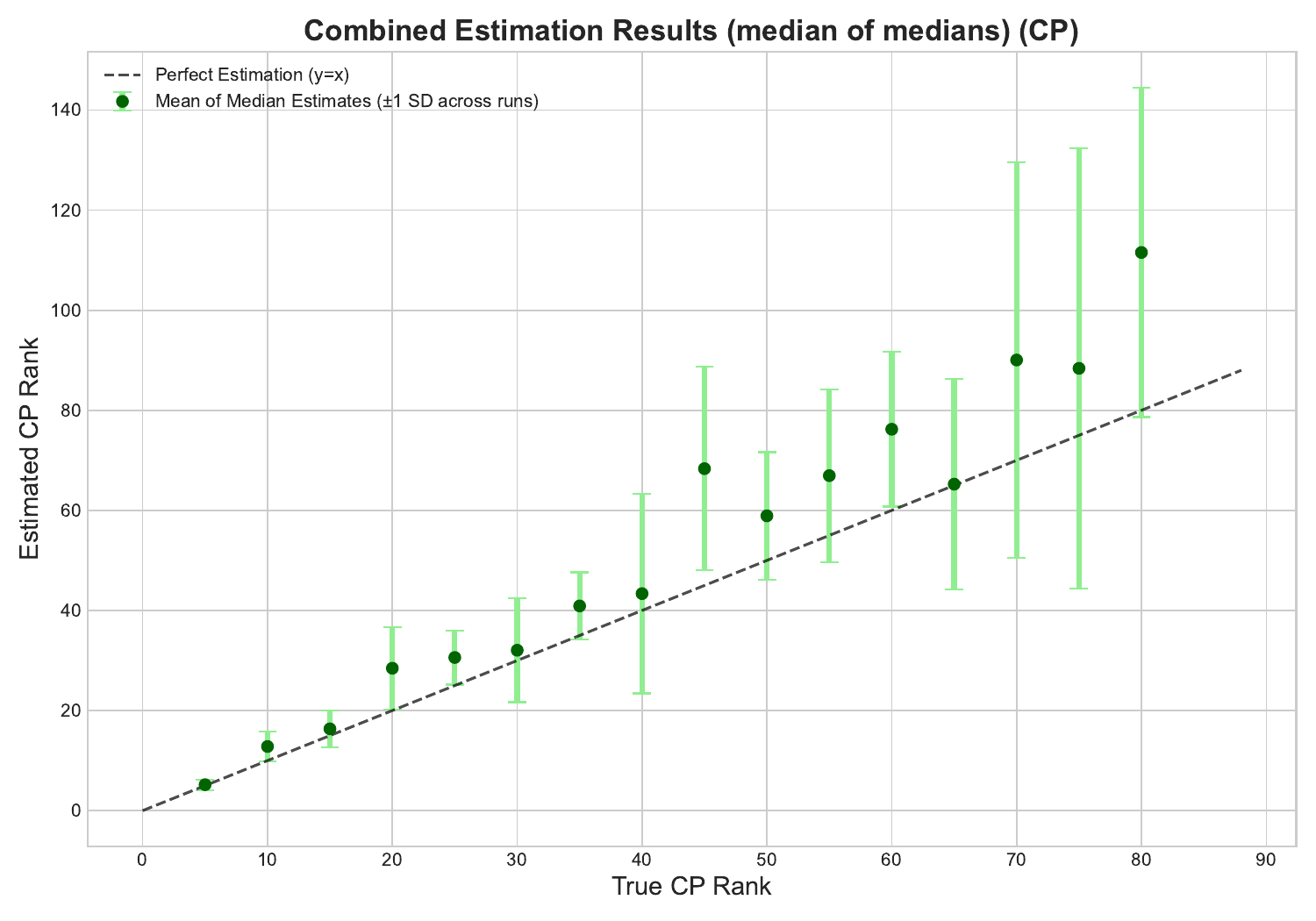}}
    \caption{Combined estimation results for the PARAFAC/CP model. The dashed line represents perfect estimation ($y=x$). Green dots show the mean of median estimates across multiple runs, with error bars indicating $\pm1$ standard deviation.}
    \label{fig:cp_results}
    \vspace{-3ex}
\end{figure}
\paragraph{Rank estimation results.} We provide strong empirical support of the theoretical identifiability and estimation accuracy of the PARAFAC/CP model's ranks.
Figure~\ref{fig:cp_results} shows the estimated CP rank versus the true CP rank, aggregated across all simulation runs.
The mean of the estimated ranks (green dots) closely tracks the line of perfect estimation. For true ranks ranging from low values up to 90, the estimator correctly identifies the latent dimension and follows the linear trend.

Best performance (both in closer mean and reduced estimation uncertainty) is achieved for reduced ($<25$) rank values.
We observe a slight positive bias and an increase in estimation variance (error bars) as the true rank grows. This is expected, as higher-rank structures are more complex, and their corresponding moment-based signatures are more sensitive to the statistical noise inherent in finite data sampling. Overall, the results confirm that the estimator is effective and that the CP rank is practically identifiable.

\paragraph{Comparison with AIC/BIC Rank Estimation.}
To benchmark the efficacy of the proposed prior-predictive (PP) estimator,
we evaluate it alongside information-criteria approaches (AIC/BIC) to rank estimation. 
Notably, the variational CP model coupled with AIC/BIC scoring rules consistently collapsed to the lowest available rank (often $R=2$) in the grid search of our experiments,
across the simulated CP datasets under various prior assumptions (Gamma, Uniform, Gaussian). 

In contrast, the PP estimator consistently recovered the scale of the ground-truth rank across non-zero mean scenarios.\footnote{
If all factors have zero-mean priors,
all pure interaction terms vanish and hence, tensor ranks are not identifiable from second-order moments.
}

\subsection{Real-world tensor-data experiment.}
\label{ssec:real_PINCAT}
We additionally assess rank estimation performance on the real-world PINCAT MRI dataset,
by comparing our automatically determined rank ($\hat{r}_{\text{PP}} = 70$) with the rank values obtained by explicit Bayesian inference methods described by \citet{10.3389/frai.2021.668353}.
Specifically, we compare our approach to the ranks obtained via BFCP and Hamiltonian Monte Carlo (HMC) methods
using a hold-out of 10\% of the data to measure the predictive performance.

Table~\ref{tab:pincat_mri} illustrates that our computationally inexpensive PP estimator matches the performance and uncertainty calibration of full Bayesian procedures.

\begin{table}[h]
\centering
\caption{PINCAT MRI rank estimation and predictive performance against Bayesian baselines.}
\label{tab:pincat_mri}
\resizebox{\columnwidth}{!}{
\begin{tabular}{lccc}
\toprule
Method & Rank & Test RMSE & Test SD \\
\midrule
\textbf{Prior Predictive (ours)} & 70 & 0.6086 & 0.4876 \\
BFCP \citep{10.3389/frai.2021.668353} & 17 & 0.6087 & 0.4877 \\
HMC \citep{10.3389/frai.2021.668353} & 65 & 0.6086 & 0.4876 \\
\bottomrule
\end{tabular}
}
\end{table}

\paragraph{Robustness to Missing Data.}
A crucial aspect of real-world tensor analysis is the presence of missing entries.
We stress-tested our PP estimator under Missing Completely At Random (MCAR) conditions,
applying random masks incrementally (up to 50\% data missingness) on the PINCAT dataset. 

As the percentage of observed data decreases, the variance of the estimator increases (naturally matching the reduction in valid paired observations, $N_{\text{cov}}$).
We observe in the results presented in Table~\ref{tab:missing_mri} that the estimated rank shrinks gradually due to the reduced effective information:
the PP estimator degrades smoothly and predictably, tracing the theoretical finite-sample boundaries.

\begin{table}[h]
\centering
\caption{PINCAT MRI predictive and rank estimation performance under varying MCAR missingness.}
\label{tab:missing_mri}
\begin{tabular}{ccc}
\toprule
Observed \% & Test RMSE & Estimated Rank ($\hat{r}_{\text{PP}}$) \\
\midrule
90\% & 0.6086 & 106.9 \\
80\% & 0.6414 & 83.0 \\
70\% & 0.6781 & 66.4 \\
50\% & 0.7609 & 60.9 \\
\bottomrule
\end{tabular}
\end{table}





\Section{Discussion}

We present a prior predictive moment matching analysis framework revealing that rank identifiability in probabilistic tensor models hinges on the latent factors' interaction topology.

More precisely, we demonstrate that linear (additive) structures like the PARAFAC/CP model yield monomial moments with clean algebraic cancellations, enabling closed-form estimators. Chain-like topologies (TT) introduce asymmetry via contiguous/non-contiguous interactions, providing independent equations per rank that also enable their empirical estimation. The cyclic structure in TR maintains solvability through invertible circulant matrices, despite its symmetry.

In contrast, multiplicative ``hub-and-spoke'' topologies (\eg Tucker) induce linear dependencies, rendering the reduced system degenerate ---we speculate that higher moments or correlation structure in the core tensor could render the models' ranks identifiable. 

The presented topology-based lens offers a principled tensor model analysis toolbox: for identifiable models, low-order moments suffice for exact rank recovery; for others, heuristics like matricization (see Appendix~\ref{app:tucker}) implicitly reduce to CP-like problems. 

Limitations of the presented work include the i.i.d. prior assumption
---correlated prior-based moments remain log-linearizable, but they require an augmented parameterization, analogous to matrix results by \citep{zhu2025prior}---
and the focus on second-order moments only
---we hypothesize extensions to higher-order moments are possible and would broaden the applicability of the presented analysis.

Empirically, our pipeline is shown to be robust to moderate noise, but sensitive to extreme sparsity and data-missingness,
which requires both further numerical considerations and robustness mechanisms.
A natural next-step is the application of the proposed framework to other real-data sets (e.g., on fMRI tensors).

More ambitious future work might consider integration of the prior predictive moment matching approach into variational inference for scalable tensor ML, or exploring non-Euclidean topologies (e.g., hyperbolic embeddings).

A natural generalization of our framework is to consider generic tensor factorizations defined by arbitrary tensor networks~\citep{tensornet_ftml}. In this case, the latent ranks would correspond to the bond dimensions along network edges, and identifiability would depend on the network's topology (e.g., tree-like vs. loopy structures).

\subsubsection*{Acknowledgements}
We would like to thank Caterina de Bacco, Aaron Schein, and Tomasz Ku\'{s}mierczyk for their insightful dialog and constructive feedback provided during the early stages of this research.

E. da Silva. and I. Urteaga were supported by ``la Caixa'' foundation’s LCF/BQ/PI22/11910028 award,
as well as the  Basque Government through the BERC 2022-2025 program
and the Ministry of Science and Innovation's BCAM Severo Ochoa accreditation
CEX2021-001142-S/MICIN/AEI/10.13039/501100011033.

D. Mesquita acknowledges the support by the Fundação Carlos Chagas Filho de Amparo à Pesquisa do Estado do Rio de Janeiro (FAPERJ) (SEI-260003/020348/2025, SEI-260003/020694/2025) and the Conselho Nacional de Desenvolvimento Científico e Tecnológico (CNPq) (404336/2023-0, 305692/2025-9). 

E. da Silva's work is funded by Portuguese national funds through FCT – Foundation for Science and Technology, I.P., within the scope of the research unit UID/00326 - Centre for Informatics and Systems of the University of Coimbra.

I. Urteaga also acknowledges the support of Grant RYC2023-045922-I funded by MICIU/AEI/10.13039/501100011033 and by ESF+. 

\bibliography{biblio}
\bibliographystyle{abbrvnat}

\newpage
\clearpage
\section*{Checklist}



\begin{enumerate}

  \item For all models and algorithms presented, check if you include:
  \begin{enumerate}
    \item A clear description of the mathematical setting, assumptions, algorithm, and/or model. [Yes]
    \item An analysis of the properties and complexity (time, space, sample size) of any algorithm. [Yes]
    \item (Optional) Anonymized source code, with specification of all dependencies, including external libraries. [Yes]
  \end{enumerate}

  \item For any theoretical claim, check if you include:
  \begin{enumerate}
    \item Statements of the full set of assumptions of all theoretical results. [Yes]
    \item Complete proofs of all theoretical results. [Yes] We include some proofs in the main text and other in the appendix.
    \item Clear explanations of any assumptions. [Yes]     
  \end{enumerate}

  \item For all figures and tables that present empirical results, check if you include:
  \begin{enumerate}
    \item The code, data, and instructions needed to reproduce the main experimental results (either in the supplemental material or as a URL). [Yes]
    \item All the training details (e.g., data splits, hyperparameters, how they were chosen). [Yes]
    \item A clear definition of the specific measure or statistics and error bars (e.g., with respect to the random seed after running experiments multiple times). [Yes]
    \item A description of the computing infrastructure used. (e.g., type of GPUs, internal cluster, or cloud provider). [Yes]
  \end{enumerate}

  \item If you are using existing assets (e.g., code, data, models) or curating/releasing new assets, check if you include:
  \begin{enumerate}
    \item Citations of the creator If your work uses existing assets. [Not Applicable]
    \item The license information of the assets, if applicable. [Not Applicable]
    \item New assets either in the supplemental material or as a URL, if applicable. [Not Applicable]
    \item Information about consent from data providers/curators. [Not Applicable]
    \item Discussion of sensible content if applicable, e.g., personally identifiable information or offensive content. [Not Applicable]
  \end{enumerate}

  \item If you used crowdsourcing or conducted research with human subjects, check if you include:
  \begin{enumerate}
    \item The full text of instructions given to participants and screenshots. [Not Applicable]
    \item Descriptions of potential participant risks, with links to Institutional Review Board (IRB) approvals if applicable. [Not Applicable]
    \item The estimated hourly wage paid to participants and the total amount spent on participant compensation. [Not Applicable]
  \end{enumerate}

\end{enumerate}

\clearpage
\appendix
\thispagestyle{empty}

\onecolumn
\aistatstitle{On the Identifiability of Tensor Ranks via Prior Predictive Matching: \\
Supplementary Materials}


\Section{Theoretical Details and Proofs} \label{app:proofs:theory}

This appendix provides complete, self-contained derivations that justify all claims used in the main text while preserving the main text's numbering and statements. We (i) clarify the definitions of observable covariances and pure interaction terms; (ii) give a general counting lemma that turns index-sharing patterns into monomial exponents; and (iii) supply full proofs for Tucker, TT, and TR.

\subsection{Preliminaries: observation model, covariances, and pure terms}
\label{app:prelim}

\paragraph{Notation.}
Let $\Ytensor \in \mathbb{R}^{N_1 \times \dots \times N_M}$ be an observed tensor with $M$ modes. 
A \emph{mode} of an order-$M$ tensor is one of its $M$ dimensions. Let $\mathbf{i} = (i_m)_{m \in [M]} \in \bigtimes_{m=1}^M[N_m]$ be the multi-index from the observed tensor domain, where $[N_m] = \{1, \dots, N_m\}$ . Let $ \mathbf{r}=( r_p )_{p \in [R]} $ be the multi-rank vector of size $R \leq M$. Let the \emph{complete} multi-index be $(i_1, \dots, i_R) \in \bigtimes_{p=1}^{R}  [r_p] $ for the product of \emph{all} $R$ latent rank domains $[r_p]$, and let $\mathbf{i}_{\beta}$ be a \emph{derived} multi-index for latent domains $\beta \in \mathcal{K} \subset [R]$ representing a subset $\mathcal{K}$ of all possible latent dimensions.
For multi-indices $\mathbf{i}, \mathbf{j}$, the shared-mode set is $S(\mathbf{i}, \mathbf{j}) = \{k : i_k = j_k\}$. We use bold sans-serif for tensors ($\etatensor$) and vectors/matrices ($\boldsymbol{\theta}^{(p)}$).
All latent factor multi-indices $\mathbf{i}_p(\mathbf{i},\beta)$ are formed by combining some components of the observed tensor indices $\mathbf{i}$ and some components latent domains $\mathbf{i_\beta}$, and each factor is probabilistically defined as $\theta^{(p)}_{\mathbf{i}_p(\mathbf{i}, \beta)} \sim \pi_p(\mu_p, \sigma_p^2)$,
i.i.d. across indices with $p \in \mathcal{P}$ groups the factors/cores.

\paragraph{Observation model.}
Throughout, conditional on the rate tensor $\etatensor$, entries are conditionally independent:
\[
\E[Y_{\ii}\mid \etatensor]=\eta_{\ii},\qquad 
\Var(Y_{\ii}\mid \etatensor)=\phi(\eta_{\ii}),
\]
for some nonnegative function $\phi(\cdot)$. Hence, for \emph{distinct} entries the \emph{observable cross-covariance} equals the covariance of rates:
\[
\Cov(Y_{\ii},Y_{\jj})=\Cov(\eta_{\ii},\eta_{\jj})\quad \text{if } \ii\neq \jj.
\]
Therefore, all results below are agnostic to the choice of $\phi$ for cross terms and only use $\E[Y]=\E[\eta]$.

\paragraph{Exact-sharing covariances.}
For two multi-indices $\ii,\jj\in \prod_{m=1}^M [N_m]$ and a subset $S\subseteq [M]$, we say the pair \emph{shares exactly $S$} if $i_k=j_k$ for $k\in S$ and $(i_k,j_k)$ are free (not constrained to be equal) for $k\notin S$. The \emph{total observable covariance} for pattern $S$ is
\[
C_S \;:=\; \E\!\left[ \Cov\bigl(Y_{\ii},Y_{\jj}\,\big|\, \text{$\ii,\jj$ share exactly $S$}\bigr) \right],
\]
which depends only on $S$ under the iid factor assumptions.

\paragraph{Pure interaction terms and inclusion–exclusion.}
Define $v_S$ by the Moebius inversion (pure term) over the lattice of subsets:
\begin{align}
C_S \;=\; \sum_{\emptyset\neq S^\prime \subseteq S} v_{S^\prime},
\qquad
v_S \;=\; \sum_{S^\prime\subseteq S} (-1)^{|S|-|S^\prime|}\, {C}_{S^\prime}.
\label{eq:app-pure-def}
\end{align}
In all derivations we work with \emph{exact-sharing} $C_S$ and pure terms $v_S$ defined by \eqref{eq:app-pure-def}.

\paragraph{Augmented interaction sets for Tucker.}
For the Tucker model we sometimes want to distinguish contributions that involve the core tensor $G$ versus factor variances. For clarity, we allow \emph{augmented} symbols such as $v_{\{G\}}$ and $v_{\{G,p\}}$ to denote the pure contribution that is proportional to $\sigma_G^2$ (respectively to $\sigma_G^2\sigma_p^2$) when \emph{all physical modes are shared}. This does not change the observable definition in \eqref{eq:app-pure-def}; it is a bookkeeping device that uniquely decomposes $C_{[M]}$ by variance sources.

\subsubsection{A counting lemma for rank exponents}
\label{app:counting-lemma}

All models considered can be written as
\[
\eta_{\ii} \;=\; \sum_{\beta\in \mathcal{K}} \; \prod_{p\in \mathcal{P}} \theta^{(p)}_{\mathrm{i}_p(\ii,\beta)},
\]
where $\beta$ collects all \emph{latent} indices (their cardinalities are products of the ranks $\{r\}$), and the product ranges over parameter groups $p$ (factors/cores), each drawn iid with $\E[\theta^{(p)}]=\mu_p$ and $\Var(\theta^{(p)})=\sigma_p^2$.

\begin{lemma}[Exponent counting]\label{lem:counting}
Fix a model topology (CP/TT/TR/Tucker), a sharing pattern $S\subseteq [M]$, and consider the second moment
$\E[\eta_{\ii}\eta_{\jj}]$ with $(\ii,\jj)$ constrained to share exactly $S$. Write it as a double sum over two latent index tuples $(\beta,\beta')\in \mathcal{K}\times \mathcal{K}$. Then:
\begin{enumerate}[left=6mm, topsep=2pt,itemsep=2pt]
\item For each parameter group $p$, the factor $\theta^{(p)}$ contributes either a mean-square term $\mu_p^2$ (if the two indices of $\theta^{(p)}$ are independent across $(\beta,\beta')$ or across $(\ii,\jj)$) or a variance term $\sigma_p^2$ (if they coincide).
\item Each latent \emph{link} (summation index) contributes an exponent of $r^\alpha$ where $\alpha\in\{0,1,2\}$ equals the number of \emph{independent} surviving sums across $(\beta,\beta')$ for that link after enforcing the constraints from $S$ and from which factors are in variance mode.
\end{enumerate}
Consequently, every pure term $v_S$ is a \emph{monomial} in $\{r\}\cup\{\mu_p,\sigma_p^2\}$ for CP/TT/TR; for Tucker, $v_{\{G\}}$, $v_{\{p\}}$, and $v_{\{G,p\}}$ are monomials when all physical modes are shared.
\end{lemma}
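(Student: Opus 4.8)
The plan is to prove the counting lemma by analyzing the double sum $\E[\eta_{\ii}\eta_{\jj}]$ term-by-term, tracking how the iid structure of the factors forces each $\theta^{(p)}$ into either mean-square or variance mode, and how the surviving latent summations contribute powers of the ranks. First I would write the second moment explicitly as
\[
\E[\eta_{\ii}\eta_{\jj}]=\sum_{\beta,\beta'\in\mathcal{K}}\;\E\!\Big[\prod_{p\in\mathcal{P}}\theta^{(p)}_{\mathrm{i}_p(\ii,\beta)}\,\theta^{(p)}_{\mathrm{i}_p(\jj,\beta')}\Big].
\]
By the iid assumption across distinct index tuples, the expectation factorizes over the parameter groups $p$, and for each $p$ the paired expectation $\E[\theta^{(p)}_{\mathrm{i}_p(\ii,\beta)}\theta^{(p)}_{\mathrm{i}_p(\jj,\beta')}]$ equals $\sigma_p^2+\mu_p^2$ when the two argument multi-indices coincide and $\mu_p^2$ otherwise. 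The coincidence condition is purely combinatorial: it holds exactly when the physical component of $\mathrm{i}_p$ is shared (the relevant mode lies in $S$) \emph{and} the latent components agree across $\beta,\beta'$. This gives the dichotomy in part~(1).

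Next I would handle the rank exponents by isolating, for each latent link (summation index), the constraints imposed jointly by the sharing pattern $S$ and by which factors sit in variance mode. The key observation is that a latent index $k_\ell$ appears as an argument in exactly the factors adjacent to link $\ell$ in the topology; whether the $\beta$ and $\beta'$ copies of that index are forced equal, forced free, or collapsed to a single sum depends on whether the adjacent factors are in variance mode (forcing agreement) or mean mode (leaving them independent). Counting the number of independent surviving summations for link $\ell$ yields the exponent $\alpha_\ell\in\{0,1,2\}$: two independent sums give $r^2$, one shared sum gives $r^1$, and a fully collapsed/absent sum gives $r^0$. This is exactly the bookkeeping that the tensor-diagram ``free edges'' heuristic in the Notation paragraph anticipates, so I would phrase the count in those diagrammatic terms for clarity.

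To finish, I would argue that each individual $(\beta,\beta')$ configuration consistent with a fixed variance/mean assignment contributes the \emph{same} monomial $\prod_p \mu_p^{2}\text{ or }\sigma_p^2 \cdot \prod_\ell r_\ell^{\alpha_\ell}$, so summing over the configurations merely multiplies by the counting factor already absorbed into the rank exponents. Hence $\E[\eta_{\ii}\eta_{\jj}]=C_S+(\E[\eta])^2$ decomposes into monomials indexed by variance-source subsets, and applying the Möbius inversion \eqref{eq:app-pure-def} extracts the single monomial $v_S$ corresponding to the assignment in which exactly the factors on modes in $S$ are in variance mode. Since inclusion-exclusion is a signed sum of monomials that all share the same $\{r,\mu,\sigma\}$ structure except for the variance/mean toggles on modes in $S$, the cancellations leave a single surviving monomial, establishing that $v_S$ is a monomial for CP/TT/TR (and for the stated Tucker terms when all physical modes are shared). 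The main obstacle I anticipate is the exponent-counting step: showing rigorously that the Möbius alternating sum collapses to one clean monomial rather than a polynomial requires verifying that the rank exponents $\alpha_\ell$ depend only on the variance/mean pattern in a way that factorizes across links — i.e., that no cross-link coupling survives to produce additive (non-monomial) terms. For the loopy TR topology in particular, I would need to check carefully that the cyclic boundary condition does not introduce an extra coupled summation that would break the clean monomial form, which is where the directed in/out link asymmetry noted after \eqref{eq:tr:moments} becomes essential.
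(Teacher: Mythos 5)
Your proposal follows essentially the same route as the paper's proof: expand $\E[\eta_{\ii}\eta_{\jj}]$ as a double sum over latent tuples, factorize the expectation across parameter groups by independence, classify each factor as mean-square versus variance mode according to index coincidences induced by $S$, count surviving independent summations per link to get the rank exponents, and isolate the single monomial $v_S$ via Möbius inversion. Your additional attention to why the alternating inclusion--exclusion sum collapses to one monomial (and to the cyclic coupling in TR) fills in details the paper's terse proof leaves implicit, but it is the same argument.
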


\begin{proof}
Expand $\E[\eta_{\ii}\eta_{\jj}]$ into a double sum over latent indices for the two entries. Independence across different latent tuples makes the expectation factor across parameter groups $p$. Whether the same random variable appears twice (giving $\sigma_p^2$) or two independent copies (giving $\mu_p^2$) is determined by index equalities induced by $S$ and by whether the two latent index tuples coincide on the arguments of $\theta^{(p)}$. The number of independent sums that remain for each latent link determines the power of the corresponding rank. Subtracting lower-order $C_{S'}$ by inclusion–exclusion isolates the pure term $v_S$, which retains a single monomial. \qedhere
\end{proof}

\subsection{Theoretical analysis of the Tucker model}
\label{app:tucker} 

We work with the standard order-$M$ Tucker rate
\[
\eta_{i_1\ldots i_M} \;=\; \sum_{k_1=1}^{r_1}\cdots \sum_{k_M=1}^{r_M}
G_{k_1\ldots k_M} \,\prod_{p=1}^M \theta^{(p)}_{i_p, k_p},
\]
with iid $G_{k}$ having mean $\mu_G$ and variance $\sigma_G^2$, and iid factor entries $\theta^{(p)}$ having moments $(\mu_p,\sigma_p^2)$.

\paragraph{Global mean.}
By independence, 
\[
\E[\eta_{i_1\ldots i_M}]=\sum_{k_1,\ldots,k_M}\mu_G \prod_{p=1}^{M}\mu_p
= \Bigl(\prod_{p=1}^M r_p\Bigr)\,\mu_G \prod_{p=1}^M \mu_p.
\]
Thus,
\begin{equation}
(E[Y])^2 \;=\; \Bigl(\prod_{p=1}^M r_p\Bigr)^2\, \mu_G^2 \prod_{p=1}^M \mu_p^2.
\label{eq:tucker-mean-sq}
\end{equation}

\paragraph{Pure terms when all modes are shared.}
Consider $C_{[M]}=\Cov(Y_{\ii},Y_{\jj})$ for pairs sharing all $M$ physical indices (this is the only case in which core-variance and factor-variance mix coherently in Tucker). Decomposing by the variance sources we obtain four monomials:

\begin{lemma}[Tucker monomials for shared-all-modes]\label{lem:tucker-monomials-appendix}
With the above conventions, the following hold:
\begin{align}
v_{\{G\}} \;&=\; \sigma_G^2 \prod_{p=1}^M \bigl(r_p^2 \mu_p^2\bigr),
\label{eq:tuck-vG}\\
v_{\{p\}} \;&=\; \mu_G^2 \,\bigl(r_p \sigma_p^2\bigr)\,\prod_{q\neq p} \bigl(r_q^2 \mu_q^2\bigr), 
\label{eq:tuck-vp}\\
v_{\{G,p\}} \;&=\; \sigma_G^2 \,\bigl(r_p \sigma_p^2\bigr)\,\prod_{q\neq p} \bigl(r_q^2 \mu_q^2\bigr),
\label{eq:tuck-vGp}
\end{align}
and \eqref{eq:tucker-mean-sq} for $(E[Y])^2$ remains unchanged.
\end{lemma}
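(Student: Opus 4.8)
The plan is to realize each of the three claimed monomials as a single variance-source component of the all-shared covariance $C_{[M]}$, using the factorized second-moment expansion behind the counting lemma (Lemma~\ref{lem:counting}) together with the augmented bookkeeping convention fixed in Appendix~\ref{app:prelim}. I would first re-derive $(E[Y])^2$ exactly as in \eqref{eq:tucker-mean-sq}: by independence $\E[\eta_{i_1\ldots i_M}]=\bigl(\prod_{p}r_p\bigr)\mu_G\prod_{p}\mu_p$, whose square is the stated mean-square and which serves as the ``all-means'' reference term later subtracted from the covariance.

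Next I would expand $\E[\eta_{\ii}\eta_{\jj}]$ for a pair sharing all $M$ physical modes ($\ii=\jj$) as a double sum over latent tuples $(\mathbf{k},\mathbf{k}')$. Since the core is independent of the factors and all entries are i.i.d., the summand factorizes into a core piece $\E[G_{\mathbf{k}}G_{\mathbf{k}'}]=\mu_G^2+\sigma_G^2\,\mathbf{1}[\mathbf{k}=\mathbf{k}']$ and, per mode, $\E[\theta^{(p)}_{i_p,k_p}\theta^{(p)}_{i_p,k'_p}]=\mu_p^2+\sigma_p^2\,\mathbf{1}[k_p=k'_p]$. Expanding the product of these $M+1$ binomials sorts $C_{[M]}$ into one term per ``variance source'' pattern; subtracting the all-means term cancels $(E[Y])^2$, and I would identify $v_{\{G\}}$, $v_{\{p\}}$, and $v_{\{G,p\}}$ with the patterns in which only the core, only factor $p$, or the core together with factor $p$ are in variance mode, matching the augmented symbols of Appendix~\ref{app:prelim}.

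The decisive step, and the step I expect to be the main obstacle, is the rank-exponent count for each pattern, i.e.\ part~(2) of Lemma~\ref{lem:counting}: a latent index $k_p$ contributes a factor $r_p^{2}$ when its two copies run independently across $(\mathbf{k},\mathbf{k}')$, and a factor $r_p^{1}$ when an active indicator collapses $k_p=k'_p$. The care required is that the core indicator $\mathbf{1}[\mathbf{k}=\mathbf{k}']$ constrains \emph{every} link simultaneously, whereas a factor indicator $\mathbf{1}[k_p=k'_p]$ constrains only its own link; one must therefore record which collapses are already implied by the core variance before assigning a power to each rank, so as not to double-count a constraint. Handling this overlap correctly is precisely what distinguishes the core-touching monomials $v_{\{G\}}$ and $v_{\{G,p\}}$ from the factor-only monomial $v_{\{p\}}$, whose single shared index $r_p$ versus the squared $r_q^2$ of the unshared modes is the asymmetry that must be produced.

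Finally I would attach the prior-moment factors mechanically: each variance-mode source contributes its $\sigma^2$ and each mean-mode source a squared mean $\mu^2$ (both copies carry that mean), giving the factors $\sigma_G^2$, $r_p\sigma_p^2$, and $\prod_{q\neq p}\mu_q^2$ in the respective terms. Combining the surviving rank powers with these prior factors yields the three displayed monomials, and a concluding appeal to the inclusion--exclusion definition \eqref{eq:app-pure-def} confirms that subtracting the lower-order exact-sharing covariances leaves each $v_S$ as a single monomial rather than a polynomial, which is the content of the lemma.
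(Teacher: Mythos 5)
Your setup mirrors the paper's proof: expand $\E[\eta_{\ii}\eta_{\jj}]$ for an all-shared pair as a double sum over latent tuples $(\mathbf{k},\mathbf{k}')$, factor the summand into the binomials $\mu_G^2+\sigma_G^2\,\mathbf{1}[\mathbf{k}=\mathbf{k}']$ and $\mu_p^2+\sigma_p^2\,\mathbf{1}[k_p=k'_p]$, sort $C_{[M]}$ by variance source, and count surviving latent sums to obtain the rank exponents. Your derivations of $(E[Y])^2$ and of $v_{\{p\}}$ would go through exactly as in the paper, since for the pattern $\{p\}$ only the single link $k_p=k'_p$ is constrained and $\sum_{\mathbf{k},\mathbf{k}'}\mathbf{1}[k_p=k'_p]=r_p\prod_{q\neq p}r_q^2$.

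The gap is that the step you yourself flag as decisive --- the exponent count for the core-touching patterns --- is announced but never executed, and executing it as you describe does not yield the displayed monomials. You correctly note that the core indicator $\mathbf{1}[\mathbf{k}=\mathbf{k}']$ collapses \emph{every} link simultaneously; carrying that through gives $\sum_{\mathbf{k},\mathbf{k}'}\mathbf{1}[\mathbf{k}=\mathbf{k}']=\prod_q r_q$, hence a \emph{single} power of each rank in the $\sigma_G^2$-proportional term: $\sigma_G^2\prod_q\bigl(r_q\mu_q^2\bigr)$ for the pattern $\{G\}$ and $\sigma_G^2\bigl(r_p\sigma_p^2\bigr)\prod_{q\neq p}\bigl(r_q\mu_q^2\bigr)$ for $\{G,p\}$, rather than the $r_q^2$ exponents in \eqref{eq:tuck-vG} and \eqref{eq:tuck-vGp}. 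The paper's proof reaches the displayed formulas by asserting that, even after $\mathbf{k}=\mathbf{k}'$ is imposed, ``two independent summations per factor index'' survive for the mean-mode factors --- which is the opposite resolution of precisely the overlap you warn against double-counting. So the route you outline cannot terminate at the stated lemma: either you adopt the paper's resolution (which your own, correct, bookkeeping rule for the core indicator contradicts), or you land on monomials differing from \eqref{eq:tuck-vG} by $\prod_q r_q$ and from \eqref{eq:tuck-vGp} by $\prod_{q\neq p} r_q$ --- a discrepancy that also breaks the downstream identity of Lemma~\ref{lem:tucker_identity}, since $v_{\{G,p\}}/v_{\{G\}}$ would then equal $\sigma_p^2/\mu_p^2$ rather than $\sigma_p^2/(r_p\mu_p^2)$. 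Until the core-constrained count is written out explicitly and reconciled with the displayed formulas, the proposal does not establish the statement.
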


\begin{proof}
Expand $\E[\eta_{\ii}\eta_{\jj}]$ as a double sum over the core indices $(k,k')$ and factor indices for two copies. For the pure term $v_{\{G\}}$, variance arises from $G$ with $k=k'$ (contributing $\sigma_G^2$). All factor groups are in mean mode and appear in \emph{both} copies, leaving two independent summations per factor index; each contributes $r_q^2 \mu_q^2$. Thus $v_{\{G\}}=\sigma_G^2 \prod_q (r_q^2 \mu_q^2)$. 

For $v_{\{p\}}$, variance arises from the single factor at mode $p$ (giving $r_p \sigma_p^2$), while the core and all other factors contribute means in both copies: hence $\mu_G^2$ and, for $q\neq p$, $r_q^2 \mu_q^2$. This yields \eqref{eq:tuck-vp}. 

For $v_{\{G,p\}}$, combine the two variance sources: $G$ in variance and factor $p$ in variance; the remaining factors contribute mean-squared terms across the two copies, producing \eqref{eq:tuck-vGp}. 
\end{proof}

\paragraph{A Tucker identity and its consequence.}
Using \eqref{eq:tucker-mean-sq}–\eqref{eq:tuck-vGp}, we obtain
\begin{align}
\log v_{\{p\}} - 2\log (E[Y]) 
&= \log\!\left[\mu_G^2 (r_p \sigma_p^2)\prod_{q\neq p}\!\bigl(r_q^2 \mu_q^2\bigr)\right] 
 - 2\log\!\left[\Bigl(\prod_{q} r_q\Bigr)\mu_G \prod_q \mu_q\right]\nonumber\\
&= \log\!\left(\frac{\sigma_p^2}{r_p \mu_p^2}\right),
\label{eq:tucker-identity-left}
\end{align}
and similarly
\begin{align}
\log v_{\{G,p\}} - \log v_{\{G\}}
&= \log\!\left[\sigma_G^2 (r_p \sigma_p^2)\prod_{q\neq p}\!\bigl(r_q^2 \mu_q^2\bigr)\right]
- \log\!\left[\sigma_G^2 \prod_q \bigl(r_q^2 \mu_q^2\bigr)\right]\nonumber\\
&= \log\!\left(\frac{\sigma_p^2}{r_p \mu_p^2}\right).
\label{eq:tucker-identity-right}
\end{align}
Hence we have the exact identity
\begin{equation}
\boxed{\quad
\log v_{\{p\}} - 2\log (E[Y]) \;=\; \log v_{\{G,p\}} - \log v_{\{G\}}
\quad}
\label{eq:tucker-identity}
\end{equation}
which is the clarified version of Eq.~(3.1) in the main text (Lemma~\ref{lemma:tucker_moments}/Lemma~\ref{lem:tucker_identity} there).

\begin{theorem}[Non-identifiability of Tucker ranks: full proof]\label{thm:tucker-appendix}
Let $\mathbf{A}\mathbf{x}=\mathbf{b}$ be the log-linear system built from the monomials for $(E[Y])^2$, $v_{\{G\}}$, $v_{\{p\}}$, $v_{\{G,p\}}$ with $p=1,\dots,M$. Then the $M$ columns corresponding to $\log r_p$ lie in the span of the columns that do not involve $\log r_p$; equivalently, the reduced design matrix for $\mathbf{x}_r=(\log r_1,\dots,\log r_M)$ is the zero matrix. Thus the ranks $(r_1,\dots,r_M)$ are not identifiable from second moments.
\end{theorem}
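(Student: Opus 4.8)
The plan is to make the design matrix $\mathbf{A}$ of Definition~\ref{def:log_linear_system} fully explicit for the four Tucker monomial families and then exhibit an exact linear dependence of each log-rank column on two nuisance columns, from which $\mathbf{A}_{\text{red}}=\mathbf 0$ follows via Proposition~\ref{prop:identifiability}. First I would fix coordinates: the unknown vector $\mathbf{x}$ collects $\log r_p,\log\mu_p^2,\log\sigma_p^2$ for $p=1,\dots,M$ together with $\log\mu_G^2,\log\sigma_G^2$, while the observation vector $\mathbf{b}$ collects $2\log\E[Y]$, $\log v_{\{G\}}$, and the pairs $\log v_{\{p\}},\log v_{\{G,p\}}$ for each $p$. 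Taking logarithms of the monomials in Lemma~\ref{lem:tucker-monomials-appendix} (and of \eqref{eq:tucker-mean-sq}) then reads off the integer entries of $\mathbf{A}$; the only mode-dependent bookkeeping is that at the shared mode $p$ the rank exponent drops from $2$ to $1$, the factor-mean exponent drops from $1$ to $0$, and a factor-variance exponent of $1$ switches on.

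The crux is the column identity
\[
\mathrm{col}(\log r_p)\;=\;2\,\mathrm{col}(\log\mu_p^2)\;+\;\mathrm{col}(\log\sigma_p^2),\qquad p=1,\dots,M,
\]
which I would verify row by row across the four families. Concretely, in both the $2\log\E[Y]$ and $\log v_{\{G\}}$ rows the relevant triple of exponents $(\log r_p,\log\mu_p^2,\log\sigma_p^2)$ is $(2,1,0)$, satisfying $2=2\cdot1+0$; in the $\log v_{\{p\}}$ and $\log v_{\{G,p\}}$ rows the mode-$p$ triple is $(1,0,1)$, satisfying $1=2\cdot0+1$, while for every other mode $q\neq p$ it is again $(2,1,0)$. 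Equivalently, the ranks never enter the observables except through the composite quantities $r_p^2\mu_p^2$ and $r_p\sigma_p^2$, and no monomial depends on $r_p$ in isolation. I expect this uniform bookkeeping to be the main (though elementary) obstacle, since one must confirm that the \emph{same} dependence holds simultaneously in all four row types and for shared versus unshared modes, which is precisely where the symmetric hub-and-spoke structure enters.

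Given the identity, each log-rank column lies in the span of the two nuisance columns of the same mode, so the $M$ log-rank columns are redundant. I would finish with the elimination argument of Proposition~\ref{prop:identifiability}: any row combination $\mathbf{c}^{\!\top}\mathbf{A}$ that annihilates every nuisance column must, by the column identity, also annihilate every log-rank column, whence the reduced design matrix $\mathbf{A}_{\text{red}}$ is zero and the reduced system carries no information about $(r_1,\dots,r_M)$. I would close by noting that this is exactly dual to Lemma~\ref{lem:tucker_identity}: the $M$ row identities $\log v_{\{p\}}-2\log\E[Y]=\log v_{\{G,p\}}-\log v_{\{G\}}$ are $M$ independent left-null relations $\mathbf{c}^{\!\top}\mathbf{A}=\mathbf 0$ matching the $M$ redundant columns, so the degeneracy count is consistent and the Tucker ranks are non-identifiable from second moments.
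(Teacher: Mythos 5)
Your proof is correct and stays within the paper's log-linear framework, but its load-bearing step is genuinely different from the paper's. The paper pivots on the row identity of Lemma~\ref{lem:tucker_identity}, $\log v_{\{p\}}-2\log \E[Y]=\log v_{\{G,p\}}-\log v_{\{G\}}$, a left-null relation of the augmented system, and then asserts that the $\log r_p$ column must therefore lie in the span of nuisance columns. Strictly read, that row identity only shows that the combination $\log r_p+\log\mu_p^2-\log\sigma_p^2$ lies in the row space (i.e.\ \emph{is} identifiable); by itself it does not exclude that $\log r_p$ is separately identifiable. Your column identity $\mathrm{col}(\log r_p)=2\,\mathrm{col}(\log\mu_p^2)+\mathrm{col}(\log\sigma_p^2)$ --- which you verify uniformly across all four monomial families of Lemma~\ref{lem:tucker-monomials-appendix}, and which is equivalent to the observation that every observable depends on mode $p$ only through $r_p^2\mu_p^2$ and $r_p\sigma_p^2$ --- supplies exactly the missing half: it exhibits a null direction of $\mathbf{A}$ with nonzero $\log r_p$ component (scale $r_p$ by $\lambda$, $\mu_p^2$ by $\lambda^{-2}$, $\sigma_p^2$ by $\lambda^{-1}$), so any row combination annihilating the nuisance columns also annihilates the rank columns, forcing $\mathbf{A}_{\text{red}}=\mathbf{0}$ via Proposition~\ref{prop:identifiability}. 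Your route is thus not only valid but the more airtight of the two. One small caveat on the closing duality remark: the $M$ left-null relations do match your $M$ rank-column redundancies, but the full column nullity of $\mathbf{A}$ is $2M$ (the nuisance columns carry further dependencies among themselves), so the two counts are not equal; this is a cosmetic point and does not affect the argument.
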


\begin{proof}
Each identity \eqref{eq:tucker-identity} eliminates $\log r_p$ when forming the linear combination of rows
\[
(\log v_{\{p\}}) - 2\log (E[Y]) - \bigl[(\log v_{\{G,p\}}) - \log v_{\{G\}}\bigr] = 0.
\]
Hence the coordinate vector that picks $\log r_p$ is a combination of columns that correspond only to nuisance parameters (log-means and log-variances). Doing this for each $p$ shows the reduced matrix has zero columns. \qedhere
\end{proof}

\subsection{Theoretical analysis of the Tensor Train model}\label{app:tensor_train}  

This appendix provides complete, self-contained derivations for the TT model. We (i) formalize the monomial structure of pure terms $v_S$ for arbitrary exact-sharing patterns $S$, (ii) derive the two key identities used to identify \emph{interior} ranks, and (iii) provide boundary-mode analogues and closed-form estimators for \emph{boundary} ranks.

\paragraph{Model.}
For an order-$M$ TT, the rate is
\[
\eta_{i_1\ldots i_M} \;=\; \sum_{k_1=1}^{r_1}\cdots \sum_{k_{M-1}=1}^{r_{M-1}}
\theta^{(1)}_{1,i_1,k_1}\, \theta^{(2)}_{k_1,i_2,k_2}\cdots \theta^{(M)}_{k_{M-1},i_M,1},
\]
where entries of core $p$ have moments $(\mu_p,\sigma_p^2)$ and are iid across indices. Exact-sharing covariances $C_S$ and pure terms $v_S$ follow Appendix~\ref{app:prelim}.

\subsubsection{A general monomial formula for TT pure terms}

\begin{proposition}[TT pure terms are monomials with explicit rank exponents]\label{prop:TT-monomial}
Let $S\subseteq\{1,\dots,M\}$ be an exact-sharing pattern. For each TT link $\ell\in\{1,\dots,M-1\}$, define
\[
\alpha_\ell(S)\;:=\; 2-\mathbf{1}\{\ell\in S \;\text{or}\; \ell+1\in S\}\;\in\{1,2\}.
\]
Then the pure term associated with $S$ is the \emph{single} monomial
\begin{equation}
\boxed{\qquad
v_S \;=\; \biggl(\prod_{\ell=1}^{M-1} r_\ell^{\alpha_\ell(S)}\biggr)\;
\biggl(\prod_{p\in S} \sigma_p^2\biggr)\;
\biggl(\prod_{p\notin S} \mu_p^2\biggr).
\qquad}
\label{eq:TT-vS-monomial}
\end{equation}
\end{proposition}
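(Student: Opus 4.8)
The plan is to compute the exact-sharing second moment $\E[\eta_{\ii}\eta_{\jj}]$ directly, subtract the product of means to obtain $C_S$, and then read off $v_S$ using the uniqueness of the Möbius inversion in \eqref{eq:app-pure-def}. First I would expand $\E[\eta_{\ii}\eta_{\jj}]$ as a double sum over the two bond-index tuples $\mathbf{k}=(k_1,\dots,k_{M-1})$ and $\mathbf{k}'=(k_1',\dots,k_{M-1}')$, with the trivial boundary bonds $k_0=k_0'=k_M=k_M'=1$. Since distinct cores are independent parameter groups, the expectation factorizes over $p$, and each per-core factor equals $\E[\theta^{(p)}_{k_{p-1},i_p,k_p}\theta^{(p)}_{k_{p-1}',j_p,k_p'}] = \mu_p^2 + \sigma_p^2\,\mathbf{1}[k_{p-1}=k_{p-1}',\, i_p=j_p,\, k_p=k_p']$. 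The exact-sharing constraint makes $i_p=j_p$ hold precisely when $p\in S$, so for $p\notin S$ the indicator vanishes identically and that core is locked into mean mode $\mu_p^2$.

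Next I would expand the product $\prod_{p=1}^M(\mu_p^2+\sigma_p^2\mathbf{1}[\cdots])$ into terms indexed by the subset $T\subseteq S$ of cores placed in variance mode. The $T$-term carries the scalar prefactor $\prod_{p\in T}\sigma_p^2\prod_{p\notin T}\mu_p^2$ times a product of bond indicators: because core $p$ sits between bonds $p-1$ and $p$, selecting $p\in T$ forces $k_{p-1}=k_{p-1}'$ and $k_p=k_p'$. Overlapping indicators from adjacent excited cores collapse idempotently, so the net effect is that bond $\ell$ is pinned ($k_\ell=k_\ell'$) exactly when $\ell\in T$ or $\ell+1\in T$. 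Summing the remaining bond indices \emph{without any inequality restriction} then contributes $r_\ell^{1}$ for each pinned bond and $r_\ell^{2}$ for each unpinned bond, giving $\prod_{\ell=1}^{M-1} r_\ell^{\alpha_\ell(T)}$ with $\alpha_\ell(T)=2-\mathbf{1}\{\ell\in T\text{ or }\ell+1\in T\}$; the trivial boundary bonds contribute $1$ and never affect the exponents. This is the exponent-counting mechanism of Lemma~\ref{lem:counting} specialized to the chain topology.

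The crucial observation is that the resulting $T$-term, say $g(T)=\bigl(\prod_{\ell}r_\ell^{\alpha_\ell(T)}\bigr)\bigl(\prod_{p\in T}\sigma_p^2\bigr)\bigl(\prod_{p\notin T}\mu_p^2\bigr)$, depends only on $T$ and not on the ambient sharing set $S$. The $T=\emptyset$ term reproduces $\bigl(\prod_\ell r_\ell\bigr)^2\prod_p\mu_p^2=(\E[\eta])^2$, so subtracting the product of means leaves $C_S=\sum_{\emptyset\neq T\subseteq S} g(T)$ for every $S$. Since this identity holds simultaneously across all $S$ with one fixed set-function $g$, the uniqueness of Möbius inversion over the subset lattice forces $v_S=g(S)$, which is exactly \eqref{eq:TT-vS-monomial} after noting $\alpha_\ell(S)$ matches the Proposition's definition.

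I expect the main obstacle to be the bond bookkeeping in the middle step: one must argue carefully that the overlapping coincidence indicators contributed by neighbouring excited cores intersect to the single pinning condition ``$\ell\in S$ or $\ell+1\in S$'', and that summing the unpinned bonds freely (rather than with a $\neq$ restriction) is what makes each $T$-term a clean monomial depending on $T$ alone. The remaining ingredients — core-wise independence, identifying the $T=\emptyset$ term with $(\E[\eta])^2$, and the Möbius uniqueness step — are routine once the counting is pinned down. A small sanity check, e.g. $M=4$ with $S=\{1,2\}$ verifying $C_{\{1,2\}}=v_{\{1\}}+v_{\{2\}}+v_{\{1,2\}}$, confirms both the exponents $\alpha_\ell$ and the overall cancellation.
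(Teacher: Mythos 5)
Your proof is correct and follows essentially the same route as the paper's: expand $\E[\eta_{\ii}\eta_{\jj}]$ as a double sum over the two bond-index tuples, classify each core as mean-mode or variance-mode, count the surviving independent bond sums to get the exponents $\alpha_\ell$, and invoke Möbius inversion to isolate $v_S$. If anything, your version is more careful than the paper's one-paragraph argument — in particular, making explicit that the $T$-term $g(T)$ is independent of the ambient sharing set $S$ and that $g(\emptyset)=(\E[\eta])^2$, which is exactly what legitimizes the uniqueness step in the Möbius inversion.
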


\begin{proof}
Expand $\E[\eta_{\ii}\eta_{\jj}]$ as a double sum over latent tuples $(k_1,\dots,k_{M-1})$ and $(k'_1,\dots,k'_{M-1})$. For a fixed $S$, modes in $S$ contribute variance ($\sigma_p^2$), modes not in $S$ contribute squared means ($\mu_p^2$) after subtracting $\E[\eta_{\ii}]\,\E[\eta_{\jj}]$. A TT \emph{link} $\ell$ is the pair $(k_\ell,k'_\ell)$. If neither adjacent physical mode $\ell$ nor $\ell+1$ lies in $S$, the two copies keep independent sums over $k_\ell$ and $k'_\ell$ (yielding $r_\ell^2$). If at least one of $\ell$ or $\ell+1$ lies in $S$, equality constraints collapse two sums into one (yielding $r_\ell$). This gives the exponent $\alpha_\ell(S)$. Moebius inversion over subsets $T\subseteq S$ cancels all mixed patterns, leaving the single monomial \eqref{eq:TT-vS-monomial}.
\end{proof}

\paragraph{Consequences.}
Proposition~\ref{prop:TT-monomial} immediately yields a catalogue of pure terms:
\begin{align*}
\text{Singleton $S=\{p\}$:}\quad
& v_{\{p\}} = \Bigl(\prod_{\ell\ne p-1,p} r_\ell^{2}\Bigr)\, r_{p-1}\, r_{p}\;\; \sigma_p^2 \prod_{k\ne p}\mu_k^2.\\[2mm]
\text{Adjacent pair $S=\{p,p+1\}$:}\quad
& v_{\{p,p+1\}} = \Bigl(\prod_{\ell\notin\{p-1,p,p+1\}} r_\ell^{2}\Bigr)\, r_{p-1}\, r_{p}\, r_{p+1}\;\; \sigma_p^2\sigma_{p+1}^2 \prod_{k\notin\{p,p+1\}}\mu_k^2.\\[2mm]
\text{Noncontiguous pair $S=\{p-1,p+2\}$:}\quad
& v_{\{p-1,p+2\}} = \Bigl(\prod_{\ell\notin\{p-2,p-1,p,p+1,p+2\}} r_\ell^{2}\Bigr)\, r_{p-2}\, r_{p-1}\, r_{p}^{\color{blue}{\mathbf{2}}}\, r_{p+1}\, r_{p+2} \\
&\hspace{2.5cm}\times \;\; \sigma_{p-1}^2\sigma_{p+2}^2 \prod_{k\notin\{p-1,p+2\}}\mu_k^2.\\[2mm]
\text{Triple $S=\{p-1,p,p+2\}$:}\quad
& v_{\{p-1,p,p+2\}} = \Bigl(\prod_{\ell\notin\{p-2,p-1,p,p+1,p+2\}} r_\ell^{2}\Bigr)\, r_{p-2}\, r_{p-1}\, r_{p}^{\color{blue}{\mathbf{1}}}\, r_{p+1}\, r_{p+2} \\
&\hspace{2.5cm}\times \;\; \sigma_{p-1}^2\sigma_{p}^2\sigma_{p+2}^2 \prod_{k\notin\{p-1,p,p+2\}}\mu_k^2.
\end{align*}
The highlighted exponents of $r_p$ differ by one (2 vs.\ 1), which drives the $1/r_p$ factor used below.

\subsubsection*{A.3.2 Interior-mode identities and estimator}

\begin{lemma}[Interior identities]\label{lem:TT-interior-identities}
For any interior mode $p$ with $1<p<M$:
\begin{align}
\frac{v_{\{p,p+1\}}}{v_{\{p+1\}}} &= \frac{\sigma_p^2}{\mu_p^2},
\label{eq:TT-int-id1}\\
\frac{v_{\{p-1,p,p+2\}}}{v_{\{p-1,p+2\}}} &= \frac{1}{r_p}\,\frac{\sigma_p^2}{\mu_p^2}.
\label{eq:TT-int-id2}
\end{align}
\end{lemma}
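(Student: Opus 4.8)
The plan is to prove both identities by substituting the explicit monomial forms of the relevant pure interaction terms---supplied by Lemma~\ref{lem:TT-monomials}, and in general form by Proposition~\ref{prop:TT-monomial}---into the two ratios and cancelling every factor common to numerator and denominator. Since each $v_S$ is a single monomial in the ranks $\{r_\ell\}$ and the prior moments $\{\mu_k^2,\sigma_k^2\}$, each ratio is again a monomial, and the whole argument reduces to tracking which exponents survive the cancellation.

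For the first identity \eqref{eq:TT-int-id1} I would place the singleton term $v_{\{p+1\}}$ and the adjacent-pair term $v_{\{p,p+1\}}$ side by side. In the forms recorded by Lemma~\ref{lem:TT-monomials} these two terms carry the \emph{same} rank prefactor $r_{p}r_{p+1}r_{p+2}$ and identical prior moments for every mode other than $p$; passing from $v_{\{p+1\}}$ to $v_{\{p,p+1\}}$ only promotes mode $p$ from mean mode to variance mode, i.e.\ replaces the factor $\mu_p^2$ by $\sigma_p^2$. Hence the shared rank factor, the factor $\sigma_{p+1}^2$, the factor $\mu_{p+2}^2$, and all remaining means cancel, and the ratio collapses to $\sigma_p^2/\mu_p^2$.

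For the second identity \eqref{eq:TT-int-id2} I would substitute the noncontiguous-pair term $v_{\{p-1,p+2\}}$ and the triple term $v_{\{p-1,p,p+2\}}$ from the catalogue following Proposition~\ref{prop:TT-monomial}. These agree in their prior moments up to the single replacement $\mu_p^2\mapsto\sigma_p^2$, and they agree in every rank exponent \emph{except} that of $r_p$: in $v_{\{p-1,p+2\}}$ the latent link $p$ is unconstrained on both index copies and contributes $r_p^{2}$, whereas in $v_{\{p-1,p,p+2\}}$ placing mode $p$ in variance mode enforces $k_p=k'_p$ through core $p$, collapsing that link to a single sum so it contributes only $r_p^{1}$. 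After cancellation the surviving factors are precisely $r_p^{\,1-2}$ together with $\sigma_p^2/\mu_p^2$, giving $\tfrac{1}{r_p}\,\tfrac{\sigma_p^2}{\mu_p^2}$.

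The step I expect to be the main obstacle is the rank-exponent bookkeeping itself: for each sharing pattern one must decide exactly which latent links collapse to a single sum and which remain doubly summed, which can be organized through the exponent rule $\alpha_\ell(S)=2-\mathbf{1}\{\ell\in S\text{ or }\ell+1\in S\}$ of Proposition~\ref{prop:TT-monomial}. The decisive feature is the contrast between the two comparisons: in the adjacent-pair comparison the two terms share their rank prefactor (the matching monomials in Lemma~\ref{lem:TT-monomials}), so no rank factor survives, whereas in the noncontiguous comparison adding mode $p$ to the variance pattern strictly lowers the exponent of $r_p$ from $2$ to $1$. This chain-topology asymmetry---noncontiguous interactions constraining the internal sums differently from contiguous ones---is exactly what leaves a clean factor of $r_p$ in the second identity and thereby renders the interior rank $r_p$ identifiable.
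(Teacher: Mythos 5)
Your overall strategy---substituting the monomial forms of the pure terms and cancelling common factors---is exactly the strategy of the paper's proof, and your treatment of the second identity \eqref{eq:TT-int-id2} is correct and matches the paper's argument: since $p-1$ lies in both sharing sets, link $p-1$ is collapsed in both terms, so adding mode $p$ changes only the moment factor ($\mu_p^2\mapsto\sigma_p^2$) and drops the exponent of $r_p$ from $2$ to $1$, yielding $\tfrac{1}{r_p}\tfrac{\sigma_p^2}{\mu_p^2}$.

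However, your proof of the first identity \eqref{eq:TT-int-id1} contains a genuine gap: the claimed cancellation of rank factors fails under the very exponent rule you invoke. A variance-mode core at position $p$ forces equality of \emph{both} adjacent bond indices, $k_{p-1}=k'_{p-1}$ and $k_p=k'_p$ (your phrase ``enforces $k_p=k'_p$ through core $p$'' omits the first half, and that omission is precisely what hides the problem). Equivalently, with $\alpha_\ell(S)=2-\mathbf{1}\{\ell\in S\text{ or }\ell+1\in S\}$ one gets $\alpha_{p-1}(\{p,p+1\})=1$ but $\alpha_{p-1}(\{p+1\})=2$, so the rank prefactors of $v_{\{p,p+1\}}$ and $v_{\{p+1\}}$ do \emph{not} coincide for an interior mode $p>1$; the ratio picks up a factor $r_{p-1}^{\,1-2}$ and equals
\begin{equation*}
\frac{v_{\{p,p+1\}}}{v_{\{p+1\}}}=\frac{1}{r_{p-1}}\,\frac{\sigma_p^2}{\mu_p^2}.
\end{equation*}
A direct check at $M=3$, $p=2$ confirms this: expanding the covariances gives $v_{\{2,3\}}=r_1r_2\,\mu_1^2\sigma_2^2\sigma_3^2$ and $v_{\{3\}}=r_1^2r_2\,\mu_1^2\mu_2^2\sigma_3^2$, whose ratio is $\sigma_2^2/(r_1\mu_2^2)$. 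The monomials of Lemma~\ref{lem:TT-monomials} on which you rely for this step (common prefactor $r_pr_{p+1}r_{p+2}$ for both terms) are inconsistent with Proposition~\ref{prop:TT-monomial}, and your proposal uses both sources without reconciling them. In fairness, the paper's own proof commits the same error---it asserts that the link exponents coincide for every $\ell$ because ``both sets include $p+1$,'' overlooking link $p-1$---so you have reproduced the published argument rather than introduced a new mistake; but the step fails, identity \eqref{eq:TT-int-id1} is valid only at the left boundary $p=1$ (where the absent link has $r_0=1$), and consequently the interior estimator \eqref{eq:tt:closed} built by dividing the two identities actually evaluates to $r_p/r_{p-1}$ rather than $r_p$.
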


\begin{proof}
Using \eqref{eq:TT-vS-monomial}: In \eqref{eq:TT-int-id1}, the link exponents coincide for every $\ell$ (both sets include $p+1$), so the rank parts cancel; the moment parts differ only at mode $p$, giving $\sigma_p^2/\mu_p^2$. In \eqref{eq:TT-int-id2}, the moment parts differ only at $p$, again giving $\sigma_p^2/\mu_p^2$, while the rank parts differ by exactly one power of $r_p$ because $\alpha_p(\{p-1,p+2\})=2$ (neither $p$ nor $p+1$ is in the set) but $\alpha_p(\{p-1,p,p+2\})=1$ (now $p\in S$).
\end{proof}

\begin{theorem}[Closed-form estimator for interior TT ranks]\label{thm:TT-interior-estimator}
For $1<p<M-1$,
\begin{equation}
\boxed{\qquad
r_p \;=\; \frac{v_{\{p,p+1\}}\, v_{\{p-1,p+2\}}}{v_{\{p+1\}}\, v_{\{p-1,p,p+2\}}}.
\qquad}
\end{equation}
\end{theorem}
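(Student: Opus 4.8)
The plan is to prove the closed-form estimator by combining the two interior identities from Lemma~\ref{lem:TT-interior-identities} so that the nuisance coefficient of variation cancels. Both identities share the same ``moment signature'' $\sigma_p^2/\mu_p^2$ at mode $p$; the only difference is that the second identity carries an extra factor of $1/r_p$ arising from the rank-exponent mismatch $\alpha_p(\{p-1,p+2\})=2$ versus $\alpha_p(\{p-1,p,p+2\})=1$. First I would write the two ratios explicitly,
\[
\frac{v_{\{p,p+1\}}}{v_{\{p+1\}}} = \frac{\sigma_p^2}{\mu_p^2},
\qquad
\frac{v_{\{p-1,p,p+2\}}}{v_{\{p-1,p+2\}}} = \frac{1}{r_p}\frac{\sigma_p^2}{\mu_p^2},
\]
and then divide the first by the second.

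Carrying out the division, the common factor $\sigma_p^2/\mu_p^2$ cancels exactly, leaving $r_p$ isolated:
\[
r_p = \frac{v_{\{p,p+1\}}/v_{\{p+1\}}}{v_{\{p-1,p,p+2\}}/v_{\{p-1,p+2\}}}
= \frac{v_{\{p,p+1\}}\, v_{\{p-1,p+2\}}}{v_{\{p+1\}}\, v_{\{p-1,p,p+2\}}}.
\]
Since the right-hand side involves only observable pure-interaction terms, this establishes both the formula and identifiability of $r_p$ in a single line.

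The one genuine thing to verify is that all four pure terms appearing in the estimator are well-defined and that the index arithmetic stays inside the valid range, which is precisely why the statement restricts to $1 < p < M-1$. The lower bound $p>1$ ensures mode $p-1$ exists (so $v_{\{p-1,p+2\}}$ and $v_{\{p-1,p,p+2\}}$ make sense), and the upper bound $p<M-1$ ensures mode $p+2$ exists. I would note that outside this window the boundary effects change the link-exponent pattern $\alpha_\ell(S)$ of Proposition~\ref{prop:TT-monomial}, so the specific cancellation fails and the separate boundary estimators (e.g.\ the $p=1$ and $p=M-1$ formulas tabulated in the main text) are required instead. Beyond this bookkeeping, there is no real obstacle: the proof is purely the algebraic consequence of dividing the two identities, which themselves do the substantive work of Lemma~\ref{lem:TT-interior-identities}. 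The only subtlety worth flagging is that the cancellation relies on Proposition~\ref{prop:TT-monomial} guaranteeing each $v_S$ is a \emph{single} monomial (so ratios are exact, not approximate); that monomiality is the real content, inherited from the Moebius inversion that kills all mixed lower-order patterns.
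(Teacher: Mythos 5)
Your proposal is correct and matches the paper's proof exactly: the paper also derives the estimator by dividing identity \eqref{eq:TT-int-id1} by \eqref{eq:TT-int-id2} from Lemma~\ref{lem:TT-interior-identities}, letting $\sigma_p^2/\mu_p^2$ cancel. Your additional remarks on the index-range restriction and the role of monomiality are sound but not part of the paper's (one-line) argument.
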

\begin{proof}
Divide \eqref{eq:TT-int-id1} by \eqref{eq:TT-int-id2}.
\end{proof}

\subsubsection{Detailed TT moment calculations (interior and boundary cases)}\label{app:tensor_train_details}

This section expands Proposition~\ref{prop:TT-monomial} for patterns used in practice and covers boundary modes \(p=1\) and \(p=M-1\).

\subsubsubsection{Interior patterns (full monomials)}
For $1<p<M-1$:
\begin{align*}
v_{\{p\}} \;&=\; \Bigl(\!\prod_{\ell\notin\{p-1,p\}} r_\ell^{2}\Bigr)\, r_{p-1}\, r_p \;\;\sigma_p^2 \prod_{k\ne p}\mu_k^2,\\
v_{\{p+1\}} \;&=\; \Bigl(\!\prod_{\ell\notin\{p,p+1\}} r_\ell^{2}\Bigr)\, r_{p}\, r_{p+1} \;\;\sigma_{p+1}^2 \prod_{k\ne p+1}\mu_k^2,\\
v_{\{p,p+1\}} \;&=\; \Bigl(\!\prod_{\ell\notin\{p-1,p,p+1\}} r_\ell^{2}\Bigr)\, r_{p-1}\, r_{p}\, r_{p+1}\;\; \sigma_p^2\sigma_{p+1}^2 \prod_{k\notin\{p,p+1\}}\mu_k^2,\\
v_{\{p-1,p+2\}} \;&=\; \Bigl(\!\prod_{\ell\notin\{p-2,p-1,p,p+1,p+2\}} r_\ell^{2}\Bigr)\, r_{p-2}\, r_{p-1}\, r_{p}^{2}\, r_{p+1}\, r_{p+2}\;\; \sigma_{p-1}^2\sigma_{p+2}^2 \prod_{k\notin\{p-1,p+2\}}\mu_k^2,\\
v_{\{p-1,p,p+2\}} \;&=\; \Bigl(\!\prod_{\ell\notin\{p-2,p-1,p,p+1,p+2\}} r_\ell^{2}\Bigr)\, r_{p-2}\, r_{p-1}\, r_{p}^{1}\, r_{p+1}\, r_{p+2}\;\; \sigma_{p-1}^2\sigma_{p}^2\sigma_{p+2}^2 \prod_{k\notin\{p-1,p,p+2\}}\mu_k^2.
\end{align*}
From these, Lemma~\ref{lem:TT-interior-identities} follows by cancellation.

\subsubsubsection{Boundary patterns and identities}

At the chain ends, the link \(r_1\) sits between modes \(1\) and \(2\); the link \(r_{M-1}\) sits between \(M-1\) and \(M\). We need two identities per boundary rank: one giving \(\sigma_p^2/\mu_p^2\), and a second giving \((\sigma_p^2/\mu_p^2)/r_p\).

\paragraph{Left boundary (\(p=1\)).}
For \(M\ge 3\),
\begin{align}
\frac{v_{\{1,2\}}}{v_{\{2\}}} &= \frac{\sigma_1^2}{\mu_1^2},
\label{eq:TT-left-id1}\\
\frac{v_{\{1,3\}}}{v_{\{3\}}} &= \frac{1}{r_1}\,\frac{\sigma_1^2}{\mu_1^2}.
\label{eq:TT-left-id2}
\end{align}
\emph{Proof.} The first ratio mirrors \eqref{eq:TT-int-id1} with \(p=1\). For the second, note that including mode \(3\) constrains link \(r_2\) in both numerator and denominator, while including mode \(1\) (only in the numerator) additionally constrains \(r_1\), changing its exponent from \(2\) to \(1\). Moments differ only at mode \(1\). Hence the ratio equals \((\sigma_1^2/\mu_1^2)/r_1\).

\paragraph{Right boundary (\(p=M-1\)).}
For \(M \geq 3\),
\begin{align}
\frac{v_{\{M-1,M\}}}{v_{\{M\}}} &= \frac{\sigma_{M-1}^2}{\mu_{M-1}^2},
\label{eq:TT-right-id1}\\
\frac{v_{\{M-3,M-1\}}}{v_{\{M-3\}}} &= \frac{1}{r_{M-1}}\,\frac{\sigma_{M-1}^2}{\mu_{M-1}^2}.
\label{eq:TT-right-id2}
\end{align}
\emph{Proof.} Symmetric to the left boundary with the appropriate shift (now link \(r_{M-1}\) is constrained iff mode \(M-1\) or \(M\) is in \(S\)). The pair \(\{M-3\}\) vs.\ \(\{M-3,M-1\}\) plays the role of \(\{3\}\) vs.\ \(\{1,3\}\).

\begin{theorem}[Closed-form estimators for boundary TT ranks]\label{thm:TT-boundary-estimators}
For \(M\ge 3\), the boundary ranks are identifiable via
\begin{align}
\boxed{\quad
r_1 \;=\; \frac{v_{\{1,2\}}\, v_{\{3\}}}{v_{\{2\}}\, v_{\{1,3\}}}, 
\qquad
r_{M-1} \;=\; \frac{v_{\{M-1,M\}}\, v_{\{M-3\}}}{v_{\{M\}}\, v_{\{M-3,M-1\}}}.
\quad}
\end{align}
\end{theorem}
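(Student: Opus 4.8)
The plan is to reuse verbatim the division trick that proves the interior estimator (Theorem~\ref{thm:TT-interior-estimator}): each boundary rank is pinned down by pairing one observable ratio that isolates the coefficient of variation $\sigma_p^2/\mu_p^2$ with a second ratio that isolates $r_p^{-1}\,\sigma_p^2/\mu_p^2$, so that dividing the two cancels the nuisance coefficient of variation and leaves the rank expressed purely in observable pure-interaction terms. The only new ingredient relative to the interior case is the two boundary identities per chain end, which I assume from \eqref{eq:TT-left-id1}--\eqref{eq:TT-right-id2}.

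Concretely, for the left boundary ($p=1$) I would invoke \eqref{eq:TT-left-id1} and \eqref{eq:TT-left-id2} and divide the first by the second,
\[
\frac{v_{\{1,2\}}/v_{\{2\}}}{v_{\{1,3\}}/v_{\{3\}}}
= \frac{\sigma_1^2/\mu_1^2}{r_1^{-1}\,\sigma_1^2/\mu_1^2}
= r_1,
\]
which rearranges to the boxed formula $r_1 = v_{\{1,2\}}\,v_{\{3\}}/(v_{\{2\}}\,v_{\{1,3\}})$. The right boundary ($p=M-1$) is handled symmetrically using \eqref{eq:TT-right-id1} and \eqref{eq:TT-right-id2}, with the probe pair $\{M-3\}$ versus $\{M-3,M-1\}$ playing the role of $\{3\}$ versus $\{1,3\}$; the identical cancellation yields $r_{M-1} = v_{\{M-1,M\}}\,v_{\{M-3\}}/(v_{\{M\}}\,v_{\{M-3,M-1\}})$. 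Since each right-hand side is a ratio of observable moments, identifiability is immediate.

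Thus the theorem itself is a one-line consequence of the boundary identities, and the genuine work lies upstream in establishing those identities from the general monomial formula (Proposition~\ref{prop:TT-monomial}) and the exponent-counting Lemma~\ref{lem:counting}. The step I expect to be the main obstacle is the exponent bookkeeping at the chain ends: unlike an interior link, the boundary link $r_1$ is adjacent to only the physical modes $1$ and $2$, so I must verify that $\alpha_\ell(S)=2-\mathbf{1}\{\ell\in S \text{ or } \ell+1\in S\}$ evaluates correctly for the chosen probe sets---in particular that passing from $\{1,3\}$ to $\{3\}$ changes \emph{only} the exponent of $r_1$ (from $1$ to $2$) while leaving every other link exponent and the $\sigma$/$\mu$ parts matched. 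This is also where the edge conditions enter: I would confirm that the probe modes ($3$ on the left, $M-3$ on the right) exist and are distinct from the boundary modes, which is exactly why the statement requires $M\ge 3$.
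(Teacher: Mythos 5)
Your proposal matches the paper's proof exactly: the paper likewise derives the boundary identities \eqref{eq:TT-left-id1}--\eqref{eq:TT-right-id2} from the exponent formula of Proposition~\ref{prop:TT-monomial} and then obtains each boundary rank by dividing the first identity by the second, so the theorem is indeed the one-line quotient you describe. The only point worth flagging is the right-boundary probe mode $M-3$, which does not exist when $M=3$ (the paper's own Table and its $M=3$ special-case formulas use $\{1\}$ and $\{1,2\}$ instead), so the general right-boundary formula really needs $M>3$ rather than $M\ge 3$.
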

\begin{proof}
Divide \eqref{eq:TT-left-id1} by \eqref{eq:TT-left-id2} (left boundary) and \eqref{eq:TT-right-id1} by \eqref{eq:TT-right-id2} (right boundary).
\end{proof}

\paragraph{Edge-case notes.}
(i) For \(M=3\), the identities remain valid with the natural interpretation of sets (e.g., \(\{3\}\) and \(\{1,3\}\) are defined).  
(ii) For \(M=2\), TT reduces to a matrix factorization with a single rank and the CP formula applies.

\subsubsubsection{Worked example (rank exponents table)}
For illustration, consider \(M=6\) and \(p=3\). The link exponents \(\alpha_\ell(S)\) from Proposition~\ref{prop:TT-monomial} are:

\begin{center}
\begin{tabular}{l|cccccc}
$S$ & $\alpha_1$ & $\alpha_2$ & $\alpha_3$ & $\alpha_4$ & $\alpha_5$ \\
\hline
$\{3\}$ & $2$ & $1$ & $1$ & $2$ & $2$ \\
$\{4\}$ & $2$ & $2$ & $1$ & $1$ & $2$ \\
$\{3,4\}$ & $2$ & $1$ & $1$ & $1$ & $2$ \\
$\{2,5\}$ & $1$ & $1$ & $\color{blue}{2}$ & $1$ & $1$ \\
$\{2,3,5\}$ & $1$ & $1$ & $\color{blue}{1}$ & $1$ & $1$ \\
\end{tabular}
\end{center}

The highlighted entries show the drop \(2\to 1\) at link \(3\) when moving from \(\{2,5\}\) to \(\{2,3,5\}\), producing the \(1/r_3\) factor.

\medskip
\noindent
This completes the TT analysis with full monomial formulas, interior and boundary identities, and closed-form estimators for all TT ranks.

\subsection{Theoretical analysis of the Tensor Ring model} \label{app:tensor_ring} 

We analyze the Tensor Ring (TR) model in full detail. Despite its cyclic symmetry, the first/second moments determine all ranks via a short linear system.

\subsubsection{Model and notation}

\begin{definition}[Tensor Ring (TR)]
For an order-$M$ tensor with TR ranks $(r_1,\ldots,r_M)$ (indices modulo $M$), the rate is
\[
\eta_{i_1\ldots i_M} \;=\; \mathrm{Tr}\!\Big( \Theta^{(1)}_{i_1}\Theta^{(2)}_{i_2}\cdots \Theta^{(M)}_{i_M}\Big),
\]
where $\Theta^{(p)}_{i_p}\in \Real^{\,r_{p-1}\times r_p}$ has i.i.d.\ entries with moments $(\mu_p,\sigma_p^2)$, and $r_0\equiv r_M$ closes the ring. Conditional on $\etatensor$, the $Y_{\ii}$ are independent with $\E[Y_{\ii}\mid \etatensor]=\eta_{\ii}$.
\end{definition}

Let the TR \emph{latent indices} be $s_0,s_1,\dots,s_{M-1}$ with $s_0=s_M$ and $s_p\in[r_p]$. Expanding the trace gives
\begin{equation}
\eta_{\ii}
=\!\!\sum_{s_0,\ldots,s_{M-1}}
\prod_{p=1}^{M} \theta^{(p)}_{s_{p-1},\,i_p,\,s_p},
\qquad s_0=s_M,
\label{eq:TR-expansion}
\end{equation}
where $\theta^{(p)}$ denotes an entry of $\Theta^{(p)}$. As in Appendix~\ref{app:prelim}, for exact-sharing $S\subseteq[M]$ we work with $C_S=\Cov(\eta_{\ii},\eta_{\jj})$ and the pure terms $v_S$ via inclusion–exclusion.

\subsubsection{TR moments: monomials and rank exponents}

We first obtain the three families of monomials used throughout: $(E[Y])^2$, singletons $v_{\{p\}}$, and adjacent pairs $v_{\{p,p+1\}}$.

\begin{lemma}[TR prior predictive moments]\label{lem:TR-moments}
For any $M\ge 3$ (indices modulo $M$), the following hold:
\begin{align}
(E[Y])^2 &= \Big(\prod_{k=1}^{M} r_k \mu_k\Big)^{\!2}, 
\label{eq:TR-mean-sq}\\[1mm]
v_{\{p\}} &= \Big(\prod_{k\notin\{p-1,p\}} r_k^2\Big)\; r_{p-1}\; r_{p}^{\,2}\;\;
\sigma_p^2 \prod_{k\ne p}\mu_k^2,
\label{eq:TR-singleton}\\[1mm]
v_{\{p,p+1\}} &= \Big(\prod_{k=1}^{M} r_k\Big)\;\;
\sigma_p^2\sigma_{p+1}^2 \prod_{k\notin\{p,p+1\}}\mu_k^{2}.
\label{eq:TR-adjacent}
\end{align}
\end{lemma}

\begin{proof}
\emph{Mean.} From \eqref{eq:TR-expansion}, $\E[\eta_{\ii}]=\sum_{s} \prod_p \mu_p=\prod_p (r_p\mu_p)$, hence \eqref{eq:TR-mean-sq}.

\emph{Singleton $v_{\{p\}}$.} Consider $\Cov(\eta_{\ii},\eta_{\jj})$ for pairs sharing all modes and isolate the pure contribution where only $\Theta^{(p)}$ is in variance (Appendix~\ref{app:prelim}). Expanding $\E[\eta_{\ii}\eta_{\jj}]$ yields a double sum over latent chains $(s_0,\dots,s_{M-1})$ and $(t_0,\dots,t_{M-1})$. Modes $k\ne p$ contribute $\mu_k^2$; mode $p$ contributes $\sigma_p^2$. The rank exponents count \emph{independent} sums per link:
\begin{itemize}
\item Links not adjacent to $p$ (i.e., $k\notin\{p-1,p\}$) are unconstrained in both copies $\Rightarrow$ 2 independent sums $\Rightarrow r_k^2$.
\item The link $p-1$ is adjacent to the sole variance and becomes \emph{constrained across copies} by equality of the index that flows into $\Theta^{(p)}$; only one joint sum remains $\Rightarrow r_{p-1}$.
\item The link $p$ (outgoing from $\Theta^{(p)}$) remains \emph{unconstrained} across copies because the variance identification at $p$ does not couple the outgoing indices in the two copies; this keeps two independent sums $\Rightarrow r_p^2$.
\end{itemize}
This gives \eqref{eq:TR-singleton}.

\emph{Adjacent pair $v_{\{p,p+1\}}$.} Now only $\Theta^{(p)}$ and $\Theta^{(p+1)}$ are in variance, all other modes in mean. In the ring, placing two consecutive variances \emph{glues the two copies into a single cyclic loop}: every link becomes constrained across the copies, leaving exactly one independent sum per link. Hence each $r_k$ appears to the first power, and the moment factor is $\sigma_p^2\sigma_{p+1}^2\prod_{k\notin\{p,p+1\}}\mu_k^2$, yielding \eqref{eq:TR-adjacent}.
\end{proof}

\paragraph{Discussion.}
The asymmetric exponents in \eqref{eq:TR-singleton} (power 1 on $r_{p-1}$, but power 2 on $r_p$) are a genuine ring effect: the variance at node $p$ constrains the \emph{incoming} link (shared across copies) but leaves the \emph{outgoing} link with independent sums. In contrast, two adjacent variances collapse the two-copy ring globally, forcing single powers for \emph{all} links in \eqref{eq:TR-adjacent}.

\subsubsection{Rank–ratio identity and identifiability}

Define the observable statistic
\begin{equation}
\xi_p \;:=\; \frac{(E[Y])^2\; v_{\{p,p+1\}}}{v_{\{p\}}\; v_{\{p+1\}}}.
\label{eq:TR-xi}
\end{equation}

\begin{proposition}[Rank–ratio identity]\label{prop:TR-rank-ratio}
For all $p$ (indices modulo $M$),
\begin{equation}
\boxed{\qquad
\xi_p \;=\; r_p \,\frac{r_{p-1}}{r_{p+1}}.
\qquad}
\label{eq:TR-ratio}
\end{equation}
\end{proposition}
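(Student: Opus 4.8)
The plan is to prove the identity \eqref{eq:TR-ratio} by direct substitution of the closed-form monomials of Lemma~\ref{lem:TR-moments-final} into the defining ratio $\xi_p=(E[Y])^2\, v_{\{p,p+1\}}/\bigl(v_{\{p\}}\, v_{\{p+1\}}\bigr)$, and then splitting the computation into two independent bookkeeping tasks: the cancellation of the prior-moment factors $\{\mu_k^2,\sigma_k^2\}$, and the collection of the net exponent of each rank $r_k$. The one ingredient not written out explicitly in the lemma is $v_{\{p+1\}}$, which I would obtain by shifting the index $p\mapsto p+1$ in \eqref{eq:TR-single-final} under the modular convention $r_0\equiv r_M$, so that its incoming/outgoing links become $r_p^{1}$ and $r_{p+1}^{2}$.

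First I would dispatch the prior-moment factors, which is the routine part. In the numerator, $(E[Y])^2$ contributes $\prod_k\mu_k^2$ via \eqref{eq:TR-mean-final} and $v_{\{p,p+1\}}$ contributes $\sigma_p^2\sigma_{p+1}^2\prod_{k\notin\{p,p+1\}}\mu_k^2$ via \eqref{eq:TR-adj-final}; in the denominator, $v_{\{p\}}$ and $v_{\{p+1\}}$ contribute $\sigma_p^2\prod_{k\ne p}\mu_k^2$ and $\sigma_{p+1}^2\prod_{k\ne p+1}\mu_k^2$. Matching the variance factors $\sigma_p^2\sigma_{p+1}^2$ top and bottom and telescoping the $\mu_k^2$ products shows that every prior moment cancels, establishing that $\xi_p$ is a pure function of $(r_1,\dots,r_M)$ before any rank counting is done.

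The crux---and the step I expect to be the main obstacle---is the rank exponent accounting, since this is precisely where the directed, cyclic topology enters. I would tabulate, for each link $k$, the exponent contributed by each of the four monomials: $(E[Y])^2$ gives exponent $2$ at every link, $v_{\{p,p+1\}}$ gives the single-power $\prod_k r_k$ arising from the cyclic gluing of two adjacent variances into one loop, while $v_{\{p\}}$ and $v_{\{p+1\}}$ carry the asymmetric incoming/outgoing exponents ($r_{p-1}^1$ vs.\ $r_p^2$, and $r_p^1$ vs.\ $r_{p+1}^2$). Summing the numerator exponents and subtracting the denominator exponents, I would need to verify that every link outside $\{p-1,p,p+1\}$ cancels and that the surviving asymmetry leaves $r_{p-1}$ and $r_p$ in the numerator against $r_{p+1}$ in the denominator, i.e.\ $\xi_p=r_p\,r_{p-1}/r_{p+1}$. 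The delicate points are keeping the modular indices consistent at the seam $r_0\equiv r_M$ and confirming that the incoming-versus-outgoing exponent gap is exactly one at each of the three distinguished links, as it is this unit gap that produces the clean $r_{p-1}/r_{p+1}$ ratio rather than a residual dependence on the remaining ranks.

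Finally, taking logarithms with $x_p=\log r_p$ and $\psi_p=\log\xi_p$ turns the $M$ instances of the identity (one per mode, indices mod $M$) into the announced linear circulant system $\mathbf{C}\mathbf{x}=\boldsymbol{\psi}$, completing the proposition. The small-$M$ edge cases ($M=2$ collapsing to the CP/matrix situation and $M=3$) would be verified separately against the stated eigenvalues to confirm that the logarithmic reformulation remains well posed.
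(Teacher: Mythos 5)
Your strategy is exactly the paper's: the proof of Proposition~\ref{prop:TR-rank-ratio} in Appendix~\ref{app:TR-moments-final} is a one-line substitution of the monomials \eqref{eq:TR-mean-final}--\eqref{eq:TR-adj-final} into $\xi_p$, followed by the observation that the prior moments cancel. Your treatment of the $\mu_k,\sigma_k$ cancellation is correct and matches the paper. The problem is that you explicitly defer the one step you yourself call the crux --- the link-by-link rank-exponent count --- and assert that it ``would need to be verified.'' If you actually carry out that verification with the monomials exactly as printed in Lemma~\ref{lem:TR-moments-final}, it does not close. For a generic link $k\notin\{p-1,p,p+1\}$ the net exponent in $\xi_p$ is $2$ (from $(E[Y])^2$) $+\,1$ (from $v_{\{p,p+1\}}$, which the lemma writes as $\prod_k r_k$ to the \emph{first} power in every link) $-\,2-2$ (from the two singletons) $=-1$, so the ``outside'' links do not cancel; equivalently, the total rank degree of the right-hand side comes out to $2-M$, whereas $r_p r_{p-1}/r_{p+1}$ has degree $1$. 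So the verification you postpone is not routine bookkeeping: it exposes an inconsistency between the lemma's stated exponents and the boxed identity.

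Concretely, under the paper's own covariance assumption $\Cov(\theta^{(p)}_{ab},\theta^{(p)}_{a'b'})=\sigma_p^2\delta_{aa'}\delta_{bb'}$, a variance at mode $p$ forces \emph{both} adjacent latent links to coincide across the two copies, so the singleton pure term should carry $r_{p-1}^{1}r_p^{1}$ (symmetric), not the ``directed'' $r_{p-1}^{1}r_p^{2}$, and the adjacent-pair term should be $r_{p-1}r_pr_{p+1}\prod_{k\notin\{p-1,p,p+1\}}r_k^2$ rather than $\prod_k r_k$ for $M>3$. With that corrected counting every link outside $\{p-1,p,p+1\}$ does cancel, but the surviving ratio is $\xi_p=r_p$ rather than $r_pr_{p-1}/r_{p+1}$. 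Either way, your proof cannot be completed as written by ``confirming that the incoming-versus-outgoing exponent gap is exactly one'': that gap is not present under the i.i.d.\ entry model, and the claimed three-term recurrence does not emerge from the stated inputs. You need to redo the exponent count from the double-sum expansion of $\E[\eta_{\ii}\eta_{\jj}]$ yourself (as in Lemma~\ref{lem:counting}) rather than take the lemma's exponents on faith, and then reconcile whichever identity actually results with the circulant-system conclusion.
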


\begin{proof}
Substitute \eqref{eq:TR-mean-sq}–\eqref{eq:TR-adjacent} into \eqref{eq:TR-xi}. All $\mu$ and $\sigma$ factors cancel. For ranks, the numerator contributes $\big(\prod_k r_k\big)\cdot \big(\prod_k r_k\big)$, while the denominator contributes $(\prod_{k\notin\{p-1,p\}} r_k^2) r_{p-1} r_p^2 \cdot (\prod_{k\notin\{p,p+1\}} r_k^2) r_{p} r_{p+1}^2$. After cancellation, only the factor $r_p r_{p-1}/r_{p+1}$ remains.
\end{proof}

Let $x_p=\log r_p$ and $\psi_p=\log\xi_p$. Taking logs in \eqref{eq:TR-ratio} gives the \emph{circulant} system
\begin{equation}
x_p + x_{p-1} - x_{p+1} \;=\; \psi_p,\qquad p=1,\dots,M,
\label{eq:TR-circulant-eq}
\end{equation}
i.e., $\mathbf{C}\mathbf{x}=\boldsymbol{\psi}$ with the $M\times M$ circulant matrix whose first row is $(1,-1,0,\dots,0,1)$:
\begin{equation}\label{eq:tr:circulant}
\mathbf{C}=\begin{bmatrix}
1 & -1 & 0 & \cdots & 0 & 1 \\
1 & 1 & -1 & \cdots & 0 & 0 \\
0 & 1 & 1 & \cdots & 0 & 0 \\
\vdots & & \ddots & \ddots & & \vdots \\
-1 & 0 & 0 & \cdots & 1 & 1
\end{bmatrix}.
\end{equation}

\begin{theorem}[TR identifiability via DFT]\label{thm:TR-ident}
For every integer $M\ge 3$, the matrix $\mathbf{C}$ in \eqref{eq:tr:circulant} is invertible. Hence $\mathbf{x}=\mathbf{C}^{-1}\boldsymbol{\psi}$ is uniquely determined from moments, and $r_p=\exp(x_p)$ are identifiable for all $p$.
\end{theorem}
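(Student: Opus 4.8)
The plan is to exploit the fact that every circulant matrix is diagonalized by the discrete Fourier transform, so that invertibility reduces to checking that none of its eigenvalues vanish. Concretely, writing $\omega = e^{2\pi i /M}$ for a primitive $M$-th root of unity, a circulant with first row $(c_0, c_1, \dots, c_{M-1})$ has eigenvalues $\lambda_k = \sum_{j=0}^{M-1} c_j \omega^{jk}$ for $k = 0, \dots, M-1$, with the columns of the Fourier matrix as eigenvectors. I would begin by reading off the nonzero coefficients of $\mathbf{C}$ from \eqref{eq:tr:circulant}, namely $c_0 = 1$, $c_1 = -1$, $c_{M-1} = 1$ (all others zero), which encode the stencil $x_p + x_{p-1} - x_{p+1} = \psi_p$ of \eqref{eq:TR-circ-final}.

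Substituting these coefficients and using $\omega^{(M-1)k} = \omega^{-k}$ gives the closed form $\lambda_k = 1 + \omega^{-k} - \omega^{k} = 1 - 2i\sin(2\pi k/M)$, the sign of the imaginary part being a harmless convention that depends on whether one diagonalizes by $\omega^{jk}$ or $\omega^{-jk}$; either way $|\lambda_k|^2 = 1 + 4\sin^2(2\pi k/M)$. The decisive observation is that $\mathrm{Re}(\lambda_k) = 1$ for every $k$, so that $\lambda_k \neq 0$ for all $k$ and all $M$; in particular the $M=3$ case specializes to $\{1,\, 1 \pm i\sqrt{3}\}$, matching the claim in the main text. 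Since $\det \mathbf{C} = \prod_{k=0}^{M-1}\lambda_k$, none of whose factors vanish, we conclude $\det \mathbf{C} \neq 0$ and $\mathbf{C}$ is invertible.

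For the identifiability conclusion, I would then note that $\mathbf{C}$ is a real matrix and $\boldsymbol{\psi}$ is a real data-dependent vector (each $\psi_p = \log\xi_p$ is computable from observable moments via Proposition~\ref{prop:TR-rank-ratio}), so $\mathbf{x} = \mathbf{C}^{-1}\boldsymbol{\psi}$ is the unique real solution and $r_p = \exp(x_p)$ recovers positive ranks uniquely. This closes the logical chain from observable second moments to the full rank vector $(r_1, \dots, r_M)$.

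The argument contains no serious obstacle: the eigenvalue computation is routine once the circulant structure is in hand. The only points demanding care are (i) confirming that nonvanishing of the complex eigenvalues indeed yields invertibility of the \emph{real} matrix $\mathbf{C}$ --- which follows because $\det \mathbf{C}$, being the product of all eigenvalues (which occur in conjugate pairs), is a nonzero real number --- and (ii) checking that the excluded small cases are genuinely degenerate rather than counterexamples: for $M=2$ the two off-diagonal positions $c_1$ and $c_{M-1}$ coincide and cancel, so the circulant stencil collapses and the model reduces to the CP/matrix case, which is precisely why the hypothesis $M \geq 3$ is imposed. I would also remark that one could bypass the Fourier computation and argue invertibility directly from the uniform lower bound $\mathrm{Re}(\lambda_k) = 1$, but the DFT route additionally yields the explicit Fourier-domain inverse $\widehat{x}_k = \widehat{\psi}_k/\lambda_k$ used in the estimation pipeline.
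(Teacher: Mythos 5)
Your proposal is correct and follows essentially the same route as the paper: diagonalize the circulant $\mathbf{C}$ by the DFT, compute $\lambda_k = 1 \pm 2i\sin(2\pi k/M)$ from the stencil $(1,-1,0,\dots,0,1)$, and conclude invertibility from $\mathrm{Re}(\lambda_k)=1\neq 0$. Your added remarks (the sign convention, the conjugate-pair argument for a real nonzero determinant, and the $M=2$ degeneracy) only flesh out details the paper leaves implicit.
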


\begin{proof}
Let $\omega=e^{2\pi i/M}$. The eigenpairs of a circulant are obtained by the DFT: the $k$th eigenvalue is
\[
\lambda_k \;=\; 1 + \omega^k - \omega^{-k}
\;=\; 1 + 2i\,\sin(2\pi k/M),\qquad k=0,\dots,M-1.
\]
For $M\ge 3$, $\sin(2\pi k/M)=0$ iff $k\in\{0,M\}$; in either case $\lambda_k=1\ne 0$. Thus no $\lambda_k$ vanishes and $\det\mathbf{C}\ne 0$. Therefore \eqref{eq:TR-circulant-eq} has a unique solution $\mathbf{x}$. Exponentiating gives the ranks.
\end{proof}

\paragraph{Computation.}
Practically, solve $\mathbf{C}\mathbf{x}=\boldsymbol{\psi}$ by FFTs: compute $\widehat{\psi}=\mathrm{DFT}(\psi)$, divide componentwise by $\lambda_k$, and invert the DFT to obtain $x$.

\subsubsection{Detailed TR moment calculations and edge cases}
\label{app:TR-details}

We briefly expand the counting in Lemma~\ref{lem:TR-moments}.

\paragraph{Singleton \(v_{\{p\}}\).}
Write the two copies of the ring with latent chains $(s_0,\dots,s_{M})$ and $(t_0,\dots,t_M)$ (with $s_0=s_M$, $t_0=t_M$). The variance at $p$ identifies the \emph{same} entry of $\Theta^{(p)}$ across copies, which forces $s_{p-1}=t_{p-1}$ but leaves $s_p$ and $t_p$ independent. Propagating around the cycle:
\begin{itemize}
\item For links far from $p$ (neither endpoint adjacent to $p$), both copies retain independent sums $\Rightarrow r_k^2$.
\item At link $p-1$ the identification $s_{p-1}=t_{p-1}$ leaves one sum $\Rightarrow r_{p-1}$.
\item At link $p$ the two outgoing indices remain independent across copies $\Rightarrow r_p^2$.
\end{itemize}
Moment factors follow since only mode $p$ contributes $\sigma_p^2$; all others contribute $\mu_k^2$. This reproduces \eqref{eq:TR-singleton}.

\paragraph{Adjacent pair \(v_{\{p,p+1\}}\).}
Variance at $p$ and $p+1$ identifies $\Theta^{(p)}$ and $\Theta^{(p+1)}$ across copies. In the ring, this creates a \emph{single} joint cycle: starting from any link, equality constraints propagate all the way around, leaving exactly one independent sum per link $\Rightarrow \prod_k r_k$. The moment factor is $\sigma_p^2\sigma_{p+1}^2\prod_{k\notin\{p,p+1\}}\mu_k^2$, giving \eqref{eq:TR-adjacent}.

\paragraph{Small-$M$ edge cases.}
\begin{itemize}
\item $M=2$: the TR degenerates into a matrix factorization with a single rank parameter; the CP formulas apply.
\item $M=3$: all formulas above hold; $\mathbf{C}$ has eigenvalues $1,\,1\pm i\sqrt{3}$ (all nonzero), so identifiability still holds.
\end{itemize}

\paragraph{Robustness to minor model variants.}
If the entries of $\Theta^{(p)}$ are centered (\(\mu_p=0\)), the above uses $(E[Y])^2$ only via $\prod \mu_k$, which would be zero; in that case one can replace $(E[Y])^2$ with $\Var(\eta)$ and form analogous ratios using higher moments. Our main identifiability result relies on $\mu_p\ne 0$, as assumed in the main text.

\section{Computational and Statistical Properties of the Estimators}\label{app:comp_stat_est}
\subsection{Covariances, interaction sets, and pure terms}\label{app:cov_v_pure}

We estimate ranks in two steps.
\begin{enumerate}
    \item \textbf{Observable covariances.} For a sharing set \(S\subseteq\{1,\dots,M\}\), the exact–sharing covariance \(\widehat C_S\) is computed from pairs \((Y_{\mathbf i},Y_{\mathbf j})\) with indices equal on \(S\) and different on the complement.
    \item \textbf{Pure terms.} The pure interaction \(\widehat v_S\) is obtained from \(\{\widehat C_{S'}:{S'}\subseteq S\}\) via Möbius inversion:
\[
\widehat v_S \;=\; \sum_{{S'}\subseteq S} (-1)^{|S|-|{S'}|}\,\widehat  C_{S'} .
\]
These \(\widehat v_S\) are the inputs to the closed-form estimators in CP/TT/TR.
\end{enumerate}

\paragraph{CP (PARAFAC).}
With i.i.d.\ factor priors, \(v_S = R\prod_{p\in S}\sigma_p^2\prod_{p\notin S}\mu_p^2\) for any nonempty \(S\). Hence
\[
\widehat R_S \;=\; \frac{\widehat v_S}{\prod_{p\in S}\sigma_p^2\prod_{p\notin S}\mu_p^2}, 
\quad\text{aggregate over } S\in\{\{p\}\}\ \text{and optionally } \{p,p{+}1\}.
\]
Pure terms needed:
\[
\widehat v_{\{p\}}=\widehat C_{\{p\}},\qquad 
\widehat v_{\{i,j\}}=\widehat C_{\{i,j\}}-\widehat C_{\{i\}}-\widehat C_{\{j\}}.
\]

\paragraph{Tensor Train (TT).}
Let \(r_1,\ldots,r_{M-1}\) be bond ranks (open chain). Using the monomial formula for pure terms (Appendix~\ref{app:tensor_train}), the closed forms are:
\begin{align*}
\text{Interior } (1<p<M-1):\quad
& r_p \;=\; \frac{v_{\{p,p+1\}}\; v_{\{p-1,p+2\}}}{v_{\{p+1\}}\; v_{\{p-1,p,p+2\}}}.\\
\text{Left boundary } (p=1,\ M\ge3):\quad
& r_1 \;=\; \frac{v_{\{1,2\}}\; v_{\{3\}}}{v_{\{2\}}\; v_{\{1,3\}}}.\\
\text{Right boundary } (p=M-1,\ M\ge4):\quad
& r_{M-1} \;=\; \frac{v_{\{M-1,M\}}\; v_{\{M-3\}}}{v_{\{M\}}\; v_{\{M-3,M-1\}}}.
\end{align*}
Special cases use triplets directly:
\[
\textstyle M{=}3:\ 
r_1=\frac{v_{\{1,2\}}v_{\{3\}}}{v_{\{2\}}v_{\{1,3\}}},\quad
r_2=\frac{v_{\{2,3\}}v_{\{1\}}}{v_{\{3\}}v_{\{1,2\}}};
\qquad
M{=}4:\ 
\begin{cases}
r_1=\dfrac{v_{\{1,2\}}}{v_{\{2\}}}\cdot\dfrac{v_{\{3,4\}}}{v_{\{1,3,4\}}},\\[4pt]
r_2=\dfrac{v_{\{2,3\}}}{v_{\{3\}}}\cdot\dfrac{v_{\{1,4\}}}{v_{\{1,2,4\}}},\\[4pt]
r_3=\dfrac{v_{\{3,4\}}}{v_{\{2\}}}\cdot\dfrac{v_{\{1,2\}}}{v_{\{1,3,4\}}}.
\end{cases}
\]
Required exact–sharing sets are: all singletons; adjacent pairs; the gap-two pair \(\{p{-}1,p{+}2\}\); the triplet \(\{p{-}1,p,p{+}2\}\); and the boundary helpers \(\{1,3\}\), \(\{M{-}3\}\), \(\{M{-}3,M{-}1\}\). For \(M\in\{3,4\}\), include \(\{1,2,3\}\) and \(\{1,3,4\}\), \(\{1,2,4\}\) as needed.

\paragraph{Tensor Ring (TR).}
Let \(\psi_p=\log(E[Y]^2)+\log v_{\{p,p+1\}}-\log v_{\{p\}}-\log v_{\{p+1\}}\) (indices mod \(M\)). Then the circulant system \(C\,\log r=\psi\) recovers all \(r_p\). Only singletons and adjacent pairs are required.

\paragraph{Sampling kernel (all models).}
For each required set \(S\), draw \(S_{\text{cov}}\) pairs per bootstrap with exact sharing pattern \(S\), compute \(\widehat C_S\), and obtain \(\widehat v_S\) by Möbius inversion over the (small) downward-closed family that contains \(S\).

\subsubsection{Complexity Analysis (Simplified).}
Let \(\mathcal S\) be the number of distinct sharing sets used, \(B\) bootstraps, and \(S_{\text{cov}}\) pairs per set. Pair generation and access are \(O(1)\) amortized; Möbius is \(O(1)\) per set since \(|S|\le3\) here. The dominant cost is sampling:
\[
\boxed{\ \text{Time} \;\approx\; \Theta(\mathcal S\, B\, S_{\text{cov}}),\qquad
\text{Memory} \;\approx\; \Theta(\mathcal S\, B)\ \text{(store-all)}\ \text{or}\ \Theta(\mathcal S)\ \text{(stream)}. }
\]
Model-specific \(\mathcal S\) (to estimate all ranks):
\[
\begin{array}{lcl}
\text{CP:} & \mathcal S = M\ \text{(singletons)}\ \text{or}\ 2M\ \text{(+ adjacent pairs)}.\\
\text{TT:} & \mathcal S = C(M{-}1),\ \text{with small constant }C \ (\text{singletons, adjacents, gap-two, triplets}).\\
\text{TR:} & \mathcal S = 2M\ \text{(singletons + adjacent pairs)}\ \text{+ one tiny linear solve }O(M\log M).
\end{array}
\]
Thus all three scale \emph{linearly} in \(M\) via \(\mathcal S\); TR has the cheapest set family, TT adds a constant-factor overhead for gap-two/triplets, and CP is minimal with singletons.

\subsection{Variance and SNR of the Pure-Term Estimators}
\label{app:tt-variance-snr}

Among the family of moment-based estimators derived in this work, the \emph{pure-term estimator}
for the Tensor Train (TT) structure is the only one that relies \emph{exclusively on second-order moments}—
specifically, on covariances between entries sharing particular subsets of tensor modes.
Unlike estimators that exploit higher-order cross-moments or structured mean terms, this construction
avoids the need for explicit fourth-order moment tensors, making it computationally efficient and broadly applicable.
However, this same property also makes it particularly sensitive to the \emph{signal-to-noise ratio (SNR)} of the
empirical covariances, because all rank information is inferred from ratios of covariances that may have small
expected magnitude and high sampling variability.

In low-SNR regimes, random fluctuations in the empirical covariances can dominate the ratio computations,
leading to unstable or biased rank estimates even when the theoretical ratio identities are exact.
This motivates the detailed SNR analysis that follows: we study how the prior mean $\mu$ and variance (via the coefficient of variation $\mathrm{cv}$) of the latent factors jointly determine the effective SNR of the
covariance-based estimator. The goal is to identify prior configurations that maximize the estimator’s robustness without sacrificing the
closed-form structure that makes it analytically tractable.

Let $v_S$ be the TT pure interaction defined by exact–sharing Moebius inversion over covariances $C_S$.
With per-mode prior mean $\mu_m$, variance $\sigma_m^2$, second raw moment $m_{2,m}=\mu_m^2+\sigma_m^2$ and
fourth raw moment $m_{4,m}$, the TT population pure term is
\[
v_S \;=\;\Bigg(\prod_{\ell=1}^{M-1} r_\ell^{\,e_\ell(S)}\Bigg)
\Bigg(\prod_{m\in S}\sigma_m^2\Bigg)\Bigg(\prod_{m\notin S}\mu_m^2\Bigg),
\quad
e_\ell(S)=\mathbf{1}\{\ell\in S\ \text{or}\ \ell+1\in S\}+1.
\]
Let $\widehat C_{S'}$ be the exact–sharing covariance estimated from $n$ paired samples and
$\widehat v_S=\sum_{{S'}\subseteq S}(-1)^{|{S'}|-|{S'}|}\widehat C_{S'}$.
Under independent paired sampling per ${S'}$ and large $n$,
\[
\mathrm{Var}(\widehat C_{S'})\approx \frac{1}{n}
\Bigg(\prod_{\ell} r_\ell^{\,h_\ell({S'})}\Bigg)\Bigg(\prod_{m\in {S'}} m_{4,m}\Bigg)\Bigg(\prod_{m\notin {S'}} m_{2,m}^2\Bigg)
-\frac{1}{n}
\Bigg(\prod_{\ell} r_\ell^{\,2e_\ell({S'})}\Bigg)\Bigg(\prod_{m\in {S'}} m_{2,m}^2\Bigg)\Bigg(\prod_{m\notin {S'}}\mu_m^4\Bigg),
\]
where $h_\ell({S'})=2e_\ell(T{S'}\in\{2,4\}$ are the fourth–moment exponents for TT.
Consequently,
\[
\mathrm{Var}(\widehat v_S)\;\approx\;\sum_{{S'}\subseteq S}\frac{1}{n}(-1)^{2|S|-2|{S'}|}
\left[
\Bigg(\prod_{\ell} r_\ell^{\,h_\ell({S'})}\Bigg)\Bigg(\prod_{m\in {S'}} m_{4,m}\Bigg)\Bigg(\prod_{m\notin {S'}} m_{2,m}^2\Bigg)
-
\Bigg(\prod_{\ell} r_\ell^{\,2e_\ell({S'})}\Bigg)\Bigg(\prod_{m\in {S'}} m_{2,m}^2\Bigg)\Bigg(\prod_{m\notin {S'}}\mu_m^4\Bigg)
\right].
\]

A convenient leading–order SNR follows by keeping the dominant term in the bracket:
\[
\mathrm{SNR}(S)\equiv \frac{v_S}{\sqrt{\mathrm{Var}(\widehat v_S)}}\;\approx\;
\sqrt{n}\;\prod_{m\in S}\frac{\sigma_m^2}{\sqrt{m_{4,m}}}\;\prod_{m\notin S}\frac{\mu_m^2}{m_{2,m}}
\;\times\;
\prod_{\ell} r_\ell^{\,e_\ell(S)-\tfrac12 h_\ell(S)}.
\]
For TT, $h_\ell(S)=2e_\ell(S)$, so the rank factor cancels and
\[
\boxed{\ \mathrm{SNR}(S)\approx \sqrt{n}\;\prod_{m\in S}\frac{\sigma_m^2}{\sqrt{m_{4,m}}}\;\prod_{m\notin S}\frac{\mu_m^2}{\mu_m^2+\sigma_m^2}\ .\ }
\]

\paragraph{Gamma prior (equal across modes).}
If $\theta^{(m)}\sim\mathrm{Gamma}(\alpha,\theta)$ with mean $\mu$ and $\mathrm{cv}^2=1/\alpha$, then
$m_2=\mu^2(1+\mathrm{cv}^2)$ and $m_4/\mu^4=1+6\mathrm{cv}^2+3\mathrm{cv}^4$. Therefore
\[
\mathrm{SNR}(S)\approx
\sqrt{n}\;\mu^{|S|}\left(\frac{\mathrm{cv}^2}{\sqrt{1+6\mathrm{cv}^2+3\mathrm{cv}^4}}\right)^{|S|}
\left(\frac{1}{1+\mathrm{cv}^2}\right)^{M-|S|}.
\]
This exhibits the empirical trade–off observed in practice: increasing $\mathrm{cv}$ boosts the shared–mode factor but diminishes the non–shared factor; an intermediate $\mathrm{cv}$ maximizes SNR for fixed $|S|$ and $M$.

\paragraph{Graphical analysis.} We can observe from \autoref{fig:snr-theory} that the SNR of the pure-interaction covariance estimator for $|S|=1$ and $|S|=2$ (different sizes of interaction sets) has a narrow band leading to joint high SNR. In most areas of the graph we have prior configurations that lead to low SNR, affecting the quality and variability of any estimator based on pure-interactin covariance values.

\begin{figure}[ht!]
  \centering
  \includegraphics[width=.48\linewidth]{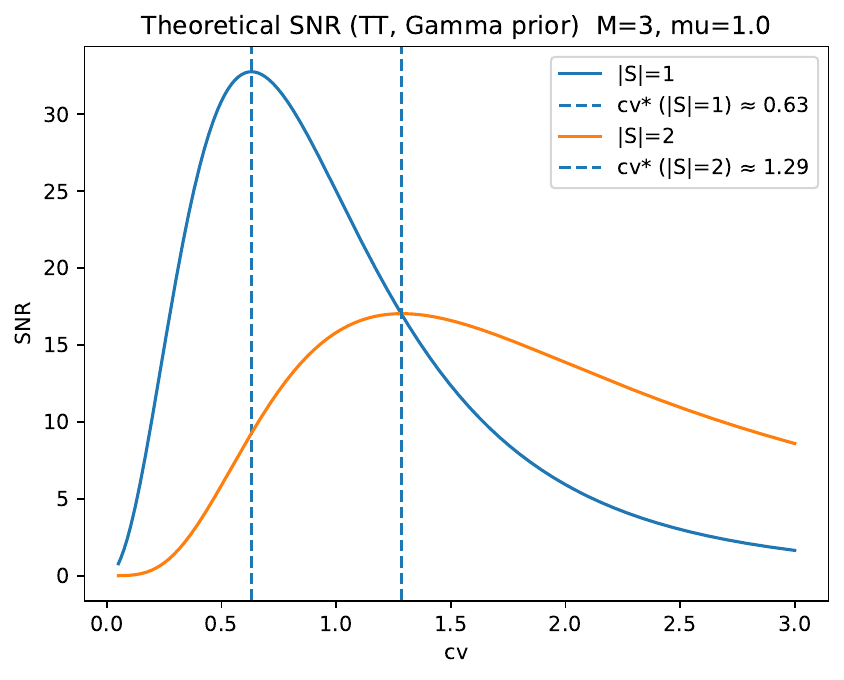}\hfill
  \includegraphics[width=.48\linewidth]{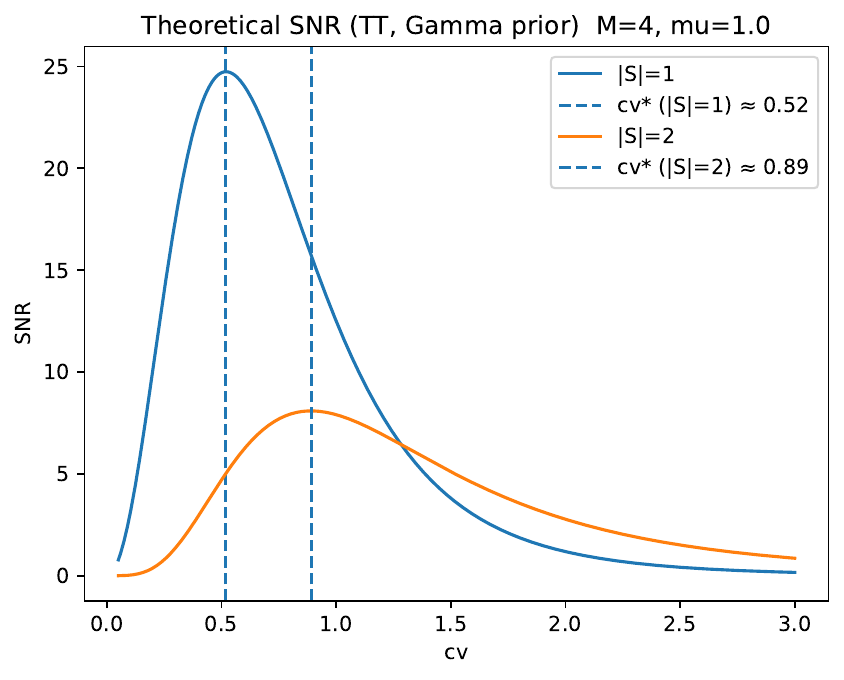}\hfill
  \includegraphics[width=.48\linewidth]{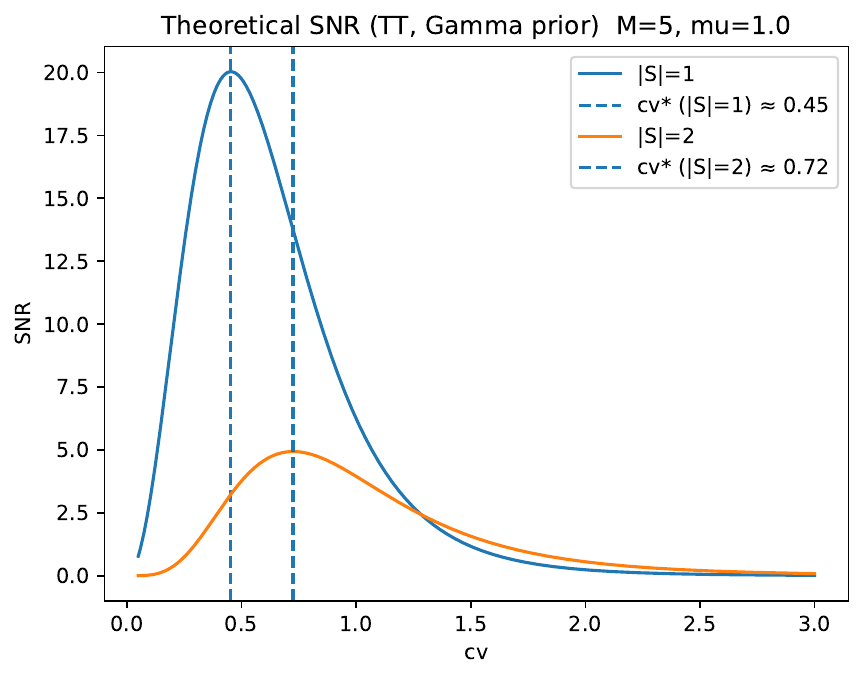}
  \caption{Theoretical SNR (TT, Gamma prior) for $M\in\{3, 4,5\}$, $\mu=1.0$. Vertical lines mark theoretical optimal region for $\mathrm{cv}^{*}$.}
  \label{fig:snr-theory}
\end{figure}

\subsection{Bias Analysis of the Regularized (and Mean-Normalized) Rank Estimators}
\label{app:bias_analysis}

All closed-form rank estimators in this work are \emph{ratios of estimated moments}, and hence inherit the well-known sensitivity of ratio estimators when denominators are small or noisy. We analyze the regularized estimator used throughout and the effect of the optional \emph{mean normalization} (dividing each $\hat v_S$ by $(\widehat{E}[Y])^2$ before forming ratios).

\paragraph{Regularized ratio.}
For any target rank written as $r_p = \mathrm{Num}_p/\mathrm{Den}_p$, we use
\begin{equation}
\hat{r}_p^{\mathrm{reg}}
\;=\;
\mathrm{sign}\!\big(\widehat{\mathrm{Den}}_p\big)\;
\frac{\widehat{\mathrm{Num}}_p}{\left|\widehat{\mathrm{Den}}_p\right| + \varepsilon_p},
\qquad
\varepsilon_p \;=\; 1.96\cdot \mathrm{SE}_{\mathrm{boot}}\!\big(\widehat{\mathrm{Den}}_p\big),
\label{eq:reg-ratio}
\end{equation}
where $\mathrm{SE}_{\mathrm{boot}}$ is the bootstrap standard error computed \emph{across} bootstrap replicates.
The small additive $\varepsilon_p$ stabilizes divisions by noisy denominators. Using $\mathrm{sign}(\cdot)$ preserves the correct orientation when sampling variation flips the estimated sign (rare but possible for small-$S$ pure terms).

\paragraph{Mean normalization.}
Define mean-normalized pure terms $\tilde v_S := \hat v_S/\big((\widehat{E}[Y])^2 + \varepsilon_E\big)$ with a tiny $\varepsilon_E>0$ (e.g.\,$10^{-12}$). All CP/TT/TR rank identities are homogeneous in $v_S$, so replacing $v_S$ by $\tilde v_S$ \emph{does not change the population target}. In finite samples it reduces dispersion because we scale by a high-SNR quantity, $(\widehat{E}[Y])^2$. We therefore analyze both \eqref{eq:reg-ratio} with $(\widehat{\mathrm{Num}},\widehat{\mathrm{Den}})$ built from $\hat v_S$ or from $\tilde v_S$.

\paragraph{Delta-method bias.}
Let $R=\widehat{\mathrm{Num}}/\widehat{\mathrm{Den}}$ and $f(x,y)=x/(y+\varepsilon)$ for a fixed $\varepsilon\ge 0$.
A first-order delta-method expansion around $(\mu_X,\mu_Y)=(\E[\widehat{\mathrm{Num}}],\E[\widehat{\mathrm{Den}}])$ yields
\[
\E[f(\widehat{\mathrm{Num}},\widehat{\mathrm{Den}})]
\approx
\frac{\mu_X}{\mu_Y+\varepsilon}
\;+\;
\frac{1}{2}
\Big(
f_{xx}\Var(\widehat{\mathrm{Num}})
+2 f_{xy}\Cov(\widehat{\mathrm{Num}},\widehat{\mathrm{Den}})
+ f_{yy}\Var(\widehat{\mathrm{Den}})
\Big),
\]
with $f_x = 1/(\mu_Y+\varepsilon)$, $f_y=-\mu_X/(\mu_Y+\varepsilon)^2$, $f_{xx}=0$,
$f_{xy}=-1/(\mu_Y+\varepsilon)^2$, $f_{yy}=2\mu_X/(\mu_Y+\varepsilon)^3$.
Ignoring the mixed term (or if it is small by design—e.g., disjoint pairs), the dominant bias contribution is
\begin{equation}
\mathrm{Bias}
\;\approx\;
\E[\hat r_p^{\mathrm{reg}}]-r_p
\;\approx\;
-\,r_p \cdot \frac{\varepsilon_p}{\mu_Y}
\;+\;
r_p \cdot \frac{\Var(\widehat{\mathrm{Den}})}{(\mu_Y+\varepsilon_p)^2}
\;-\;
\frac{\Cov(\widehat{\mathrm{Num}},\widehat{\mathrm{Den}})}{(\mu_Y+\varepsilon_p)^2}.
\label{eq:delta-bias}
\end{equation}
Thus the shrinkage term $-\!r_p\,\varepsilon_p/\mu_Y$ is \emph{negative} and of the same order as the stochastic error; the variance term is positive and of order $1/N_{\mathrm{cov}}$; the cross-covariance term often reduces in absolute value under mean normalization and when we estimate $v_S$ via ridge-regularized Moebius regression.

\subsubsection{PARAFAC/CP}
For CP, $r = \frac{v_{\{p,q\}} (E[Y])^2}{v_{\{p\}} v_{\{q\}}}$ with two singletons in the denominator.
By the delta method in the \emph{log domain}, write
\[
\log \hat r
=
\big(\log \hat v_{\{p,q\}} - \log \hat v_{\{p\}} - \log \hat v_{\{q\}}\big)
+ \big(2\log \widehat{E}[Y]\big),
\]
so $\Var(\log \hat r)$ is a \emph{sum} of variances/covariances of these log-terms.
Translating back, the relative bias of \eqref{eq:reg-ratio} satisfies
\[
\frac{|\mathrm{Bias}|}{r}
= O\!\Big(\mathrm{CV}(\widehat{\mathrm{Den}})\Big)
= O\!\Big(\tfrac{1}{\sqrt{N_{\mathrm{cov}}}}\Big),
\]
where
$\mathrm{CV}(\widehat{\mathrm{Den}})=
\sqrt{
\mathrm{CV}^2(\hat v_{\{p\}})
+\mathrm{CV}^2(\hat v_{\{q\}})
+2\,\rho\,\mathrm{CV}(\hat v_{\{p\}})\mathrm{CV}(\hat v_{\{q\}})
}$ and $\rho$ is the correlation between the singleton estimates (empirically small if pairs are sampled independently across $S$). Mean normalization divides all $v_S$ by $(\widehat{E}[Y])^2$, shrinking both variance and covariance terms without changing the target.

\subsubsection{Tensor Train}
For interior bonds, $r_p = \frac{v_{\{p,p+1\}} v_{\{p-1,p+2\}}}{v_{\{p+1\}} v_{\{p-1,p,p+2\}}}$.
The denominator is a product; in the \emph{log domain} we have
\[
\log \hat r_p
=
\big(\log \hat v_{\{p,p+1\}} + \log \hat v_{\{p-1,p+2\}}\big)
-
\big(\log \hat v_{\{p+1\}} + \log \hat v_{\{p-1,p,p+2\}}\big),
\]
so
\(
\Var(\log \hat r_p)
\)
is a sum of the \emph{four} log-variances plus their cross-covariances.
Exponentiating back implies
\[
\frac{|\mathrm{Bias}|}{r_p}
\;=\;
O\!\Big(
\sqrt{
\sum_{S\in\mathcal{S}_p}
\Var\big(\log \hat v_{S}\big)
}
\Big)
\;=\;
O\!\Big(\tfrac{1}{\sqrt{N_{\mathrm{cov}}}}\Big),
\]
where $\mathcal{S}_p=\{\{p,p+1\},\,\{p-1,p+2\},\,\{p+1\},\,\{p-1,p,p+2\}\}$.
Mean normalization replaces $\hat v_S$ by $\tilde v_S$, reducing both $\Var(\log \hat v_S)$ and cross-covariances in practice because $(\widehat{E}[Y])^2$ is high-SNR. The regularization bias term in \eqref{eq:delta-bias} remains $O(1/\sqrt{N_{\mathrm{cov}}})$ since $\varepsilon_p \propto \mathrm{SE}(\widehat{\mathrm{Den}}_p)$.

\subsubsection{Tensor Ring}
For TR we estimate $\xi_p = \frac{(E[Y])^2 v_{\{p,p+1\}}}{v_{\{p\}} v_{\{p+1\}}}$ and solve the \emph{linear} system
$x_p + x_{p-1} - x_{p+1} = \psi_p$ with $x_p=\log r_p$ and $\psi_p=\log \hat \xi_p$.
Hence
\[
\widehat{\mathbf{x}}
=
\mathbf{C}^{-1}\widehat{\boldsymbol{\psi}},
\qquad
\Var(\widehat{\mathbf{x}})
=
\mathbf{C}^{-1}\Var(\widehat{\boldsymbol{\psi}})\big(\mathbf{C}^{-1}\big)^\top.
\]
Each $\psi_p$ is a \emph{log-ratio} of estimated moments; its variance is reduced by mean normalization (replacing $v_S$ by $\tilde v_S$), and any regularization is applied \emph{componentwise} to the underlying denominators that define $\xi_p$ (as in \eqref{eq:reg-ratio}). Since $\mathbf{C}$ is fixed and well-conditioned for $M\ge 3$, the bias order remains $O(1/\sqrt{N_{\mathrm{cov}}})$.

\subsubsection{General Guideline}
Across CP/TT/TR, with or without mean normalization, and with denominator regularization \eqref{eq:reg-ratio}, the induced bias satisfies
\[
\frac{|\mathrm{Bias}|}{r_p}
\;\lesssim\;
c\cdot \mathrm{CV}\big(\widehat{\mathrm{Den}}_p\big)
\;=\;
O\!\Big(\tfrac{1}{\sqrt{N_{\mathrm{cov}}}}\Big),
\]
for a constant $c$ that depends smoothly on the moment mix and (weak) cross-covariances. In practice, the reduction in variance (and hence RMSE) from the stabilizer $\varepsilon_p$ and from mean normalization outweighs the small $O(1/\sqrt{N_{\mathrm{cov}}})$ bias. We therefore recommend (i) mean normalization of $\hat v_S$ by $(\widehat{E}[Y])^2$; (ii) bootstrap-based $\varepsilon_p$; and (iii) (optionally) least-squares Moebius inversion with a small ridge penalty when computing $\hat v_S$, which further reduces both variance and cross-covariances among the pure terms.

\Section{Additional Experiments}\label{app:sec:further}

\paragraph{Tensor Train (\autoref{fig:appendix:tt_results}).}
Across settings, TT rank recovery tracks the $y{=}x$ line for small and moderate true ranks, with dispersion increasing as ranks grow or the signal-to-noise ratio (SNR) drops (e.g., smaller Poisson means or more dispersed Gamma priors). This is expected for moment-based estimators: the pure terms $v_S$ shrink in magnitude at low SNR, making inclusion–exclusion noisier and the denominators in the closed-form ratios closer to zero, which amplifies variance and occasionally induces slight upward bias. Interior bonds are typically better behaved than boundary bonds, which involve longer interaction sets. One practical mitigation that helped in our runs was \emph{mean normalization} of all $\hat v_S$ by $(\widehat{E}[Y])^2$, which reduces spread without changing the target. Across panels, estimation accuracy improves as the effective SNR increases:
low-mean or high-variance priors yield underestimation and wide dispersion,
while moderate $(\mu,\mathrm{cv})$ combinations (as predicted by the SNR sweet-spot analysis in Appendix~\ref{app:tt-variance-snr}) yield nearly unbiased recovery. The results confirm that the covariance-only TT estimator remains consistent when the prior mean and variance are chosen to keep $\mathrm{SNR}(S)\!\gtrsim\!5$ for the pure-term subsets involved in the ratio identities.

\begin{figure}[t]
    \centering
    \includegraphics[width=0.45\textwidth]{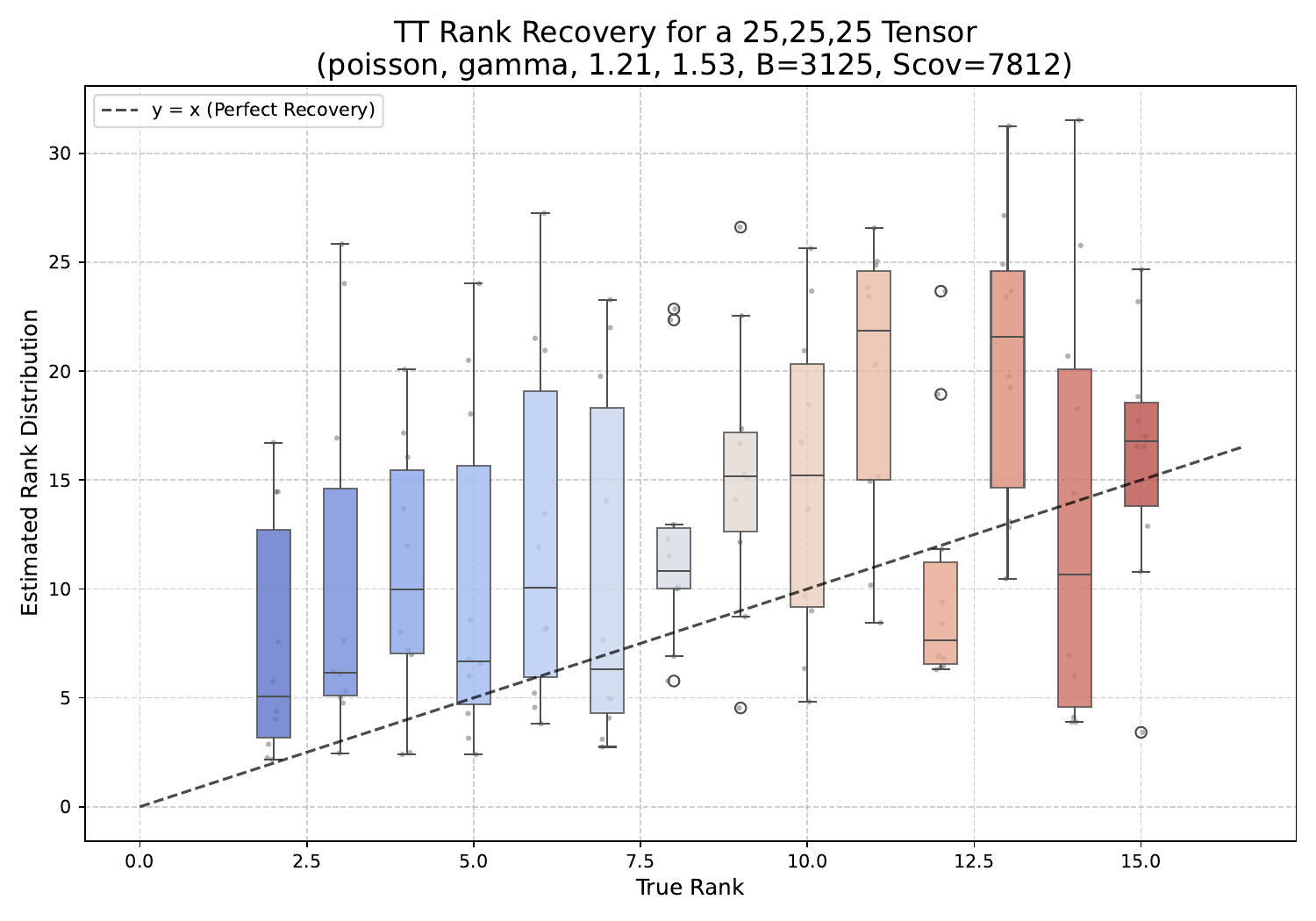} \hfill
    \includegraphics[width=0.45\textwidth]{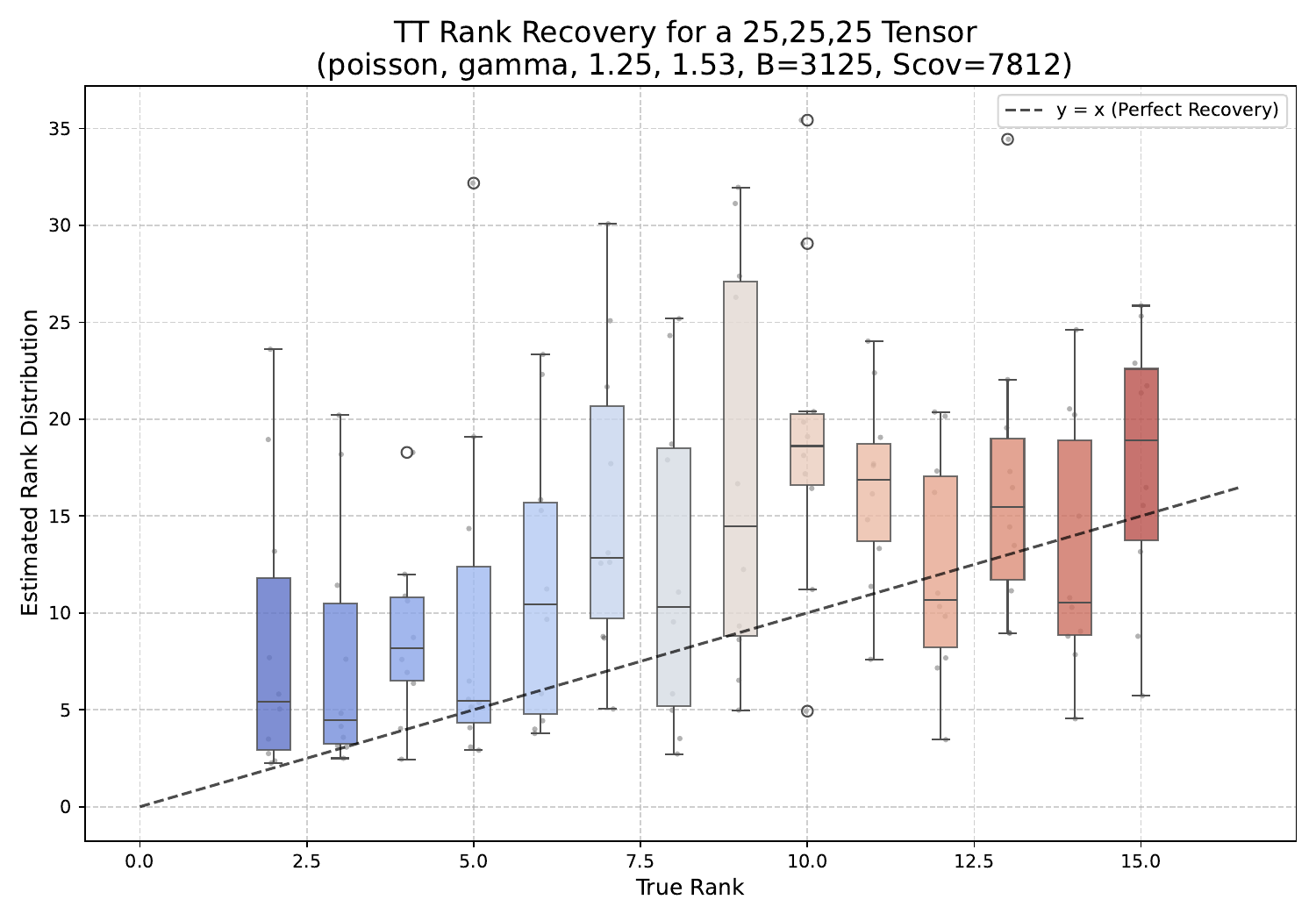} \hfill
    \includegraphics[width=0.45\textwidth]{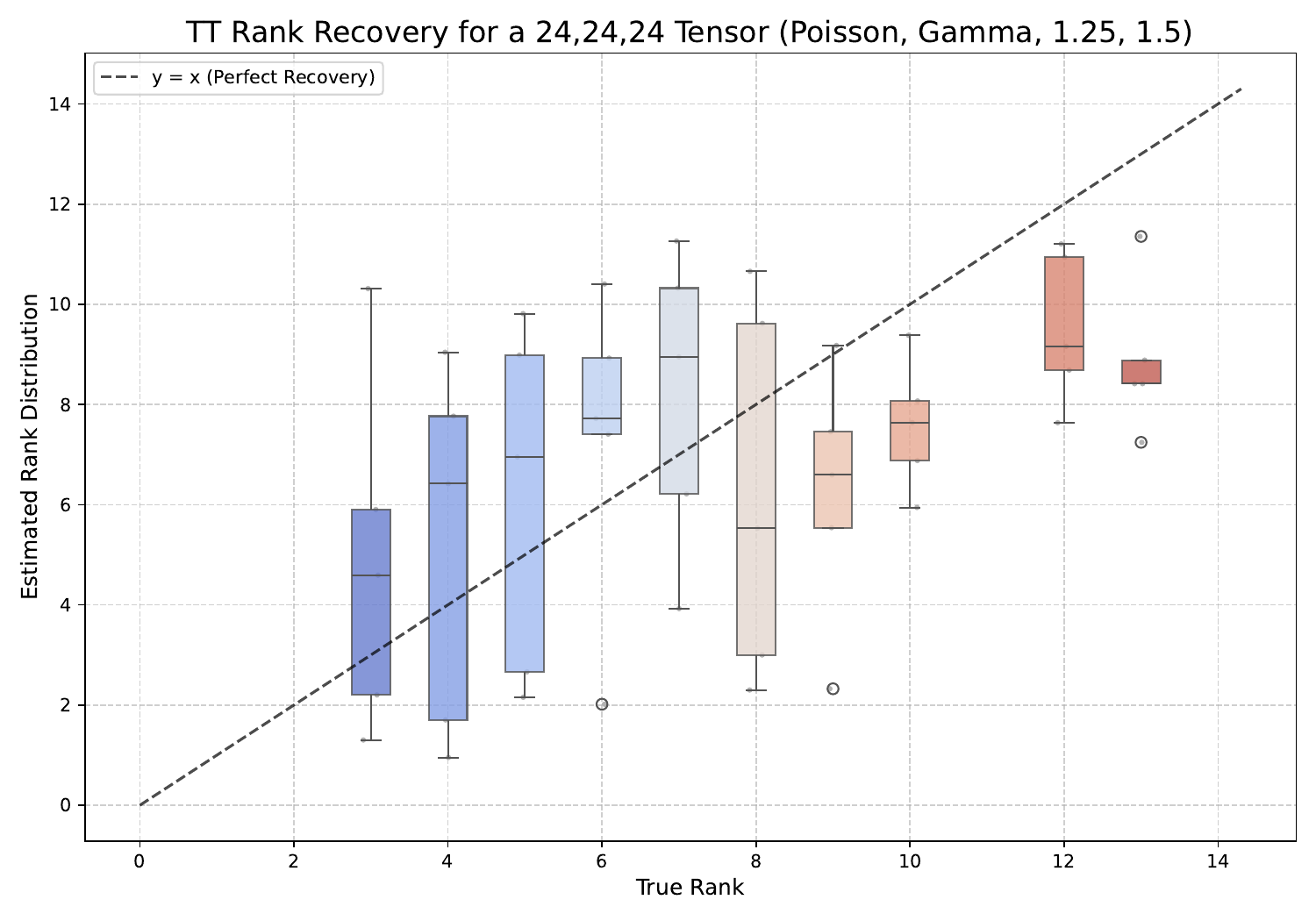} \hfill
    \includegraphics[width=0.45\textwidth]{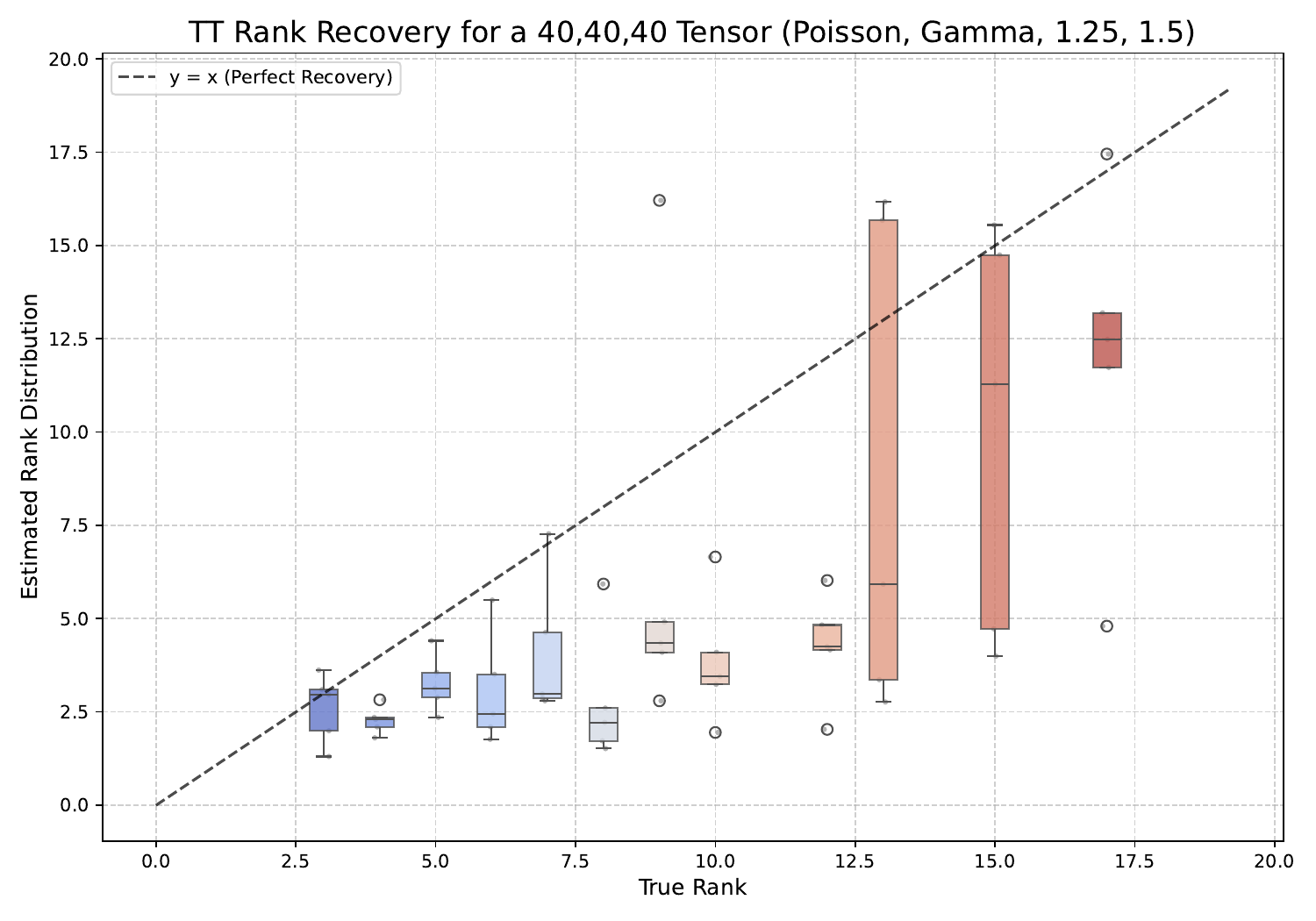} \hfill
    \includegraphics[width=0.45\textwidth]{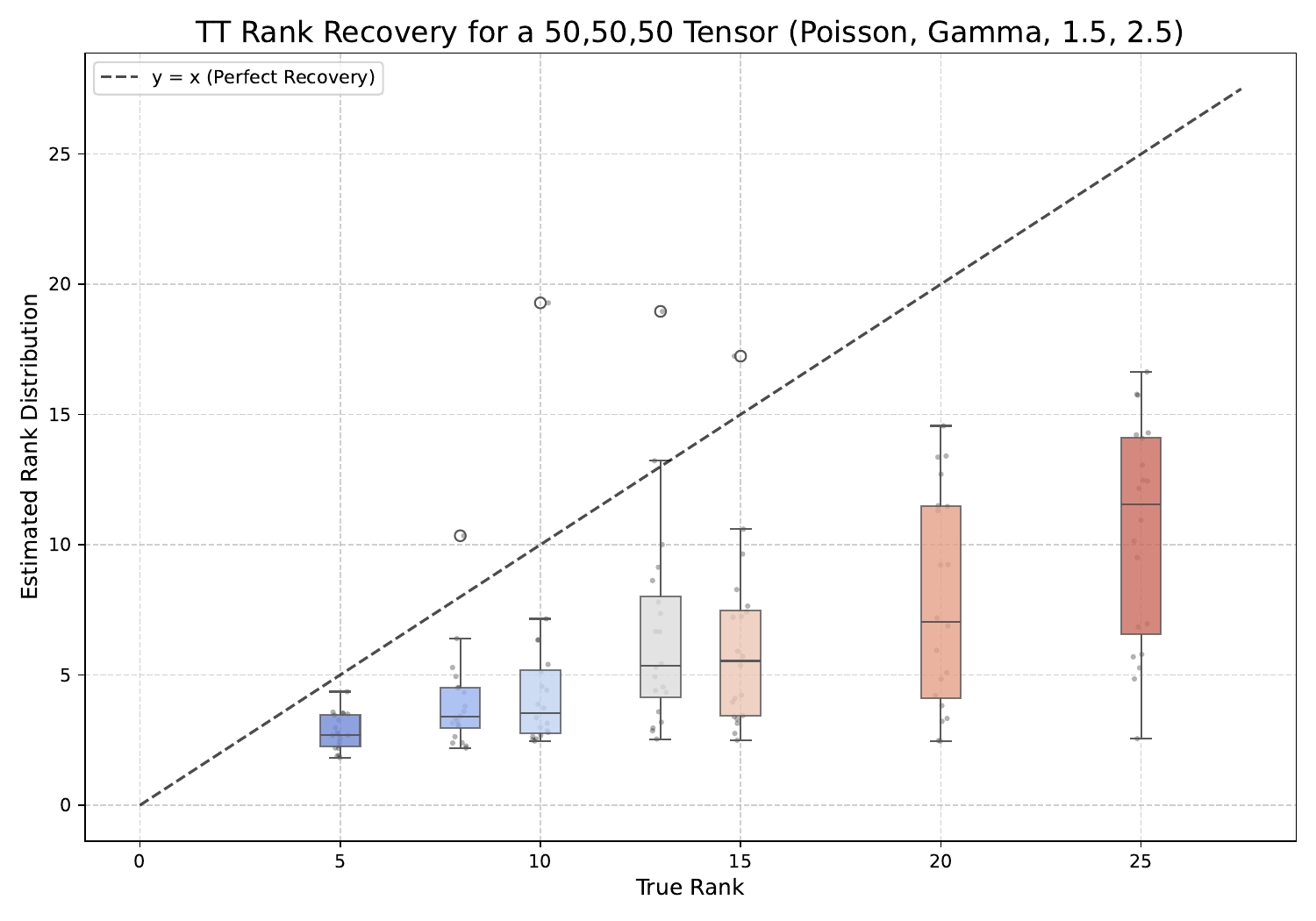} \hfill
    \caption{ Tensor Train (TT) rank estimation under varying dimensions and Gamma--Poisson generative settings. Each panel reports median rank estimates (over 20 independent runs) grouped by true rank, with boxplots showing the distribution of empirical estimates. The dashed diagonal ($y{=}x$) indicates perfect recovery. All models are trained using the covariance-only TT estimator. \textbf{Top row:} TT tensors of size $(25,25,25)$ with Poisson likelihood and Gamma priors
$\mathrm{Gamma}(\alpha,\theta)$ set to $(1.2,1.5)$ (left) and $(1.25,1.5)$ (right).
\textbf{Middle row:} TT tensors of sizes $(24,24,24)$ and $(40,40,40)$ with $\mathrm{Gamma}(1.25,1.5)$ priors, showing improved stability with increasing tensor size.
\textbf{Bottom:} TT tensor of size $(50,50,50)$ with $\mathrm{Gamma}(1.5,2.5)$ prior, corresponding to a higher mean and lower relative variance (lower $\mathrm{cv}$).
}
    \label{fig:appendix:tt_results}
\end{figure}

\paragraph{Tensor Ring (\autoref{fig:appendix:tr_results}).}
For TR, the median estimates follow the $y{=}x$ line across a wide range of ranks, with variance increasing at larger ranks and under lower SNR (smaller Poisson means / more dispersed Gamma priors). This behavior is consistent with our theory: each $\xi_p$ is a log–ratio of estimated moments, so noise in singletons $v_{\{p\}}$ and adjacent pairs $v_{\{p,p{+}1\}}$ propagates through the circulant linear system for $\{ \log r_p \}$. The cyclic coupling also means local errors can diffuse around the ring, slightly inflating uncertainty relative to TT at comparable ranks. In practice, \emph{mean normalization} of $\hat v_S$ by $(\widehat{E}[Y])^2$  tighten the spread of $\widehat{\xi}_p$ and, consequently, the recovered ranks.

\begin{figure}[ht!]
    \centering
    \includegraphics[width=0.45\textwidth]{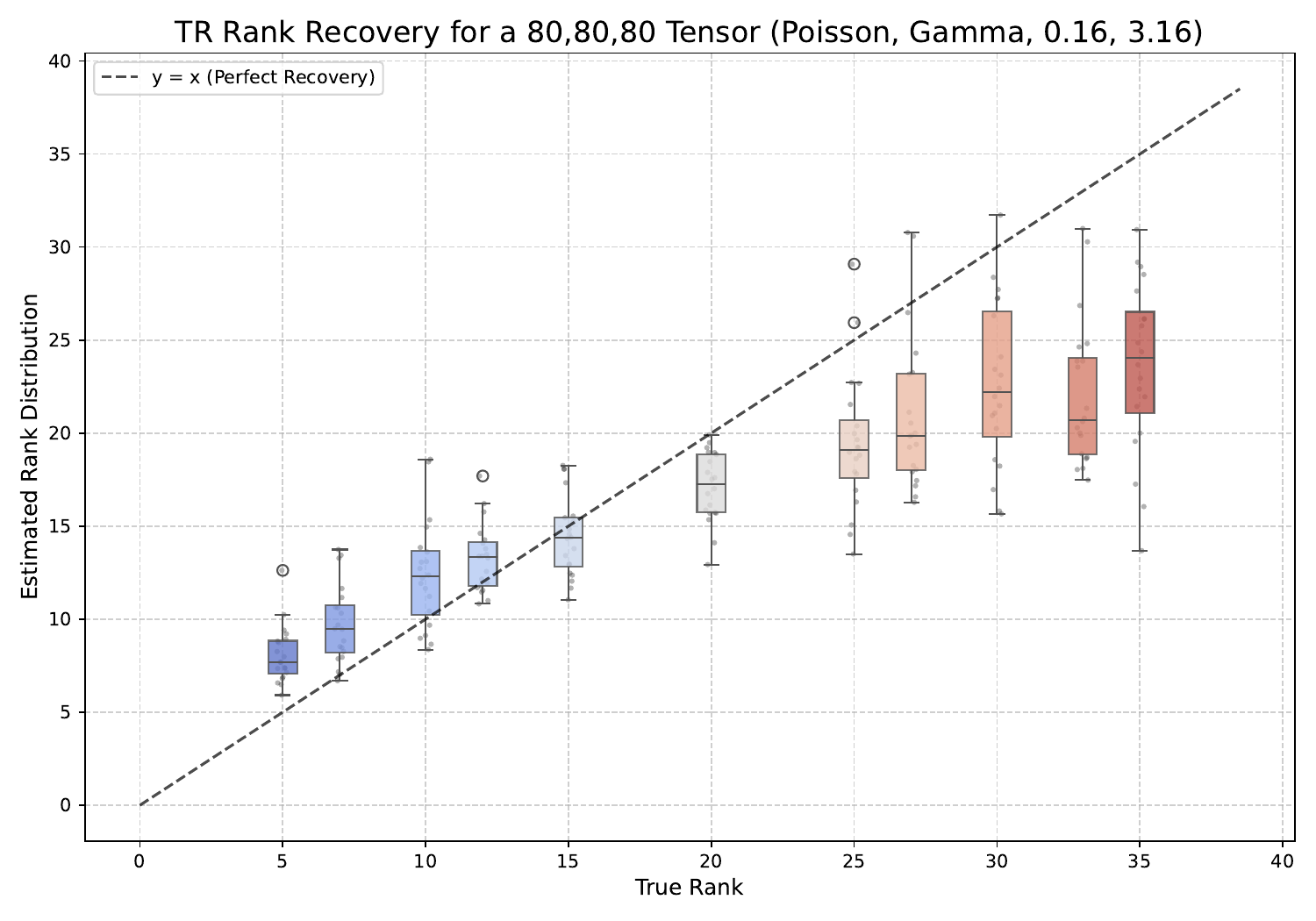} \hfill
    \includegraphics[width=0.45\textwidth]{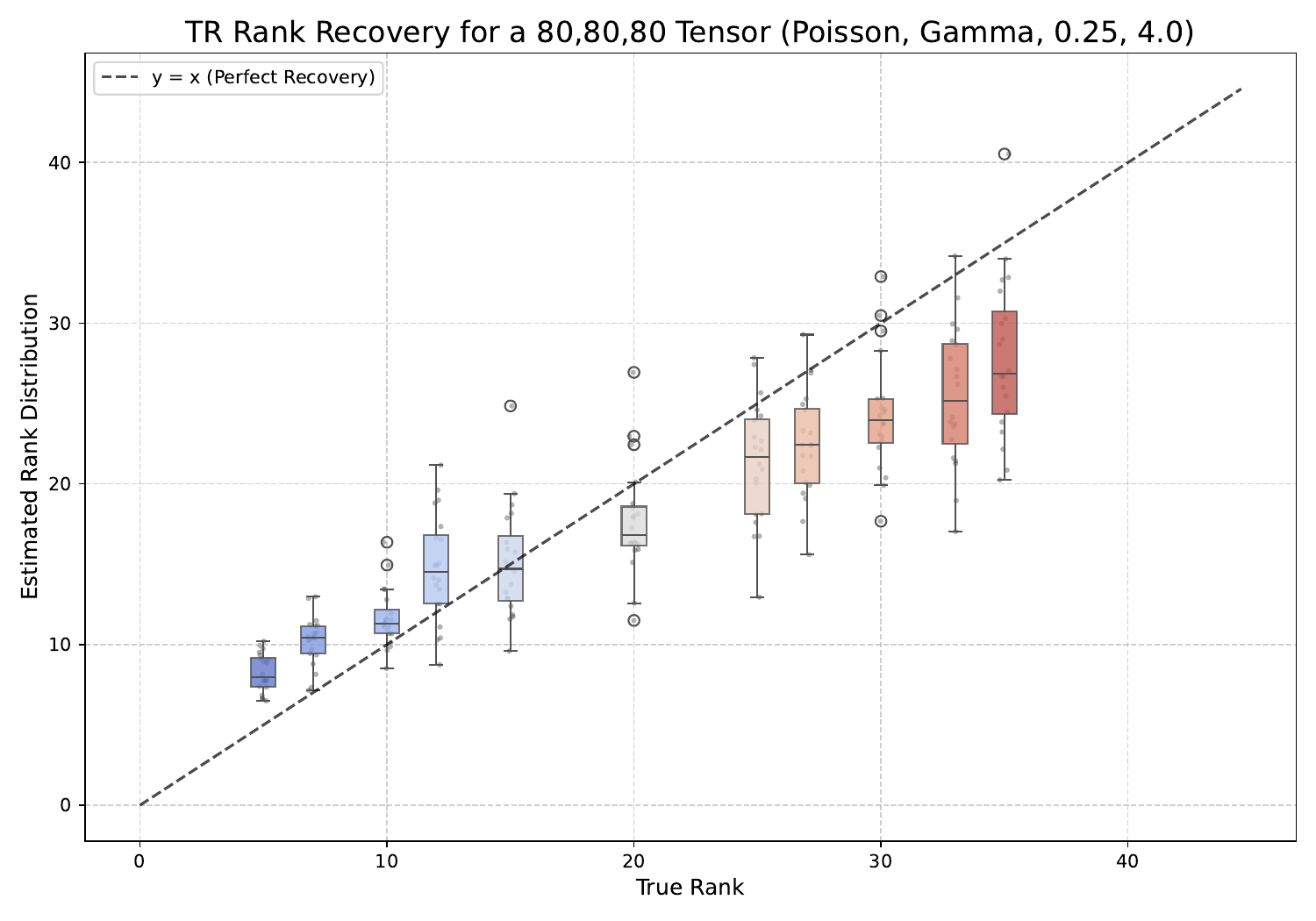}
    \caption{
\textbf{Tensor Ring (TR) rank estimation on $(80,80,80)$ tensors with Gamma--Poisson generative models.}
Each panel reports median rank estimates across 20 independent runs, grouped by true rank and visualized as boxplots.
The dashed diagonal ($y{=}x$) denotes perfect recovery.
Both experiments use a Poisson likelihood with Gamma priors $\mathrm{Gamma}(\alpha,\theta)$ given by
$(0.16,3.16)$ (left) and $(0.25,4.0)$ (right).
The shape and scale values correspond to different mean–variance tradeoffs, with the higher mean/lower variance prior (right)
yielding improved accuracy and reduced dispersion, consistent with the expected gain in effective signal-to-noise ratio.
}
    \label{fig:appendix:tr_results}
\end{figure}

\end{document}